\newcommand{\y}{\mathbf{y}}
\newcommand{\h}{\mathbf{H}}
\newcommand{\E}{\mathbb{E}}
\newcommand{\I}{\mathbb{I}}
\newcommand{\R}{\mathbb{R}}
\newcommand{\inner}[2]{\left\langle #1, #2\right\rangle}
\theoremstyle{plain}
\newtheorem{theorem}{Theorem}
\theoremstyle{definition}
\theoremstyle{remark}
\newtheorem{lemma}{Lemma}
\newtheorem{asump}{Assumption}
\newtheorem{property}{Property}
\icmltitlerunning{\texttt{GIST}: Distributed Training for Large-Scale Graph Convolutional Networks}
\begin{document}

\twocolumn[
\icmltitle{\texttt{GIST}: Distributed Training for Large-Scale Graph Convolutional Networks}



\icmlsetsymbol{equal}{*}

\begin{icmlauthorlist}
\icmlauthor{Cameron R. Wolfe}{equal,cs}
\icmlauthor{Jingkang Yang}{equal,ntu}
\icmlauthor{Fangshuo Liao}{equal,cs}
\icmlauthor{Arindam	Chowdhury}{ece}
\icmlauthor{Chen Dun}{cs}
\icmlauthor{Artun Bayer}{ece}
\icmlauthor{Santiago Segarra}{ece}
\icmlauthor{Anastasios Kyrillidis}{cs}
\end{icmlauthorlist}

\icmlaffiliation{ntu}{School of Computer Science and Engineering, Nanyang Technology University, Singapore.}
\icmlaffiliation{cs}{Department of Computer Science, Rice University, Houston, TX, USA.}
\icmlaffiliation{ece}{Department of Electrical and Computer Engineering, Rice University, Houston, TX, USA.}

\icmlcorrespondingauthor{Cameron Wolfe}{crw13@rice.edu}

\icmlkeywords{Graph Neural Networks, Graph Convolutional Networks, Distributed Training}

\vskip 0.3in
]



\printAffiliationsAndNotice{\icmlEqualContribution} 

\begin{abstract}
The graph convolutional network (GCN) is a go-to solution for machine learning on graphs, but its training is notoriously difficult to scale both in terms of graph size and the number of model parameters.
Although some work has explored training on large-scale graphs (e.g., GraphSAGE, ClusterGCN, etc.), we pioneer efficient training of large-scale GCN models (i.e., ultra-wide, overparameterized models) with the proposal of a novel, distributed training framework.
Our proposed training methodology, called \texttt{GIST}, disjointly partitions the parameters of a GCN model into several, smaller sub-GCNs that are trained independently and in parallel.
In addition to being compatible with all GCN architectures and existing sampling techniques for efficient GCN training, \texttt{GIST} $i)$ improves model performance, $ii)$ scales to training on arbitrarily large graphs, $iii)$ decreases wall-clock training time, and $iv)$ enables the training of markedly overparameterized GCN models.
Remarkably, with \texttt{GIST}, we train an astonishgly-wide $32,\!768$-dimensional GraphSAGE model, which exceeds the capacity of a single GPU by a factor of $8\times$, to SOTA performance on the Amazon2M dataset. 
\end{abstract}

\section{Introduction} \label{S:intro}
\begin{figure}
    \centering
    \includegraphics[width=0.8\linewidth]{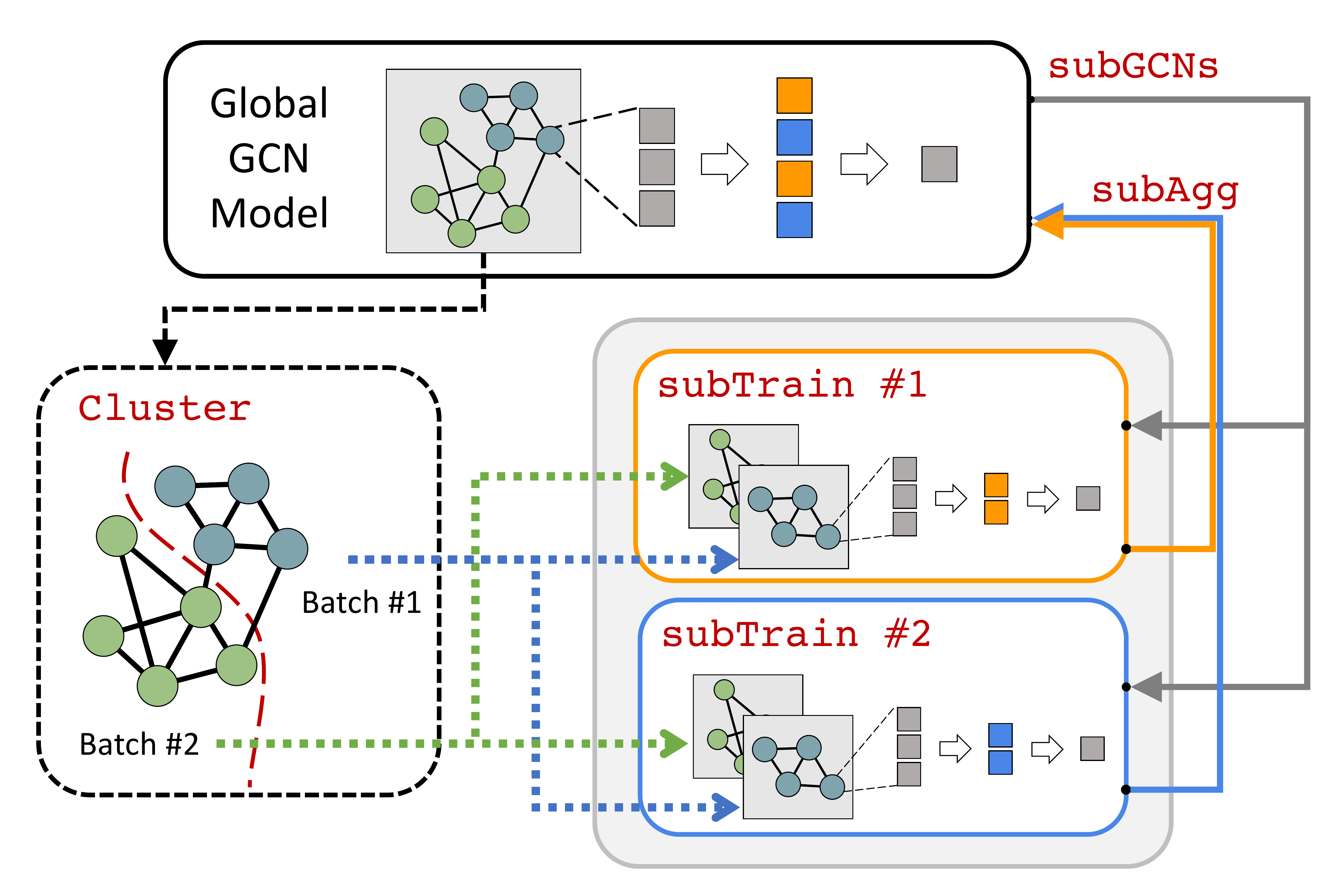}
    \vspace{-0.3cm}
    \caption{\texttt{GIST} pipeline: \texttt{subGCNs} divides the global GCN into sub-GCNs. Every sub-GCN is trained by \texttt{subTrain} using mini-batches (smaller sub-graphs) generated by \texttt{Cluster}. Sub-GCN parameters are intermittently aggregated through \texttt{subAgg}.}
    \vspace{-0.5cm}
    \label{fig:clustergcnist}
\end{figure}

Since not all data can be represented in Euclidean space \cite{geometric_dl}, many applications rely on graph-structured data.
For example, social networks can be modeled as graphs by regarding each user as a node and friendship relations as edges~\cite{lusher2013exponential, newman2002random}.
Alternatively, in chemistry, molecules can be modeled as graphs, with nodes representing atoms and edges encoding chemical bonds~\cite{balaban1985applications,benko2003graph}.

To better understand graph-structured data, several (deep) learning techniques have been extended to the graph domain~\cite{defferrard2016convolutional, gori2005new,masci2015geodesic}.
Currently, the most popular one is the graph convolutional network (GCN)~\cite{gcn}, a multi-layer architecture that implements a generalization of the convolution operation to graphs.
Although the GCN handles node- and graph-level classification, it is notoriously inefficient and unable to handle large-scale graphs~\cite{vrgcn, fastgcn, lsgcn, fastrep, l2gcn, graphsaint}. 

To deal with these issues, node partitioning methodologies have been developed. 
These schemes can be roughly categorized into neighborhood sampling~\cite{ fastgcn, graphsage, ladies} and graph partitioning~\cite{clustergcn, graphsaint} approaches.
The goal is to partition a large graph into multiple smaller graphs that can be used as mini-batches for training the GCN.
In this way, GCNs can handle larger graphs during training, expanding their potential into the realm of big data.

Although some papers perform large-scale experiments~\cite{clustergcn, graphsaint}, the models (and data) used in GCN research remain small in the context of deep learning~\cite{gcn, gat}, where the current trend is towards incredibly large models and datasets \cite{gpt3, xlm-r}.
Despite the widespread moral questioning of this trend \cite{transcarbonemit, sotaaimodels, costofnlp}, the deep learning community continues to push the limits of scale, as overparameterized models are known to discover generalizable solutions \cite{doubledescent}.
Although deep GCN models suffer from oversmoothing \cite{gcn, oversmooth}, overparameterized GCN models can still be explored through larger hidden layers.
\textit{As such, this work aims to provide a training framework that enables GCN experiments with wider models and larger datasets.}

\textbf{This paper.}
We propose a novel, distributed training methodology that can be used for any GCN architecture and is compatible with existing node sampling techniques.
This methodology randomly partitions the hidden feature space in each layer, decomposing the global GCN model into multiple, narrow sub-GCNs of equal depth.
Sub-GCNs are trained independently for several iterations in parallel prior to having their updates synchronized; see Figure \ref{fig:clustergcnist}. 
This process of randomly partitioning, independently training, and synchronizing sub-GCNs is repeated until convergence.
We call this method graph independent subnetwork training (\texttt{GIST}).
\texttt{GIST} can easily scale to arbitrarily large graphs and significantly reduces the wall-clock time of training large-scale GCNs, allowing larger models and datasets to be explored.
We focus specifically on enabling the training of ``ultra-wide'' GCNs (i.e., GCN models with very large hidden layers), as deeper GCNs are prone to oversmoothing \cite{oversmooth}.
The contributions of this work are summarized below: \vspace{-0.3cm}
\begin{itemize}[leftmargin=0.5cm]
    \item We develop a novel, distributed training methodology for arbitrary GCN architectures, based on decomposing the model into independently-trained sub-GCNs. This methodology is compatible with existing techniques for neighborhood sampling and graph partitioning. \vspace{-0.2cm}
    \item We show that \texttt{GIST} can be used to train several GCN architectures to state-of-the-art performance with reduced training time in comparison to standard methodologies. \vspace{-0.2cm}
    \item We propose a novel Graph Independent Subnetwork Training Kernel (\texttt{GIST-K}) that allows a convergence rate to be derived for two-layer GCNs trained with \texttt{GIST} in the infinite width regime. Based on \texttt{GIST-K}, we provide theory that \texttt{GIST} converges linearly, up to an error neighborhood, using distributed gradient descent with local iterations. We show that the radius of the error neighborhood is controlled by the overparameterization parameter, as well as the number of workers in the distributed setting. Such findings reflect practical observations that are made in the experimental section. \vspace{-0.2cm}
    \item We use \texttt{GIST} to enable the training of markedly overparameterized GCN models. In particular, \texttt{GIST} is used to train a \textbf{two-layer GraphSAGE model with a hidden dimension of 32,768} on the Amazon2M dataset. \emph{Such a model exceeds the capacity of a single GPU by $8\times$.} \vspace{-0.2cm}
\end{itemize}

\begin{figure*}
    \centering
    \includegraphics[width=0.9\linewidth]{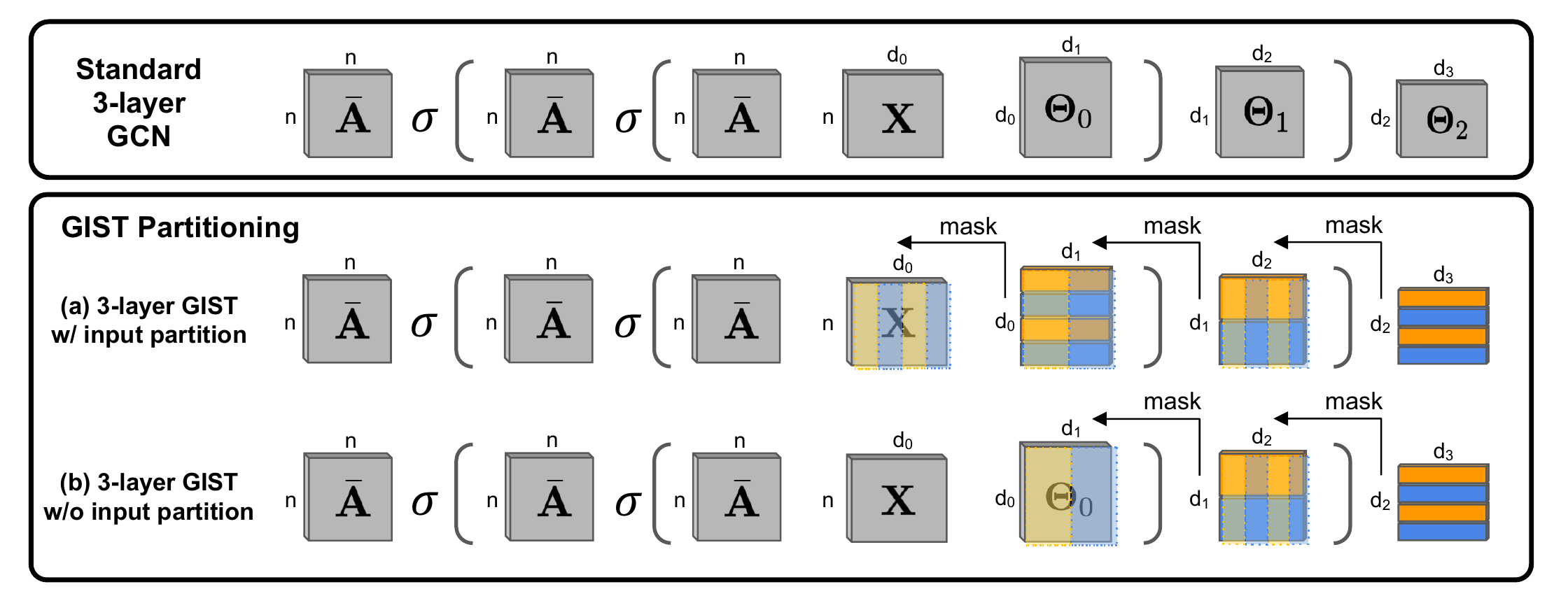}
    \vspace{-0.3cm}
    \caption{GCN partition into $m=2$ sub-GCNs. Orange and blue colors depict different feature partitions. Both hidden dimensions ($d_1$ and $d_2$) are partitioned. The output dimension ($d_3$) is not partitioned. Partitioning the input dimension ($d_0$) is optional. In this work, we do not partition $\mathbf{d}_0$ in \texttt{GIST}.}
    \label{fig:main}
\end{figure*}

\vspace{-0.1cm}
\section{What is the \texttt{GIST} of this work?}
\label{S:method}

\begin{algorithm}[tb]
    \centering
    \caption{\texttt{GIST} Algorithm}\label{alg:gist}
        \begin{algorithmic}
        \STATE \textbf{Parameters}: $T$ synchronization iterations, $m$ sub-GCNs \\ $\zeta$ local iterations, $c$ clusters, $\mathcal{G}$ training graph. \vspace{-2mm}
        \\\hrulefill 
        \STATE $\Psi_{\mathcal{G}}(\, \cdot \, ; \boldsymbol{\Theta})$ $\leftarrow$ randomly initialize GCN
        \STATE $\{\mathcal{G}_{(j)}\}_{j=1}^c \leftarrow \text{\texttt{Cluster}}(\mathcal{G}, c)$ 
        \FOR{$t = 0, \dots, T-1$}
        \STATE$\big\{\Psi_{\mathcal{G}}( \, \cdot \, ; \boldsymbol{\Theta}^{(i)})\big\}_{i = 1}^m \leftarrow \text{\texttt{subGCNs}}(\Psi_{\mathcal{G}}(\, \cdot \, ; \boldsymbol{\Theta}) , m)$
        \STATE Distribute each $\Psi_{\mathcal{G}}( \, \cdot \, ; \boldsymbol{\Theta}^{(i)})$ to a different worker
        \FOR{$i = 1, \dots, m$}
            \FOR{$z = 1, \dots, \zeta$}
            \STATE $\Psi_{\mathcal{G}}( \, \cdot \, ; \boldsymbol{\Theta}^{(i)}) \leftarrow \text{\texttt{subTrain}}(\boldsymbol{\Theta}^{(i)}, \{\mathcal{G}_{(j)}\}_{j=1}^c)$
            \ENDFOR
        \ENDFOR
        \STATE $\Psi_{\mathcal{G}}(\, \cdot \, ; \boldsymbol{\Theta}) \leftarrow \text{\texttt{subAgg}}(\{\boldsymbol{\Theta}^{(i)}\}_{i=1}^m) $
    \ENDFOR \\
    \end{algorithmic}
\end{algorithm}

\textbf{GCN Architecture}. The GCN~\cite{gcn} is arguably the most widely-used neural network  architecture on graphs.
Consider a graph $\mathcal{G}$ comprised of $n$ nodes with $d$-dimensional features $\mathbf{X} \in \mathbb{R}^{n \times d}$.
The output $\mathbf{Y} \in \mathbb{R}^{n \times d'}$ of a GCN can be expressed as $\mathbf{Y} = \Psi_{\mathcal{G}}(\mathbf{X}; \boldsymbol{\Theta})$, where $\Psi_{\mathcal{G}}$ is an $L$-layered architecture with trainable parameters $\boldsymbol{\Theta}$.
If we define $\mathbf{H}_0 = \mathbf{X}$, we then have that  $\mathbf{Y} = \Psi_{\mathcal{G}}(\mathbf{X}; \boldsymbol{\Theta}) = \mathbf{H}_L$, where an intermediate $\ell$-th layer of the GCN is given by
\begin{align}\label{gcn_forward}
    \mathbf{H}_{\ell+1} = \sigma(\bar{\mathbf{A}} \, \mathbf{H}_\ell \, \boldsymbol{\Theta}_\ell).
\end{align}

In~\eqref{gcn_forward}, $\sigma$ is an elementwise activation function (e.g., ReLU), $\bar{\mathbf{A}}$ is the degree-normalized adjacency matrix of $\mathcal{G}$ with added self-loops, and the trainable parameters $\boldsymbol{\Theta} = \{\boldsymbol{\Theta}_\ell\}_{\ell=0}^{L-1}$ have dimensions $\boldsymbol{\Theta}_\ell \in \mathbb{R}^{d_{\ell} \times d_{\ell+1}}$ with $d_0  = d$ and $d_L = d'$.
In Figure~\ref{fig:main}~(top), we illustrate nested GCN layers for $L=3$, but our methodology extends to arbitrary $L$.
The activation function of the last layer is typically the identity or softmax transformation -- we omit this in Figure~\ref{fig:main} for simplicity.

\textbf{\texttt{GIST} overview.} 
We overview \texttt{GIST} in Algorithm~\ref{alg:gist} and present a schematic depiction in Figure~\ref{fig:clustergcnist}.
We partition our (randomly initialized) global GCN into $m$ smaller, disjoint sub-GCNs with the \texttt{subGCNs} function ($m=2$ in Figures~\ref{fig:main} and \ref{fig:clustergcnist}) by sampling the feature space at each layer of the GCN; see Section~\ref{S:submodel}.
Each sub-GCN is assigned to a different worker (i.e., a different GPU) for $\zeta$ rounds of distributed, independent training through \texttt{subTrain}.
Then, newly-learned sub-GCN parameters are aggregated (\texttt{subAgg}) into the global GCN model.
This process repeats for $T$ iterations.
Our graph domain is partitioned into $c$ sub-graphs through the \texttt{Cluster} function ($c=2$ in Figure~\ref{fig:clustergcnist}).
This operation is only relevant for large graphs ($n>50,\!000$), and we omit it ($c=1$) for smaller graphs that don't require partitioning.\footnote{Though any clustering method can be used, we advocate the use of METIS~\cite{karypis1998fast, karypis1998multilevelk} due to its proven efficiency in large-scale graphs.}

\subsection{\texttt{subGCNs}: Constructing Sub-GCNs} \label{S:submodel}

\texttt{GIST} partitions a global GCN model into several narrower sub-GCNs of equal depth. 
Formally, consider an arbitrary layer $\ell$ and a random, disjoint partition of the feature set $[d_\ell] = \{1, 2, \ldots, d_\ell\}$ into $m$ equally-sized blocks $\{\mathcal{D}^{(i)}_\ell\}_{i=1}^m$.\footnote{For example, if $d_\ell = 4$ and $m=2$, one valid partition would be given by $\mathcal{D}^{(1)}_\ell = \{1,4\}$ and $\mathcal{D}^{(2)}_\ell = \{2,3\}$.}
Accordingly, we denote by $\boldsymbol{\Theta}^{(i)}_{\ell} = [\boldsymbol{\Theta}_{\ell}]_{\mathcal{D}^{(i)}_\ell \times \mathcal{D}^{(i)}_{\ell+1}}$ the matrix obtained by selecting from $\boldsymbol{\Theta}_{\ell}$ the rows and columns given by the $i$th blocks in the partitions of $[d_\ell]$ and $[d_{\ell+1}]$, respectively.
With this notation in place, we can define $m$ different sub-GCNs $\mathbf{Y}^{(i)} = \Psi_{\mathcal{G}}(\mathbf{X}^{(i)}; \boldsymbol{\Theta}^{(i)}) = \mathbf{H}^{(i)}_{L}$ where $\mathbf{H}^{(i)}_{0} = \mathbf{X}_{[n] \times \mathcal{D}^{(i)}_0}$ and each layer is given by:
\begin{align}\label{sub_gcn_forward}
    \mathbf{H}^{(i)}_{\ell+1} = \sigma(\bar{\mathbf{A}} \, \mathbf{H}^{(i)}_{\ell} \, \boldsymbol{\Theta}^{(i)}_{\ell}).
\end{align}

Sub-GCN partitioning is illustrated in Figure~\ref{fig:main}-(a), where $m=2$. 
Partitioning the input features is optional (i.e., (a) vs. (b) in Figure~\ref{fig:main}).
\emph{We do not partition the input features within \texttt{GIST}} so that sub-GCNs have identical input information (i.e., $\mathbf{X}^{(i)} = \mathbf{X}$ for all $i$); see Section \ref{S:small_scale}.
Similarly, we do not partition the output feature space to ensure that the sub-GCN output dimension coincides with that of the global model, thus avoiding any need to modify the loss function.
This decomposition procedure (\texttt{subGCNs} in Algorithm~\ref{alg:gist}) extends to arbitrary $L$.

\subsection{\texttt{subTrain}: Independently Training Sub-GCNs}
\label{S:sub_training}

Assume $c=1$ so that the \texttt{Cluster} operation in Algorithm~\ref{alg:gist} is moot and $\{\mathcal{G}_{(j)}\}_{j=1}^c = \mathcal{G}$.
Because $\mathbf{Y}^{(i)}$ and $\mathbf{Y}$ share the same dimension, sub-GCNs can be trained to minimize the same global loss function.
One application of \texttt{subTrain} in Algorithm~\ref{alg:gist} corresponds to a single step of stochastic gradient descent (SGD).
Inspired by local SGD \cite{use_local_sgd}, multiple, independent applications of \texttt{subTrain} are performed in parallel (i.e., on separate GPUs) for each sub-GCN prior to aggregating weight updates. 
The number of independent training iterations between synchronization rounds, referred to as local iterations, is denoted by $\zeta$, and the total amount of training is split across sub-GCNs.\footnote{For example, if a global model is trained on a single GPU for 10 epochs, a comparable experiment for \texttt{GIST} with two sub-GCNs would train each sub-GCN for only 5 epochs.}
Ideally, the number sub-GCNs and local iterations should be increased as much as possible to minimize communication and training costs.
In practice, however, such benefits may come at the cost of statistical inefficiency; see Section~\ref{S:small_scale}.

If $c > 1$, \texttt{subTrain} first selects one of the $c$ subgraphs in $\{\mathcal{G}_{(j)}\}_{j=1}^c$ to use as a mini-batch for SGD.
Alternatively, the union of several sub-graphs in $\{\mathcal{G}_{(j)}\}_{j=1}^c$ can be used as a mini-batch for training.
Aside from using mini-batches for each SGD update instead of the full graph, \emph{the use of graph partitioning does not modify the training approach outlined above.}
Some form of node sampling must be adopted to make training tractable when the full graph is too large to fit into memory.
However, both graph partitioning and layer sampling are compatible with GIST (see Sections \ref{S:large_scale} and \ref{S:gist_layer_samp}).
We adopt graph sampling in the main experiments due to the ease of implementation.
The novelty of our work lies in the feature partitioning strategy of \texttt{GIST} for distributed training, which is an orthogonal technique to node sampling; see Section \ref{S:value}.

After each sub-GCN completes $\zeta$ training iterations, their updates are aggregated into the global model (i.e., \texttt{subAgg} function in Algorithm~\ref{alg:gist}).
Within \texttt{subAgg}, each worker replaces global parameter entries $\boldsymbol{\Theta}$ with its own parameters $\boldsymbol{\Theta}^{(i)}$, where no collisions occur due to the disjointness of sub-GCN partitions.
Interestingly, not every parameter in the global GCN model is updated by \texttt{subAgg}.
For example, focusing on $\boldsymbol{\Theta}_{1}$ in Figure~\ref{fig:main}-(a), one worker will be assigned $\boldsymbol{\Theta}^{(1)}_{1}$ (i.e., overlapping orange blocks), while the other worker will be assigned $\boldsymbol{\Theta}^{(2)}_{1}$ (i.e., overlapping blue blocks).
The rest of $\boldsymbol{\Theta}_{1}$ is not considered within \texttt{subAgg}.
Nonetheless, since sub-GCN partitions are randomly drawn in each cycle $t$, one expects all of $\boldsymbol{\Theta}$ to be updated multiple times if $T$ is sufficiently large.

\subsection{What is the value of \texttt{GIST}?} \label{S:value}

\begin{figure*}
    \centering
    \includegraphics[width=0.85\linewidth]{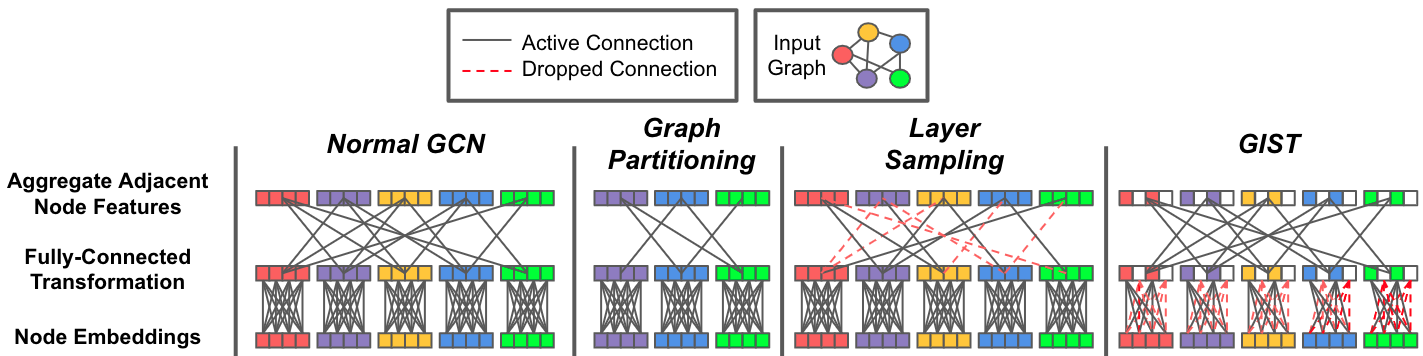}
    \caption{Illustrates the difference between \texttt{GIST} and node sampling techniques within the forward pass of a single GCN layer (excluding non-linear activation). While graph partitioning and layer sampling remove nodes from the forward pass (i.e., either completely or on a per-layer basis), \texttt{GIST} partitions node feature representations (and, in turn, model parameters) instead of the nodes themselves.}
    \label{fig:gist_v_other}
\end{figure*}

\noindent
\textbf{Architecture-Agnostic Distributed Training.}
\texttt{GIST} is a generic, distributed training methodology that can be used for any GCN architecture.
We implement \texttt{GIST} for vanilla GCN, GraphSAGE, and GAT architectures, but \texttt{GIST} is not limited to these models;
see Section \ref{S:experiment}.

\noindent
\textbf{Compatibility with Sampling Methods.}
\texttt{GIST} is \underline{\textbf{NOT}} a replacement for graph or layer sampling.
Rather, it is an efficient, distributed training technique that can be used in tandem with node partitioning.
As depicted in Figure \ref{fig:gist_v_other}, \texttt{GIST} partitions node feature representations and model parameters between sub-GCNs, while graph partitioning and layer sampling sub-sample nodes within the graph.

Interestingly, we find that \texttt{GIST}'s feature and parameter partitioning strategy is compatible with node partitioning---the two approaches can be combined to yield further efficiency benefits.
For example, \texttt{GIST} is combined with graph partitioning strategies in Section \ref{S:large_scale} and with layer sampling methodologies in Section \ref{S:gist_layer_samp}.

\noindent
\textbf{Enabling Ultra-Wide GCN Training.}
\texttt{GIST} indirectly updates the global GCN through the training of smaller sub-GCNs, enabling models with hidden dimensions that exceed the capacity of a single GPU by a factor of $8\times$ to be trained. 
In this way, \texttt{GIST} allows markedly overparametrized (``ultra-wide") GCN models to be trained on existing hardware.
In Section~\ref{S:large_scale}, \emph{we leverage this capability to train a two-layer GCN model with a hidden dimension of 32,768 on Amazon2M.}

We argue that overparameterization through width is more valuable than overparameterization through depth because deeper GCNs could suffer from oversmoothing~\cite{oversmooth}.
As such, we do not explore depth-wise partitions of different GCN layers to each worker, but rather focus solely upon partitioning the hidden neurons within each layer.
Such a partitioning strategy is suited to training wider networks.

\textbf{Improved Model Complexity.}
Consider a single GCN layer, trained over $M$ machines with input and output dimension of $d_{i-1}$ and $d_{i}$, respectively.
For one synchronization round, the communication complexity of \texttt{GIST} and standard distributed training is $\mathcal{O}(\frac{1}{M}d_i d_{i-1})$ and $\mathcal{O}(M d_{i} d_{i-1})$, respectively.
\texttt{GIST} reduces communication by only communicating sub-GCN parameters. 
Existing node partitioning techniques cannot similarly reduce communication complexity because model parameters are never partitioned.
Furthermore, the computational complexity of the forward pass for a GCN model trained with \texttt{GIST} and using standard methodology is $\mathcal{O}(\frac{1}{M} N^2 d_i + \frac{1}{M^2} N d_i d_{i-1})$ and $\mathcal{O}(N^2 d_i + N d_i d_{i-1})$, respectively, where $N$ is the number of nodes in the partition being processed.\footnote{We omit the complexity of applying the element-wise activation function for simplicity.}
Node partitioning can reduce $N$ by a constant factor but is compatible with \texttt{GIST}.


\section{Related Work}

\textbf{GCN training.} 
In spite of their widespread success in several graph related tasks, GCNs often suffer from training inefficiencies~\cite{lsgcn,fastrep}.  
Consequently, the research community has focused on developing efficient and scalable algorithms for training GCNs~\cite{vrgcn,fastgcn,clustergcn,graphsage,graphsaint,ladies}.
The resulting approaches can be divided roughly into two areas: \emph{neighborhood sampling} and \emph{graph partitioning}. 
However, it is important to note that these two broad classes of solutions are not mutually exclusive, and reasonable combinations of the two approaches may be beneficial.

Neighborhood sampling methodologies aim to sub-select neighboring nodes at each layer of the GCN, thus limiting the number of node representations in the forward pass and mitigating the exponential expansion of the GCNs receptive field. 
VRGCN~\cite{vrgcn} implements a variance reduction technique to reduce the sample size in each layer, which achieves good performance with smaller graphs. 
However, it requires to store all the intermediate node embeddings during training, leading to a memory complexity close to full-batch training.
GraphSAGE~\cite{graphsage} learns a set of aggregator functions to gather information from a node's local neighborhood. It then concatenates the outputs of these aggregation functions with each node's own representation at each step of the forward pass.
FastGCN~\cite{fastgcn} adopts a Monte Carlo approach to evaluate the GCN's forward pass in practice, which computes each node's hidden representation using a fixed-size, randomly-sampled set of nodes.
LADIES~\cite{ladies} introduces a layer-conditional approach for node sampling, which encourages node connectivity between layers in contrast to FastGCN~\cite{fastgcn}.

Graph partitioning schemes aim to select densely-connected sub-graphs within the training graph, which can be used to form mini-batches during GCN training.
Such sub-graph sampling reduces the memory footprint of GCN training, thus allowing larger models to be trained over graphs with many nodes.
ClusterGCN~\cite{clustergcn} produces a very large number of clusters from the global graph, then randomly samples a subset of these clusters and computes their union to form each sub-graph or mini-batch.
Similarly, GraphSAINT~\cite{graphsaint} randomly samples a sub-graph during each GCN forward pass.
However, GraphSAINT also considers the bias created by unequal node sampling probabilities during sub-graph construction, and proposes normalization techniques to eliminate this bias.

As explained in Section~\ref{S:method}, \texttt{GIST} also relies on graph partitioning techniques (\texttt{Cluster}) to handle large graphs.
However, the feature sampling scheme at each layer (\texttt{subGCNs}) that leads to parallel and narrower sub-GCNs is a hitherto unexplored framework for efficient GCN training.

\textbf{Distributed training.} Distributed training is a heavily studied topic~\cite{distrib_comm, distrib_dl}.
Our work focuses on synchronous and distributed training techniques~\cite{decentralizedparallelsgd, layeredsgd, easgd}.
Some examples of synchronous, distributed training approaches include data parallel training, parallel SGD~\cite{parallelsgdanalysis, parallelsgd}, and local SGD~\cite{use_local_sgd, localsgdconverge}.
Our methodology holds similarities to model parallel training techniques, which have been heavily explored~\cite{parallelism_survey, integrated_model_parallel, layer_parallel_resnet, nonlinear_multigrid_layer_parallel,kfac, compiler_model_parallel,  lamp}.
More closely, our approach is inspired by independent subnetwork training~\cite{ist}, explored for multi-layer perceptrons. 

\section{Theoretical Results} \label{S:theory}
We draw upon analysis related to neural tangent kernels (NTK) \cite{jacot2018neural} to derive a convergence rate for two-layer GCNs using gradient descent---as formulated in \eqref{gcn_forward} and further outlined in Appendix \ref{A:theory_prelim}---trained with \texttt{GIST}.
Given the scaled Gram matrix of an infinite-dimensional NTK $\h^\infty$, we define the Graph Independent Subnetwork Training Kernel (\texttt{GIST-K}) as follows:
\begin{align*}
    \mathbf{G}^{\infty} = \bar{\mathbf{A}}\h^\infty\bar{\mathbf{A}}.
\end{align*}
Given the \texttt{GIST-K}, we adopt the following set of assumptions related to the underlying graph; see Appendix \ref{A:trans_input} for more details.

\begin{asump} \label{main_assm}
Assume $\lambda_{\min}(\mathbf{\bar{A}}) \neq 0$ and there exists $\epsilon\in(0,1)$ and $p\in\mathbb{Z}_+$ such that $(1-\epsilon)^2p\leq \mathbf{D}_{ii}\leq (1+\epsilon)^2p$ for all $i \in [n] = \{1, 2, \dots, n\}$, where $\mathbf{D}$ is the degree matrix.
Additionally, assume that $i)$ input node representations are bounded in norm and not parallel to any other node representation, $ii)$ output node representations are upper bounded, $iii)$ sub-GCN feature partitions are generated at each iteration from a categorical distribution with uniform mean $\frac{1}{m}$.
\end{asump}

Given this set of assumptions, we derive the following result
\begin{theorem} \label{main_gist_conv}
Given assumption \ref{main_assm}, if the number of hidden neurons within the two-layer GCN satisfies $d_1 = \Omega\left(\frac{n^3\zeta^2T^2}{\delta^2\gamma(1-\gamma)^2\lambda_0^4}\left(n + \frac{d}{m^2}\|\bar{\mathbf{A}}^2\|_{1,1}\right)\right)$, then \texttt{GIST} with step-size $\eta = O\left(\frac{\lambda_0}{n^2\|\mathbf{A}^2\|_{1,1}}\right)$ converges with probability $1 - \delta$ according to
\begin{small}
\begin{align*}
    &\E_{[\mathcal{M}_{t-1}],\boldsymbol{\Theta}_0,\mathbf{a}} \left[\left\|\y - \hat{\y}(t)\right\|_2^2\right] \\
    &\leq
    \left(\gamma + (1-\gamma)\left(1 - \tfrac{\eta\lambda_0}{2}\right)^\zeta\right)^t\E_{\boldsymbol{\Theta}_0,\mathbf{a}}\left[\|\y - \hat{\y}(0)\|_2^2\right] \\
    &\quad \quad \quad + O\left(\frac{(m-1)^2\zeta\|\bar{\mathbf{A}}^2\|_{1,1} n d}{\gamma m^2d_1}\right).
\end{align*}
\end{small}
\end{theorem}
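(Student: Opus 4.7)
The plan is to adapt the neural-tangent-kernel analysis of overparameterized two-layer networks to the feature-partitioning dynamics of \texttt{GIST}. I will track the evolution of the residual $\mathbf{r}(t) \defeq \y - \hat{\y}(t)$ across the $T$ synchronization rounds, each of which comprises $\zeta$ local full-batch gradient-descent steps taken on a randomly sampled feature partition, and rely on two standard ingredients: (i) under the stated overparameterization, with probability at least $1-\delta$, the empirical Gram matrix induced by the network at any iterate stays within $\lambda_0/4$ of the infinite-width kernel $\mathbf{G}^\infty$, whose smallest eigenvalue is at least $\lambda_0>0$ by Assumption~\ref{main_assm} together with $\lambda_{\min}(\bar{\mathbf{A}}) \neq 0$; (ii) throughout training the weights remain inside a small ball around their Gaussian initialization, verified by a bootstrap that feeds the final residual bound back into the weight-movement estimate.

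\textbf{Key steps.} First, I would establish Gram-matrix concentration for the sub-GCN: a Hoeffding argument over the $d_1$ hidden neurons, combined with Lipschitzness of the ReLU feature map under the bounded-input condition of Assumption~\ref{main_assm}, yields the $\lambda_0/4$ uniform bound once $d_1$ meets the stated polynomial lower bound. Second, for a single local iteration with a frozen partition $\mathcal{M}$ I would derive a residual update of the form
\begin{align*}
    \y - \hat{\y}(t{+}1) = (\mathbf{I} - \eta\, \mathbf{G}_{\mathcal{M}}(t))\bigl(\y - \hat{\y}(t)\bigr) + \xi(t),
\end{align*}
where $\mathbf{G}_{\mathcal{M}}(t)$ is the masked empirical kernel of the active sub-GCN and $\xi(t)$ collects higher-order NTK drift terms controllable via the weight-ball radius. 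Third, I would take expectation over the uniform partition from Assumption~\ref{main_assm}: the expected masked kernel reproduces a scaled copy of $\mathbf{G}(t)$ and hence inherits an $\Omega(\lambda_0)$ eigenvalue gap on a ``good event'' of probability $1-\gamma$, while the partition-induced variance contributes a term of order $\|\bar{\mathbf{A}}^2\|_{1,1}\,nd/(m^2 d_1)$ per local step. Fourth, I would chain the $\zeta$ local iterations: on the good event the residual contracts by $(1-\eta\lambda_0/2)^\zeta$, and on the bad event (probability $\gamma$) it is merely kept bounded, producing the advertised mixture factor $\gamma + (1-\gamma)(1-\eta\lambda_0/2)^\zeta$ per synchronization round. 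Iterating the round-level recursion for $t$ rounds and summing the per-step variance as a geometric series yields the linear decay plus the claimed additive neighborhood. A concluding bootstrap on weight movement closes the circle and fixes the required lower bound on $d_1$.

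\textbf{Main obstacle.} The hardest step is the chained analysis of $\zeta$ local iterations: the same random mask is reused for all $\zeta$ inner steps between synchronizations, so partition noise is correlated across local steps and the clean $(1-\eta\lambda_0/2)^{\zeta}$ contraction of vanilla gradient descent must be replaced by the mixture form $\gamma + (1-\gamma)(1-\eta\lambda_0/2)^\zeta$. Carefully bookkeeping which coordinates of $\mathbf{r}$ are touched by a given sub-GCN, tracking the resulting $(m-1)^2/m^2$ variance scaling, and simultaneously maintaining the bootstrap that keeps the weights inside the NTK regime across all $T\zeta$ iterations is where the bulk of the technical work will lie.
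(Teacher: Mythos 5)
Your high-level scaffolding (kernel concentration, weight-ball bootstrap, per-round recursion, geometric summation of the variance) matches the paper's, but the mechanism you propose for the mixture factor $\gamma + (1-\gamma)(1-\eta\lambda_0/2)^\zeta$ is not the one that works, and I believe it cannot be made to work as stated. You treat $\gamma$ as the probability of a ``bad event'' on which the masked kernel loses its eigenvalue gap and the residual is ``merely kept bounded.'' In the paper, $\gamma$ is not a probability at all: the kernel positivity holds with probability $1-\delta$ uniformly over all rounds, and $\gamma\in(0,1)$ is a free Young's-inequality parameter introduced when bounding a cross term. Concretely, the paper first uses the exact algebraic identity
\begin{align*}
\|\y - \hat{\y}(t+1)\|_2^2 = \frac{1}{m}\sum_{j}\|\y - \hat{\y}^{(j)}(t,\zeta)\|_2^2 - \frac{1}{m^2}\sum_{j}\sum_{j'<j}\|\hat{\y}^{(j)}(t,\zeta) - \hat{\y}^{(j')}(t,\zeta)\|_2^2,
\end{align*}
together with its counterpart at $k=0$, to reduce the round-level analysis to controlling $\iota_t$, the change in pairwise sub-GCN output distances over the $\zeta$ local steps. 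Bounding $\iota_t$ produces a term proportional to $\sum_{j,k}\|\y-\hat{\y}^{(j)}(t,k)\|_2$, which is split via $ab\le \tfrac{\gamma\lambda_0}{2}a^2 + \tfrac{1}{2\gamma\lambda_0}b^2$ into a piece that eats a $\gamma$-fraction of the local contraction (yielding the $(1-\gamma)$ prefactor and hence the mixture form) and the additive $O\bigl((m-1)^2\zeta\|\bar{\mathbf{A}}^2\|_{1,1}nd/(\gamma m^2 d_1)\bigr)$ neighborhood. Your good-event/bad-event reading is inconsistent with the theorem's quantitative dependence on $\gamma$: if $\gamma$ were a failure probability, shrinking it should only help, yet the error neighborhood scales as $1/\gamma$ and the width requirement as $1/(\gamma(1-\gamma)^2)$ --- the signature of a trade-off parameter, not of a rare event.

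The second, related gap is that your single-recursion view $\y-\hat{\y}(t+1) = (\mathbf{I}-\eta\,\mathbf{G}_{\mathcal{M}}(t))(\y-\hat{\y}(t)) + \xi(t)$ does not model the aggregation step. Because \texttt{subAgg} sums disjointly-masked parameter updates, the aggregated output satisfies $f(\boldsymbol{\Theta},\mathbf{X}) = \sum_j f^{(j)}_{\mathcal{M}}(\boldsymbol{\Theta},\mathbf{X})$ and $\E_{\mathcal{M}_t}[\hat{\y}^{(j)}(t,0)] = \hat{\y}(t)$; the exact decomposition above (and the bias--variance identity relating $\sum_j\|\hat{\y}(t)-\hat{\y}^{(j)}(t,0)\|_2^2$ to the pairwise distances) is what lets the $m$ independently-trained residuals be recombined without losing the linear rate. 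Without these two identities and the Young's-inequality treatment of $\iota_t$, you would not recover the $(m-1)^2/m^2$ scaling or the stated radius, so the proposal as written has a genuine missing idea rather than merely a different route.
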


\begin{table*}[!t]
\centering
\begin{footnotesize}
\setlength{\tabcolsep}{5pt}
\begin{tabular}{c|ccc|cccc}
\toprule
    $m$ & $d_0$ & $d_1$ & $d_2$ & Cora & Citeseer & Pubmed & OGBN-Arxiv\\ \midrule
    Baseline & & & & $81.52 \pm 0.005$ & $75.02 \pm 0.018$ & $75.90 \pm 0.003$ & $70.85 \pm 0.089$\\ \midrule
    2 & $\checkmark$ & $\checkmark$ & $\checkmark$ & $80.00 \pm 0.010$ & $\mathbf{75.95} \pm 0.007$ & $76.68 \pm 0.011$ & $65.65 \pm 0.700$\\
     & $\checkmark$ & $\checkmark$ &  & $78.30 \pm 0.011$ & $69.34 \pm 0.018$ & $75.78 \pm 0.015$ & $65.33 \pm 0.347$\\
     & & $\checkmark$ & $\checkmark$ & $\textbf{80.82} \pm 0.010$ & $75.82 \pm 0.008$ & $\textbf{78.02} \pm 0.007$ & $\textbf{70.10} \pm 0.224$ \\
     \midrule
    4 & $\checkmark$ & $\checkmark$ & $\checkmark$ & $76.78 \pm 0.017$ & $70.66 \pm 0.011$ & $65.67 \pm 0.044$ & $54.21 \pm 1.360$ \\
     & $\checkmark$ & $\checkmark$ &  & $66.56 \pm 0.061$  & $68.38 \pm 0.018$ & $68.44 \pm 0.014$ & $52.64 \pm 1.988$ \\
     & & $\checkmark$ & $\checkmark$ & $\mathbf{81.18} \pm 0.007$ & $\mathbf{76.21} \pm 0.017$ & $\mathbf{76.99} \pm 0.006$ & $\textbf{68.69} \pm 0.579$\\
     \midrule
    8 & $\checkmark$ & $\checkmark$ & $\checkmark$ & $48.32 \pm 0.087$ & $45.42 \pm 0.092$ & $54.29 \pm 0.029$ & $40.26 \pm 1.960$\\
     & $\checkmark$ & $\checkmark$ &  & $53.60 \pm 0.020$ & $54.68 \pm 0.030$ & $51.44 \pm 0.002$ & $26.84 \pm 7.226$\\
     & & $\checkmark$ &$\checkmark$ & $\mathbf{79.58} \pm 0.006$ & $\mathbf{75.39} \pm 0.016$ & $\mathbf{76.99} \pm 0.006$ & $\textbf{65.81} \pm 0.378$\\
 \bottomrule
\end{tabular}
\caption{Test accuracy of GCN models trained on small-scale datasets with \texttt{GIST}. We selectively partition each feature dimension within the GCN model, indicated by a check mark.
\textit{Partitioning on all hidden layers except the input layer leads to optimal performance.}}
\label{layer_split_results}
\end{footnotesize}
\end{table*}

A full proof of this result is deferred to Appendix \ref{A:theory}, but a sketch of the techniques used is as follows: \vspace{-0.2cm}
\begin{enumerate}[leftmargin=*]
    \item We define the \texttt{GIST-K} and show that it remains positive definite throughout training given our assumptions and sufficient overparameterization.
    \item \vspace{-0.25cm} We show that local sub-GCN training converges linearly, given a positive definite \texttt{GIST-K}.
    \item \vspace{-0.25cm}
    We analyze the change in training error when sub-GCNs are sampled (\texttt{subGCNs}), locally trained (\texttt{subTrain}), and aggregated (\texttt{subAgg}).
    \item \vspace{-0.25cm} We establish a connection between local and aggregated weight perturbation, showing that network parameters are bounded by a small region centered around the initialization given sufficient overparameterization.
\end{enumerate}

\vspace{-0.35cm}
\textbf{Discussion.}
Stated intuitively, the result in Theorem \ref{main_gist_conv} shows that, given sufficient width, two-layer GCNs trained using $\texttt{GIST}$ converge to approximately zero training error.
The convergence rate is linear and on par with training the full, two-layer GCN model, up to an error neighborhood (i.e., without the feature partition utilized in \texttt{GIST}).
Such theory shows that the feature partitioning strategy of \texttt{GIST} does not cause the model to diverge in training.
Additionally, the theory suggests that wider GCN models and a larger number of sub-GCNs should be used to maximize the convergence rate of \texttt{GIST} and minimize the impact of the additive term within Theorem \ref{main_gist_conv}; though the affect of $m$ on the radius is less significant compared to $d_1$.
Such findings reflect practical observations that are made within Section \ref{S:experiment} and reveal that \texttt{GIST} is particularly-suited towards training extremely wide models that cannot be trained using a traditional, centralized approach on a single GPU. 

\section{Experiments} \label{S:experiment} 

We use \texttt{GIST} to train different GCN architectures on six public, multi-node classification datasets; see Appendix \ref{A:exp_det} for details.
In most cases, we compare the performance of models trained with \texttt{GIST} to that of models trained with standard methods (i.e., single GPU with node partitioning).
Comparisons to models trained with other distributed methodologies are also provided in Appendix \ref{A:other_dist}.
Experiments are divided into small and large scale regimes based upon graph size. 
The goal of \texttt{GIST} is to $i)$ train GCN models to state-of-the-art performance, $ii)$ minimize wall-clock training time, and $iii)$ enable training of very wide GCN models.

\subsection{Small-Scale Experiments}
\label{S:small_scale}

In this section, we perform experiments over Cora, Citeseer, Pubmed, and OGBN-Arxiv datasets~\cite{sen2008collective, ogbn}.
For these small-scale datasets, we train a three-layer, 256-dimensional GCN model \cite{gcn} with \texttt{GIST}; see Appendix \ref{A:small_scale} for further experimental settings.
All reported metrics are averaged across five separate trials.
Because these experiments run quickly, we use them to analyze the impact of different design and hyperparameter choices rather than attempting to improve runtime (i.e., speeding up such short experiments is futile).

\textbf{Which layers should be partitioned?}
We investigate whether models trained with \texttt{GIST} are sensitive to the partitioning of features within certain layers.
Although the output dimension $d_3$ is never partitioned, we selectively partition dimensions $d_0$, $d_1$, and $d_2$ to observe the impact on model performance; see Table~\ref{layer_split_results}.
Partitioning input features ($d_0$) significantly degrades test accuracy because sub-GCNs observe only a portion of each node's input features (i.e., this becomes more noticeable with larger $m$).
However, other feature dimensions cause no performance deterioration when partitioned between sub-GCNs, leading us to partition all feature dimensions other than $d_0$ and $d_L$ within the final \texttt{GIST} methodology; see Figure~\ref{fig:main}-(b). 

\begin{table*}[!t]
\centering
\begin{footnotesize}
\begin{tabular}{cccccccccccccc}
\toprule
\multirow{3}{*}{$L$} & \multirow{3}{*}{$m$}& \multicolumn{6}{c}{Reddit Dataset} &\multicolumn{6}{c}{Amazon2M Dataset}\\
& & \multicolumn{3}{c}{GraphSAGE} & \multicolumn{3}{c}{GAT} & \multicolumn{3}{c}{GraphSAGE ($d_i = 400$)} & \multicolumn{3}{c}{GraphSAGE ($d_i = 4096$)}\\
\cmidrule{3-14}
&& F1 & Time & Speedup & F1 & Time & Speedup & F1 & Time & Speedup & F1 & Time & Speedup\\
\midrule
2 & - & 96.09 & 105.78s & $1.00\times$ & 89.57 & 1.19hr & $1.00\times$ & 89.90 & 1.81hr & $1.00\times$ & 91.25 & 5.17hr & $1.00\times$ \\
& 2 & 96.40 & 70.29s & $1.50\times$& 90.28 & 0.58hr & $2.05\times$ & 88.36 & 1.25hr & ($1.45\times$) & 90.70 & 1.70hr & $3.05\times$ \\
& 4 & 96.16 & 68.88s & $1.54\times$ & 90.02 & 0.31hr & $3.86\times$ & 86.33 & 1.11hr & ($1.63\times$) & 89.49 & 1.13hr & ($4.57\times$)\\
& 8 & 95.46 & 76.68s & $1.38\times$ & 89.01 & 0.18hr & $6.70\times$ & 84.73 & 1.13hr & ($1.61\times$) & 88.86 & 1.11hr & ($4.65\times$)\\
\midrule
3 & - & 96.32 & 118.37s & $1.00\times$ & 89.25 & 2.01hr & $1.00\times$ & 90.36 & 2.32hr & $1.00\times$ & 91.51 & 9.52hr & $1.00\times$\\
& 2 & 96.36 & 80.46s & $1.47\times$ & 89.63 & 0.95hr & $2.11\times$ & 88.59 & 1.56hr & ($1.49\times$) & 91.12 & 2.12hr & $4.49\times$ \\
& 4 & 95.76 & 78.74s & $1.50\times$ & 88.82 & 0.48hr & $4.19\times$ & 86.46 & 1.37hr & ($1.70\times$) & 89.21 & 1.42hr & ($6.72\times$)\\
& 8 & 94.39 & 88.54s & ($1.34\times$) & 70.38 & 0.26hr & ($7.67\times$) & 84.76 & 1.37hr & ($1.69\times$) & 86.97 & 1.34hr & ($7.12\times$)\\
\midrule
4 & - & 96.32 & 120.74s & $1.00\times$ & 88.36 & 2.77hr & $1.00\times$ & 90.40 & 3.00hr &  $1.00\times$ & 91.61 & 14.20hr &  $1.00\times$ \\
& 2 & 96.01 & 91.75s & $1.32\times$ & 87.97 & 1.31hr & $2.11\times$ & 88.56 & 1.79hr & ($1.68\times$) & 91.02 & 2.77hr & $5.13\times$ \\
& 4 & 95.21  & 78.74s & ($1.53\times$) & 78.42 & 0.66hr & ($4.21\times$) & 87.53 & 1.58hr & ($1.90\times$) & 89.07 & 1.65hr & ($8.58\times$)\\
& 8 & 92.75 & 88.71s & ($1.36\times$) & 66.30 & 0.35hr & ($7.90\times$) & 85.32 & 1.56hr & ($1.93\times$) & 87.53 & 1.55hr & ($9.13\times$)\\
\bottomrule
\end{tabular}
\caption{Performance of models trained with \texttt{GIST} on Reddit and Amazon2M.
Parenthesis are placed around speedups achieved at a cost of $>$1 deterioration in F1 and $m=$``-'' refers to the baseline.
\textit{Models trained with \texttt{GIST} train more quickly and achieve comparable F1 score to those trained with standard methodology. The performance benefits of \texttt{GIST} become more pronounced for wider models.}}
\label{tab:reddit_am2m}
\end{footnotesize}
\end{table*}

\textbf{How many Sub-GCNs to use?}
Using more sub-GCNs during \texttt{GIST} training typically improves runtime because sub-GCNs $i)$ become smaller, $ii)$ are each trained for fewer epochs, and $iii)$ are trained in parallel.
We find that all models trained with \texttt{GIST} perform similarly for practical settings of $m$; see Table~\ref{layer_split_results}.
One may continue increasing the number sub-GCNs used within \texttt{GIST} until all GPUs are occupied or model performance begins to decrease.

\textbf{\texttt{GIST} Performance.}
Models trained with \texttt{GIST} often exceed the performance of models trained with standard, single-GPU methodology; see Table~\ref{layer_split_results}.
Intuitively, we hypothesize that the random feature partitioning within \texttt{GIST}, which loosely resembles dropout \cite{srivastava2014dropout}, provides regularization benefits during training, but we leave an in-depth analysis of this property as future work.

\subsection{Large-Scale Experiments}
\label{S:large_scale}
For large-scale experiments on Reddit and Amazon2M, the baseline model is trained on a single GPU and compared to models trained with \texttt{GIST} in terms of F1 score and training time.
All large-scale graphs are partitioned into $15,\!000$ sub-graphs during training.\footnote{Single-GPU training with graph partitioning via METIS is the same approach adopted by ClusterGCN \cite{clustergcn}, making our single-GPU baseline a ClusterGCN model. We adopt the same number of sub-graphs as proposed in this work.}
Graph partitioning is mandatory because the training graphs are too large to fit into memory.
One could instead use layer sampling to make training tractable (see Section \ref{S:gist_layer_samp}), but we adopt graph partitioning in most experiments because the implementation is simple and performs well.

\textbf{Reddit Dataset.}
We perform tests with $256$-dimensional GraphSAGE \cite{graphsage} and GAT \cite{gat} models with two to four layers on Reddit; see Appendix \ref{A:large_scale} for more details.
As shown in Table~\ref{tab:reddit_am2m}, utilizing \texttt{GIST} significantly accelerates GCN training (i.e., a $1.32\times$ to $7.90\times$ speedup).
\texttt{GIST} performs best in terms of F1 score with $m=2$ sub-GCNs (i.e., $m=4$ yields further speedups but F1 score decreases).
Interestingly, the speedup provided by \texttt{GIST} is more significant for models and datasets with larger compute requirements.
For example, experiments with the GAT architecture, which is more computationally expensive than GraphSAGE, achieve a near-linear speedup with respect to $m$.


\textbf{Amazon2M Dataset.}
Experiments are performed with two, three, and four-layer GraphSAGE models \cite{graphsage} with hidden dimensions of $400$ and $4096$ (we refer to these models as ``narrow'' and ``wide'', respectively).
We compare the performance (i.e., F1 score and wall-clock training time) of GCN models trained with standard, single-GPU methodology to that of models trained with \texttt{GIST}; see Table \ref{tab:reddit_am2m}.
Narrow models trained with \texttt{GIST} have a lower F1 score in comparison to the baseline, but training time is significantly reduced.
For wider models, \texttt{GIST} provides a more significant speedup (i.e., up to $7.12\times$) and tends to achieve comparable F1 score in comparison to the baseline, revealing that \textit{\texttt{GIST} works best with wider models}.

Within Table~\ref{tab:reddit_am2m}, models trained with \texttt{GIST} tend to achieve a wall-clock speedup at the cost of a lower F1 score (i.e., observe the speedups marked with parenthesis in Table~\ref{tab:reddit_am2m}).
When training time is analyzed with respect to a fixed F1 score, we observe that the baseline takes significantly longer than \texttt{GIST} to achieve a fixed F1 score.
For example, when $L=2$, a wide GCN trained with \texttt{GIST} ($m=8$) reaches an F1 score of 88.86 in $\sim$4,000 seconds, \emph{while models trained with standard methodology take $\sim$10,000 seconds to achieve a comparable F1 score.}
As such, \texttt{GIST} significantly accelerates training relative to model performance.

\begin{table*}[!t]
\begin{footnotesize}
\centering
\begin{tabular}{ccccccc}
\toprule
\multirow{2}{*}{$L$} & \multirow{2}{*}{$m$} & 
\multicolumn{5}{c}{F1 Score~(Time)} \\ 
\cmidrule(l){3-7} & 
 & \multicolumn{1}{c}{$d_i=400$} 
 & \multicolumn{1}{c}{$d_i=4096$} 
 & \multicolumn{1}{c}{$d_i=8192$} 
 & \multicolumn{1}{c}{$d_i=16384$} 
 & \multicolumn{1}{c}{$d_i=32768$}\\
\midrule
2 
& -
& 89.38~(1.81hr)
& 90.58~(5.17hr)
& OOM
& OOM
& OOM\\
& 2 
& 87.48~(1.25hr)
& 90.09~(1.70hr)  
& 90.87~(2.76hr)
& 90.94~(9.31hr)
& 90.91~(32.31hr)\\

& 4 
& 84.82~(1.11hr)
& 88.79~(1.13hr)  
& 89.76~(1.49hr)
& 90.10~(2.24hr)
& 90.17~(5.16hr)\\
& 8 
& 82.56~(1.13hr)
& 87.16~(1.11hr)  
& 88.31~(1.20hr)
& 88.89~(1.39hr)
& 89.46~(1.76hr)\\
\midrule
3 & -
& 89.73~(2.32hr)
& 90.99~(9.52hr) 
& OOM
& OOM
& OOM\\
& 2 
& 87.79~(1.56hr)
& 90.40~(2.12hr)  
& 90.91~(4.87hr)
& 91.05~(17.7hr)
& OOM\\
& 4 
& 85.30~(1.37hr)
& 88.51~(1.42hr) 
& 89.75~(2.07hr)
& 90.15~(3.44hr)
& OOM\\
& 8 
& 82.84~(1.37hr)
& 86.12~(1.34hr)  
& 88.38~(1.37hr)
& 88.67~(1.88hr)
& 88.66~(2.56hr)\\
\midrule
4 & - 
& 89.77~(3.00hr)
& 91.02~(14.20hr)  
& OOM
& OOM
& OOM\\
& 2 
& 87.75~(1.79hr)
& 90.36~(2.77hr)  
& 91.08~(6.92hr)
& 91.09~(26.44hr)
& OOM\\
& 4 
& 85.32~(1.58hr)
& 88.50~(1.65hr)  
& 89.76~(2.36hr)
& 90.05~(4.93hr)
& OOM\\
& 8 
& 83.45~(1.56hr)
& 86.60~(1.55hr) 
& 88.13~(1.61hr)
& 88.44~(2.30hr)
& OOM\\
\bottomrule
\end{tabular}
\caption{Performance of GraphSAGE models of different widths trained with \texttt{GIST} on Amazon2M.
$m=$``-'' refers to the baseline and ``OOM'' marks experiments that cause out-of-memory errors.
\textit{\texttt{GIST} enables training of higher-performing, ultra-wide models.}}
\label{ultrawide}
\end{footnotesize}
\end{table*}

\subsection{Training Ultra-Wide GCNs} 
\label{S:ultra}
\textbf{We use \texttt{GIST} to train GraphSAGE models with widths as high as 32K (i.e., $\boldsymbol{8\times}$ beyond the capacity of a single GPU)}; see Table \ref{ultrawide} for results and Appendix \ref{A:ultra} for more details.
Considering $L=2$, the best-performing, single-GPU GraphSAGE model ($d_i=4096$) achieves an F1 score of $90.58$ in $5.2$ hours.
With \texttt{GIST} ($m=2$), we achieve a higher F1 score of $90.87$ in $2.8$ hours (i.e., a $1.86\times$ speedup) using $d_i=8192$, which is beyond single GPU capacity.
Similar patterns are observed for deeper models.
Furthermore, we find that utilizing larger hidden dimensions yields further performance improvements, revealing the utility of wide, overparameterized GCN models.
\emph{\texttt{GIST}, due to its feature partitioning strategy, is unique in its ability to train models of such scale to state-of-the-art performance}.



\subsection{\texttt{GIST} with Layer Sampling} \label{S:gist_layer_samp}
As previously mentioned, some node partitioning approach must be adopted to avoid memory overflow when the underlying training graph is large.
Although graph partitioning is used within most experiments (see Section \ref{S:large_scale}), \texttt{GIST} is also compatible with other node partitioning strategies.
To demonstrate this, we perform training on Reddit using \texttt{GIST} combined with a recent layer sampling approach \cite{ladies} (i.e., instead of graph partitioning); see Appendix \ref{A:gist_layer_samp} for more details.

As shown in Table \ref{ladies_reddit}, combining \texttt{GIST} with layer sampling enables training on large-scale graphs, and the observed speedup actually exceeds that of \texttt{GIST} with graph partitioning.
For example, \texttt{GIST} with layer sampling yields a $1.83\times$ speedup when $L=2$ and $m=2$, in comparison to a $1.50\times$ speedup when graph partitioning is used within \texttt{GIST} (see Table \ref{tab:reddit_am2m}).
As the number of sub-GCNs is increased beyond $m=2$, \texttt{GIST} with layer sampling continues to achieve improvements in wall-clock training time (e.g., speedup increases from $1.83\times$ to $2.90\times$ from $m=2$ to $m=4$ for $L=2$) without significant deterioration to model performance.
Thus, although node partitioning is needed to enable training on large-scale graphs, the feature partitioning strategy of \texttt{GIST} is compatible with numerous sampling strategies (i.e., not just graph sampling).

\begin{table}[!t]
\centering
\begin{footnotesize}
\begin{tabular}{ccccc}
\toprule
\multirow{2}{*}{$L$} & \multirow{2}{*}{\# Sub-GCNs} & \multicolumn{3}{c}{\texttt{GIST} + LADIES} \\
\cmidrule{3-5}
&& F1 Score & Time & Speedup \\
\midrule
2 & Baseline & 89.73 & 3359.91s & $1.00\times$\\
& 2 & 89.29 & 1834.59s & $1.83\times$ \\
& 4 & 88.42 & 1158.51s & $2.90\times$ \\
\midrule
3 & Baseline & 89.57 & 4803.88s & $1.00\times$ \\
& 2 & 86.52 & 2635.18s & $1.82\times$ \\
& 4 & 86.72 & 1605.32s & $3.00\times$ \\
\bottomrule
\end{tabular}
\caption{Performance of GCN models trained with a combination of \texttt{GIST} and LADIES \cite{ladies} on Reddit. Here, the baseline represents models trained with LADIES in a standard, single-GPU manner. \emph{Combining \texttt{GIST} with layer sampling leads to further improvements in wall-clock training time without deteriorating the F1 score.}}
\label{ladies_reddit}
\end{footnotesize}
\end{table}

\section{Conclusion}
We present~\texttt{GIST}, a distributed training approach for GCNs that enables the exploration of larger models and datasets.
\texttt{GIST} is compatible with existing sampling approaches and leverages a feature-wise partition of model parameters to construct smaller sub-GCNs that are trained independently and in parallel.
We have shown that \texttt{GIST} achieves remarkable speed-ups over large graph datasets and even enables the training of GCN models of unprecedented size.
We hope \texttt{GIST} can empower the exploration of larger, more powerful GCN architectures within the graph community.

\newpage
\bibliography{example_paper}

\begin{thebibliography}{54}
\providecommand{\natexlab}[1]{#1}
\providecommand{\url}[1]{\texttt{#1}}
\expandafter\ifx\csname urlstyle\endcsname\relax
  \providecommand{\doi}[1]{doi: #1}\else
  \providecommand{\doi}{doi: \begingroup \urlstyle{rm}\Url}\fi

\bibitem[Agarwal \& Duchi(2011)Agarwal and Duchi]{parallelsgdanalysis}
Agarwal, A. and Duchi, J.~C.
\newblock Distributed delayed stochastic optimization.
\newblock In \emph{Proceedings of Advances in Neural Information Processing
  Systems (NeurIPS)}, 2011.

\bibitem[Balaban(1985)]{balaban1985applications}
Balaban, A.~T.
\newblock {Applications of Graph Theory in Chemistry}.
\newblock \emph{Journal of Chemical Information and Computer Sciences}, 1985.

\bibitem[{Ben-Nun} \& {Hoefler}(2019){Ben-Nun} and
  {Hoefler}]{parallelism_survey}
{Ben-Nun}, T. and {Hoefler}, T.
\newblock {Demystifying Parallel and Distributed Deep Learning: An In-Depth
  Concurrency Analysis}.
\newblock \emph{ACM Computing Surveys (CSUR)}, 2019.

\bibitem[Benk{\"o} et~al.(2003)Benk{\"o}, Flamm, and Stadler]{benko2003graph}
Benk{\"o}, G., Flamm, C., and Stadler, P.~F.
\newblock A graph-based toy model of chemistry.
\newblock \emph{Journal of Chemical Information and Computer Sciences}, 2003.

\bibitem[{Bronstein} et~al.(2017){Bronstein}, {Bruna}, {LeCun}, {Szlam}, and
  {Vandergheynst}]{geometric_dl}
{Bronstein}, M.~M., {Bruna}, J., {LeCun}, Y., {Szlam}, A., and {Vandergheynst},
  P.
\newblock {Geometric Deep Learning: Going beyond Euclidean data}.
\newblock \emph{IEEE Signal Processing Magazine}, 2017.

\bibitem[Brown et~al.(2020)]{gpt3}
Brown, T.~B. et~al.
\newblock Language models are few-shot learners.
\newblock \emph{arXiv preprint arXiv:2005.14165}, 2020.

\bibitem[{Chen} et~al.(2018{\natexlab{a}}){Chen}, {Ma}, and {Xiao}]{fastgcn}
{Chen}, J., {Ma}, T., and {Xiao}, C.
\newblock {FastGCN: Fast Learning with Graph Convolutional Networks via
  Importance Sampling}.
\newblock In \emph{Proceedings of the International Conference on Learning
  Representations (ICLR)}, 2018{\natexlab{a}}.

\bibitem[{Chen} et~al.(2018{\natexlab{b}}){Chen}, {Zhu}, and {Song}]{vrgcn}
{Chen}, J., {Zhu}, J., and {Song}, L.
\newblock {Stochastic Training of Graph Convolutional Networks with Variance
  Reduction}.
\newblock In \emph{Proceedings of the International Conference on Machine
  Learning (ICML)}, 2018{\natexlab{b}}.

\bibitem[Chiang et~al.(2019)Chiang, Liu, Si, Li, Bengio, and Hsieh]{clustergcn}
Chiang, W.-L., Liu, X., Si, S., Li, Y., Bengio, S., and Hsieh, C.-J.
\newblock Cluster-gcn: An efficient algorithm for training deep and large graph
  convolutional networks.
\newblock In \emph{Proceedings of International Conference on Knowledge
  Discovery \& Data Mining (KDD)}, 2019.

\bibitem[{Conneau} et~al.(2019){Conneau}, {Khandelwal}, {Goyal}, {Chaudhary},
  {Wenzek}, {Guzm{\'a}n}, {Grave}, {Ott}, {Zettlemoyer}, and {Stoyanov}]{xlm-r}
{Conneau}, A., {Khandelwal}, K., {Goyal}, N., {Chaudhary}, V., {Wenzek}, G.,
  {Guzm{\'a}n}, F., {Grave}, E., {Ott}, M., {Zettlemoyer}, L., and {Stoyanov},
  V.
\newblock {Unsupervised Cross-lingual Representation Learning at Scale}.
\newblock \emph{arXiv preprint arXiv:1911.02116}, 2019.

\bibitem[Defferrard et~al.(2016)Defferrard, Bresson, and
  Vandergheynst]{defferrard2016convolutional}
Defferrard, M., Bresson, X., and Vandergheynst, P.
\newblock Convolutional neural networks on graphs with fast localized spectral
  filtering.
\newblock \emph{arXiv preprint arXiv:1606.09375}, 2016.

\bibitem[Du et~al.(2019)Du, Zhai, Poczos, and Singh]{du2019gradient}
Du, S.~S., Zhai, X., Poczos, B., and Singh, A.
\newblock Gradient descent provably optimizes over-parameterized neural
  networks, 2019.

\bibitem[{Gao} et~al.(2018){Gao}, {Wang}, and {Ji}]{lsgcn}
{Gao}, H., {Wang}, Z., and {Ji}, S.
\newblock {Large-Scale Learnable Graph Convolutional Networks}.
\newblock \emph{arXiv preprint arXiv:1808.03965}, 2018.

\bibitem[{Gholami} et~al.(2017){Gholami}, {Azad}, {Jin}, {Keutzer}, and
  {Buluc}]{integrated_model_parallel}
{Gholami}, A., {Azad}, A., {Jin}, P., {Keutzer}, K., and {Buluc}, A.
\newblock {Integrated Model, Batch and Domain Parallelism in Training Neural
  Networks}.
\newblock \emph{arXiv preprint arXiv:1712.04432}, 2017.

\bibitem[Gori et~al.(2005)Gori, Monfardini, and Scarselli]{gori2005new}
Gori, M., Monfardini, G., and Scarselli, F.
\newblock A new model for learning in graph domains.
\newblock In \emph{Proceedings of the IEEE International Joint Conference on
  Neural Networks (IJCNN)}, 2005.

\bibitem[{G{\"u}nther} et~al.(2018){G{\"u}nther}, {Ruthotto}, {Schroder},
  {Cyr}, and {Gauger}]{layer_parallel_resnet}
{G{\"u}nther}, S., {Ruthotto}, L., {Schroder}, J.~B., {Cyr}, E.~C., and
  {Gauger}, N.~R.
\newblock {Layer-Parallel Training of Deep Residual Neural Networks}.
\newblock \emph{arXiv preprint arXiv:1812.04352}, 2018.

\bibitem[Hamilton et~al.(2017)Hamilton, Ying, and Leskovec]{graphsage}
Hamilton, W., Ying, Z., and Leskovec, J.
\newblock Inductive representation learning on large graphs.
\newblock In \emph{Proceedings of Advances in Neural Information Processing
  Systems (NeurIPS)}, 2017.

\bibitem[{Hao}(2019)]{transcarbonemit}
{Hao}, K.
\newblock Training a single ai model can emit as much carbon as five cars in
  their lifetimes, June 2019.

\bibitem[{Hu} et~al.(2020){Hu}, {Fey}, {Zitnik}, {Dong}, {Ren}, {Liu},
  {Catasta}, and {Leskovec}]{ogbn}
{Hu}, W., {Fey}, M., {Zitnik}, M., {Dong}, Y., {Ren}, H., {Liu}, B., {Catasta},
  M., and {Leskovec}, J.
\newblock {Open Graph Benchmark: Datasets for Machine Learning on Graphs}.
\newblock \emph{arXiv e-prints}, art. arXiv:2005.00687, May 2020.

\bibitem[{Huang} et~al.(2018){Huang}, {Zhang}, {Rong}, and {Huang}]{fastrep}
{Huang}, W., {Zhang}, T., {Rong}, Y., and {Huang}, J.
\newblock {Adaptive Sampling Towards Fast Graph Representation Learning}.
\newblock \emph{arXiv preprint arXiv:1809.05343}, 2018.

\bibitem[Jacot et~al.(2018)Jacot, Gabriel, and Hongler]{jacot2018neural}
Jacot, A., Gabriel, F., and Hongler, C.
\newblock Neural tangent kernel: Convergence and generalization in neural
  networks.
\newblock \emph{arXiv preprint arXiv:1806.07572}, 2018.

\bibitem[Karypis \& Kumar(1998{\natexlab{a}})Karypis and
  Kumar]{karypis1998fast}
Karypis, G. and Kumar, V.
\newblock A fast and high quality multilevel scheme for partitioning irregular
  graphs.
\newblock \emph{SIAM Journal on Scientific Computing}, 1998{\natexlab{a}}.

\bibitem[Karypis \& Kumar(1998{\natexlab{b}})Karypis and
  Kumar]{karypis1998multilevelk}
Karypis, G. and Kumar, V.
\newblock Multilevelk-way partitioning scheme for irregular graphs.
\newblock \emph{Journal of Parallel and Distributed computing},
  1998{\natexlab{b}}.

\bibitem[{Kingma} \& {Ba}(2014){Kingma} and {Ba}]{adam}
{Kingma}, D.~P. and {Ba}, J.
\newblock {Adam: A Method for Stochastic Optimization}.
\newblock \emph{arXiv preprint arXiv:1412.6980}, 2014.

\bibitem[{Kipf} \& {Welling}(2016){Kipf} and {Welling}]{gcn}
{Kipf}, T.~N. and {Welling}, M.
\newblock {Semi-Supervised Classification with Graph Convolutional Networks}.
\newblock \emph{arXiv preprint arXiv:1609.02907}, 2016.

\bibitem[{Kirby} et~al.(2020){Kirby}, {Samsi}, {Jones}, {Reuther}, {Kepner},
  and {Gadepally}]{nonlinear_multigrid_layer_parallel}
{Kirby}, A.~C., {Samsi}, S., {Jones}, M., {Reuther}, A., {Kepner}, J., and
  {Gadepally}, V.
\newblock {Layer-Parallel Training with GPU Concurrency of Deep Residual Neural
  Networks via Nonlinear Multigrid}.
\newblock \emph{arXiv preprint arXiv:2007.07336}, 2020.

\bibitem[Li et~al.(2018)Li, Han, and Wu]{oversmooth}
Li, Q., Han, Z., and Wu, X.-M.
\newblock Deeper insights into graph convolutional networks for semi-supervised
  learning.
\newblock In \emph{Proceedings of the AAAI Conference on Artificial
  Intelligence}, volume~32, 2018.

\bibitem[Lian et~al.(2017)Lian, Zhang, Zhang, Hsieh, Zhang, and
  Liu]{decentralizedparallelsgd}
Lian, X., Zhang, C., Zhang, H., Hsieh, C.-J., Zhang, W., and Liu, J.
\newblock {Can Decentralized Algorithms Outperform Centralized Algorithms? A
  Case Study for Decentralized Parallel Stochastic Gradient Descent}.
\newblock In \emph{Proceedings of Advances in Neural Information Processing
  Systems (NeurIPS)}, 2017.

\bibitem[Liao \& Kyrillidis(2021)Liao and Kyrillidis]{liao2021convergence}
Liao, F. and Kyrillidis, A.
\newblock On the convergence of shallow neural network training with randomly
  masked neurons, 2021.

\bibitem[{Lin} et~al.(2018){Lin}, {Stich}, {Kshitij Patel}, and
  {Jaggi}]{use_local_sgd}
{Lin}, T., {Stich}, S.~U., {Kshitij Patel}, K., and {Jaggi}, M.
\newblock {Don't Use Large Mini-Batches, Use Local SGD}.
\newblock \emph{arXiv preprint arXiv:1808.07217}, 2018.

\bibitem[Lusher et~al.(2013)Lusher, Koskinen, and
  Robins]{lusher2013exponential}
Lusher, D., Koskinen, J., and Robins, G.
\newblock \emph{Exponential random graph models for social networks: Theory,
  methods, and applications}.
\newblock Cambridge University Press, 2013.

\bibitem[Masci et~al.(2015)Masci, Boscaini, Bronstein, and
  Vandergheynst]{masci2015geodesic}
Masci, J., Boscaini, D., Bronstein, M., and Vandergheynst, P.
\newblock Geodesic convolutional neural networks on riemannian manifolds.
\newblock In \emph{Proceedings of the IEEE International Conference on Computer
  Vision Workshops (ICCVW)}, 2015.

\bibitem[{Nakkiran} et~al.(2019){Nakkiran}, {Kaplun}, {Bansal}, {Yang},
  {Barak}, and {Sutskever}]{doubledescent}
{Nakkiran}, P., {Kaplun}, G., {Bansal}, Y., {Yang}, T., {Barak}, B., and
  {Sutskever}, I.
\newblock {Deep Double Descent: Where Bigger Models and More Data Hurt}.
\newblock \emph{arXiv preprint arXiv:1912.02292}, 2019.

\bibitem[Newman et~al.(2002)Newman, Watts, and Strogatz]{newman2002random}
Newman, M.~E., Watts, D.~J., and Strogatz, S.~H.
\newblock Random graph models of social networks.
\newblock \emph{Proceedings of the National Academy of Sciences}, 2002.

\bibitem[Paszke et~al.(2019)]{pytorch}
Paszke, A. et~al.
\newblock Pytorch: An imperative style, high-performance deep learning library.
\newblock In \emph{Proceedings of Advances in Neural Information Processing
  Systems (NeurIPS)}, 2019.

\bibitem[{Pauloski} et~al.(2020){Pauloski}, {Zhang}, {Huang}, {Xu}, and
  {Foster}]{kfac}
{Pauloski}, J.~G., {Zhang}, Z., {Huang}, L., {Xu}, W., and {Foster}, I.~T.
\newblock {Convolutional Neural Network Training with Distributed K-FAC}.
\newblock \emph{arXiv preprint arXiv:2007.00784}, 2020.

\bibitem[{Peng} \& {Sarazen}(2019){Peng} and {Sarazen}]{sotaaimodels}
{Peng}, T. and {Sarazen}, M.
\newblock The staggering cost of training sota ai models, June 2019.

\bibitem[Sen et~al.(2008)Sen, Namata, Bilgic, Getoor, Galligher, and
  Eliassi-Rad]{sen2008collective}
Sen, P., Namata, G., Bilgic, M., Getoor, L., Galligher, B., and Eliassi-Rad, T.
\newblock Collective classification in network data.
\newblock \emph{AI magazine}, 29:\penalty0 93--93, 2008.

\bibitem[Sharir et~al.(2020)Sharir, Peleg, and Shoham]{costofnlp}
Sharir, O., Peleg, B., and Shoham, Y.
\newblock The cost of training nlp models: A concise overview.
\newblock \emph{arXiv preprint arXiv:2004.08900}, 2020.

\bibitem[{Shi} et~al.(2020){Shi}, {Tang}, {Chu}, {Liu}, {Wang}, and
  {Li}]{distrib_comm}
{Shi}, S., {Tang}, Z., {Chu}, X., {Liu}, C., {Wang}, W., and {Li}, B.
\newblock {A Quantitative Survey of Communication Optimizations in Distributed
  Deep Learning}.
\newblock \emph{arXiv preprint arXiv:2005.13247}, 2020.

\bibitem[Song \& Yang(2020)Song and Yang]{song2020quadratic}
Song, Z. and Yang, X.
\newblock Quadratic suffices for over-parametrization via matrix chernoff
  bound, 2020.

\bibitem[Srivastava et~al.(2014)Srivastava, Hinton, Krizhevsky, Sutskever, and
  Salakhutdinov]{srivastava2014dropout}
Srivastava, N., Hinton, G., Krizhevsky, A., Sutskever, I., and Salakhutdinov,
  R.
\newblock Dropout: a simple way to prevent neural networks from overfitting.
\newblock \emph{Journal of Machine Learning Research (JMLR)}, 2014.

\bibitem[Stich(2019)]{localsgdconverge}
Stich, S.~U.
\newblock Local {SGD} converges fast and communicates little.
\newblock In \emph{Proceedings of the International Conference on Learning
  Representations (ICLR)}, 2019.

\bibitem[{Tavarageri} et~al.(2019){Tavarageri}, {Sridharan}, and
  {Kaul}]{compiler_model_parallel}
{Tavarageri}, S., {Sridharan}, S., and {Kaul}, B.
\newblock {Automatic Model Parallelism for Deep Neural Networks with Compiler
  and Hardware Support}.
\newblock \emph{arXiv preprint arXiv:1906.08168}, 2019.

\bibitem[Veli{\v{c}}kovi{\'c} et~al.(2017)Veli{\v{c}}kovi{\'c}, Cucurull,
  Casanova, Romero, Lio, and Bengio]{gat}
Veli{\v{c}}kovi{\'c}, P., Cucurull, G., Casanova, A., Romero, A., Lio, P., and
  Bengio, Y.
\newblock Graph attention networks.
\newblock \emph{arXiv preprint arXiv:1710.10903}, 2017.

\bibitem[You et~al.(2020)You, Chen, Wang, and Shen]{l2gcn}
You, Y., Chen, T., Wang, Z., and Shen, Y.
\newblock L2-gcn: Layer-wise and learned efficient training of graph
  convolutional networks.
\newblock In \emph{Proceedings of the IEEE Conference on Computer Vision and
  Pattern Recognition (CVPR)}, pp.\  2127--2135, 2020.

\bibitem[Yu et~al.(2019)Yu, Flynn, Yoo, and D'Imperio]{layeredsgd}
Yu, K., Flynn, T., Yoo, S., and D'Imperio, N.
\newblock Layered sgd: A decentralized and synchronous sgd algorithm for
  scalable deep neural network training.
\newblock \emph{arXiv preprint arXiv:1906.05936}, 2019.

\bibitem[{Yuan} et~al.(2019){Yuan}, {Kyrillidis}, and {Jermaine}]{ist}
{Yuan}, B., {Kyrillidis}, A., and {Jermaine}, C.~M.
\newblock {Distributed Learning of Deep Neural Networks using Independent
  Subnet Training}.
\newblock \emph{arXiv preprint arXiv:1810.01392}, 2019.

\bibitem[{Zeng} et~al.(2019){Zeng}, {Zhou}, {Srivastava}, {Kannan}, and
  {Prasanna}]{graphsaint}
{Zeng}, H., {Zhou}, H., {Srivastava}, A., {Kannan}, R., and {Prasanna}, V.
\newblock {GraphSAINT: Graph Sampling Based Inductive Learning Method}.
\newblock \emph{arXiv preprint arXiv:1907.04931}, 2019.

\bibitem[Zhang et~al.(2015)Zhang, Choromanska, and LeCun]{easgd}
Zhang, S., Choromanska, A.~E., and LeCun, Y.
\newblock Deep learning with elastic averaging sgd.
\newblock In \emph{Proceedings of Advances in Neural Information Processing
  Systems (NeurIPS)}, 2015.

\bibitem[{Zhang} et~al.(2018){Zhang}, {Yin}, {Peng}, and {Li}]{distrib_dl}
{Zhang}, Z., {Yin}, L., {Peng}, Y., and {Li}, D.
\newblock A quick survey on large scale distributed deep learning systems.
\newblock In \emph{2018 IEEE 24th International Conference on Parallel and
  Distributed Systems (ICPADS)}, 2018.

\bibitem[{Zhu} et~al.(2020){Zhu}, {Zhao}, {Li}, {Roth}, {Xu}, and {Xu}]{lamp}
{Zhu}, W., {Zhao}, C., {Li}, W., {Roth}, H., {Xu}, Z., and {Xu}, D.
\newblock {LAMP: Large Deep Nets with Automated Model Parallelism for Image
  Segmentation}.
\newblock \emph{arXiv preprint arXiv:2006.12575}, 2020.

\bibitem[Zinkevich et~al.(2010)Zinkevich, Weimer, Li, and Smola]{parallelsgd}
Zinkevich, M., Weimer, M., Li, L., and Smola, A.~J.
\newblock Parallelized stochastic gradient descent.
\newblock In \emph{Proceedings of Advances in Neural Information Processing
  Systems (NeurIPS)}, pp.\  2595--2603, 2010.

\bibitem[{Zou} et~al.(2019){Zou}, {Hu}, {Wang}, {Jiang}, {Sun}, and
  {Gu}]{ladies}
{Zou}, D., {Hu}, Z., {Wang}, Y., {Jiang}, S., {Sun}, Y., and {Gu}, Q.
\newblock {Layer-Dependent Importance Sampling for Training Deep and Large
  Graph Convolutional Networks}.
\newblock \emph{arXiv preprint arXiv:1911.07323}, 2019.

\end{thebibliography}
\bibliographystyle{icml2022}

\newpage
\onecolumn
\appendix
\section{Experimental Details} \label{A:exp_det}
\subsection{Datasets} \label{A:dataset}
The details of the datasets utilized within \texttt{GIST} experiments in Section \ref{S:experiment} are provided in Table \ref{tab:datasets}.
Cora, Citeseer, PubMed and OGBN-Arxiv are considered ``small-scale'' datasets and are utilized within experiments in Section \ref{S:small_scale}.
Reddit and Amazon2M are considered ``large-scale'' datasets and are utilized within experiments in Section \ref{S:large_scale}.

\begin{table}[!h]
    \centering
    \begin{footnotesize}
    \begin{tabular}{l|cccc}
        \toprule
        Dataset & $n$ & \# Edges & \# Labels & $d$\\ 
        \midrule
        Cora & 2,708 & 5,429 & 7 & 1,433\\ 
        CiteSeer & 3,312 & 4,723 & 6 & 3,703 \\ 
        Pubmed & 19,717 & 44,338 & 3 & 500 \\ 
        OGBN-Arxiv & 169,343 & 1.2M & 40 & 128 \\ 
        Reddit & 232,965 & 11.6 M & 41 & 602 \\ 
        Amazon2M & 2.5 M & 61.8 M & 47 & 100 \\
        \bottomrule
    \end{tabular}
    \caption{Details of relevant datasets.}
    \label{tab:datasets}
    \end{footnotesize}
\end{table}

\subsection{Implementation Details} \label{A:implementation}
We provide an implementation of \texttt{GIST} in PyTorch~\cite{pytorch} using the NCCL distributed communication package for training GCN \cite{gcn}, GraphSAGE~\cite{graphsage} and GAT \cite{gat} architectures.
Our implementation is centralized, meaning that a single process serves as a central parameter server.
From this central process, the weights of the global model are maintained and partitioned to different worker processes (including itself) for independent training.
Experiments are conducted with $8$ NVIDIA Tesla V100-PCIE-32G GPUs, a 56-core Intel(R) Xeon(R) CPU E5-2680 v4 @ 2.40GHz, and 256~GB of RAM.

\subsection{Small-Scale Experiments} \label{A:small_scale}
Small-scale experiments in Section \ref{S:small_scale} are performed using Cora, Citeseer, Pubmed, and OGBN-Arxiv datasets~\cite{sen2008collective, ogbn}.
\texttt{GIST} experiments are performed with two, four, and eight sub-GCNs in all cases.
We find that the performance of models trained with \texttt{GIST} is relatively robust to the number of local iterations $\zeta$, but test accuracy decreases slightly as $\zeta$ increases; see Figure \ref{local_iter_fig}.
Based on the results in Figure \ref{local_iter_fig}, we adopt $\zeta=20$ for Cora, Citeseer, and Pubmed, as well as $\zeta=100$ for OGBN-Arxiv. 

\begin{figure*}[!h]
    \centering
    \includegraphics[width=\linewidth]{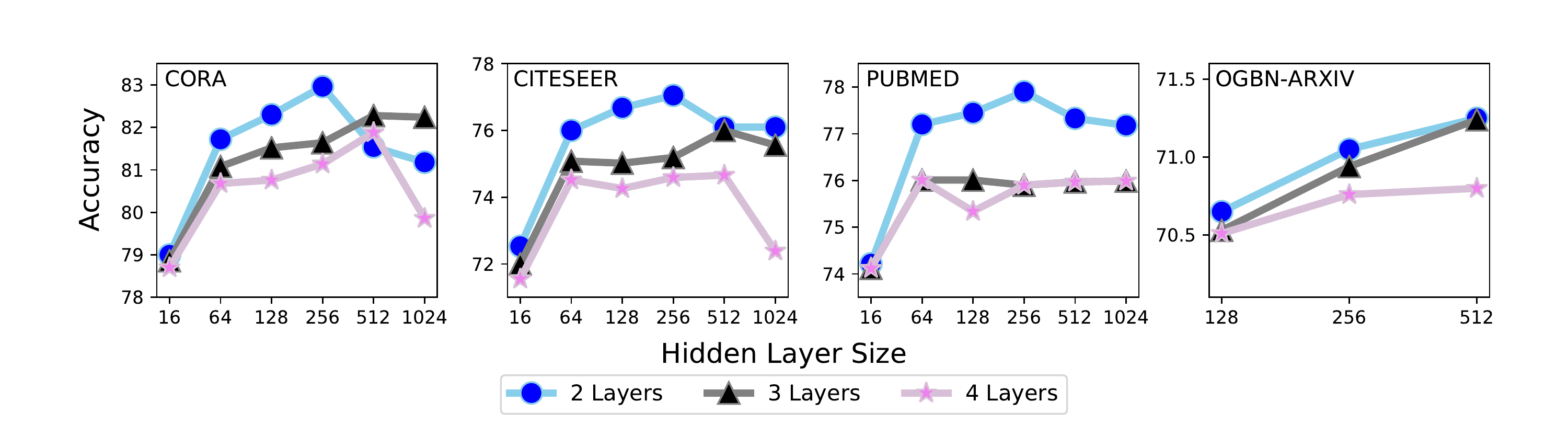}
    \caption{Test accuracy for different sizes (i.e., varying depth and width) of GCN models trained with standard, single-GPU methodology on small-scale datasets. \emph{We adopt three-layer, 256-dimensional GCN models as our baseline architecture.}}
    \label{single_gpu_fig}
\end{figure*}

\begin{figure*}[!h]
    \centering
    \includegraphics[width=\linewidth]{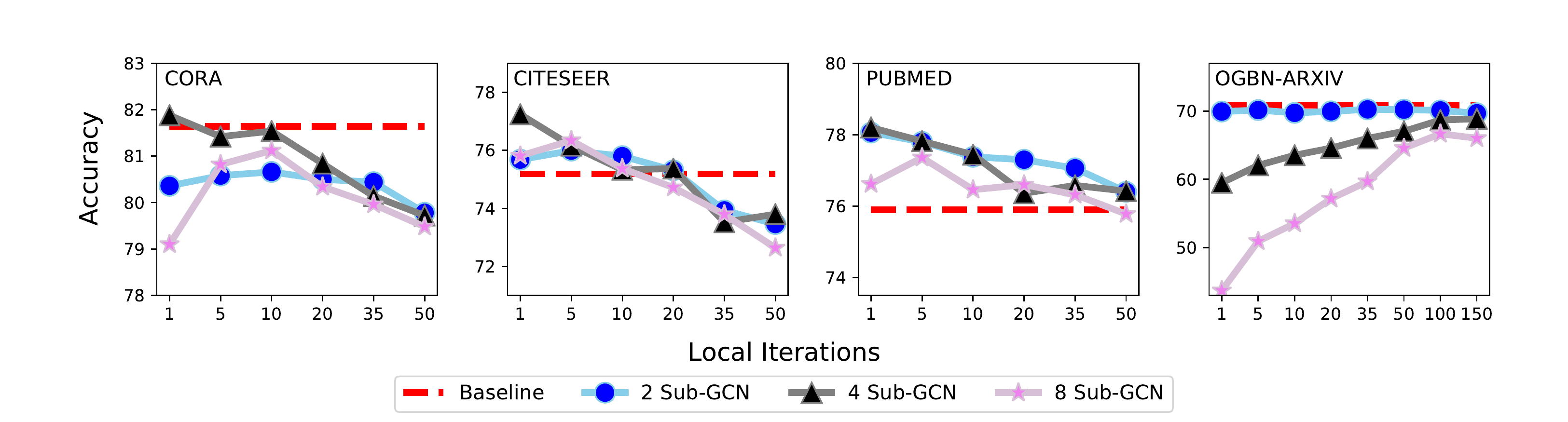}
    \caption{Test accuracy of GCN models trained on small-scale datasets with \texttt{GIST} using different numbers of local iterations and sub-GCNs. \textit{Models trained with \texttt{GIST} are surprisingly robust to the number of local iterations used during training, no matter the number of sub-GCNs.}}
    \label{local_iter_fig}
\end{figure*}


Experiments are run for 400 epochs with a step learning rate schedule (i.e., $10\times$ decay at 50\% and 75\% of total epochs). 
A vanilla GCN model, as described in~\cite{gcn}, is used.
The model is trained in a full-batch manner using the Adam optimizer~\cite{adam}.
No node sampling techniques are employed because the graph is small enough to fit into memory. 
All reported results are averaged across five trials with different random seeds.
For all models, $d_0$ and $d_L$ are respectively given by the number of features and output classes in the dataset.
The size of all hidden layers is the same, but may vary across experiments.

We first train baseline GCN models of different depths and hidden dimensions using a single GPU to determine the best model depth and hidden dimension to be used in small-scale experiments.
The results are shown in Figure~\ref{single_gpu_fig}.
Deeper models do not yield performance improvements for small-scale datasets, but test accuracy improves as the model becomes wider.
Based upon the results in Figure~\ref{single_gpu_fig}, we adopt a three-layer GCN with a hidden dimension of $d_1 \!= \!d_2 \!= \!256$ as the underlying model used in small-scale experiments.
Though two-layer models seem to perform best, we use a three-layer model within Section \ref{S:small_scale} to enable more flexibility in examining the partitioning strategy of \texttt{GIST}.

\subsection{Large-Scale Experiments} \label{A:large_scale}
\noindent \textbf{Reddit Dataset.}
For experiments on Reddit, we train 256-dimensional GraphSAGE and GAT models using both \texttt{GIST} and standard, single-GPU methodology.
During training, the graph is partitioned into $15,\!000$ sub-graphs.
Training would be impossible without such partitioning because the graph is too large to fit into memory.
The setting for the number of sub-graphs is the optimal setting proposed in previous work \cite{clustergcn}.
Models trained using \texttt{GIST} and standard, single-GPU methodologies are compared in terms of F1 score and training time.

All tests are run for $80$ epochs with no weight decay, using the Adam optimizer \cite{adam}.
We find that $\zeta=500$ achieves consistently high performance for models trained with \texttt{GIST} on Reddit.
We adopt a batch size of 10 sub-graphs throughout the training process, which is the optimal setting proposed in previous work \cite{clustergcn}.

\medskip
\noindent \textbf{Amazon2M Dataset.}
For experiments on Amazon2M, we train two to four layer GraphSAGE models with hidden dimensions of $400$ and $4096$ using both \texttt{GIST} and standard, single-GPU methodology.
We follow the experimental settings of \cite{clustergcn}.
The training graph is partitioned into $15,\!000$ sub-graphs and a batch size of $10$ sub-graphs is used.
We find that using $\zeta = 5000$ performs consistently well.
Models are trained for $400$ total epochs with the Adam optimizer~\cite{adam} and no weight decay.

\subsection{Training Ultra-Wide GCNs} \label{A:ultra}
All settings for ultra-wide GCN experiments in Section \ref{S:ultra} are adopted from the experimental settings of Section \ref{S:large_scale}; see Appendix \ref{A:large_scale} for further details.
For $d_i > 4096$ evaluation must be performed on graph partitions (not the full graph) to avoid memory overflow.
As such, the graph is partitioned into $5,\!000$ sub-graphs during testing and F1 score is measured over each partition and averaged.
All experiments are performed using a GraphSAGE model, and the hidden dimension of the underlying model is changed between different experiments.

\subsection{\texttt{GIST} with Layer Sampling} \label{A:gist_layer_samp} 
Experiments in Section \ref{S:gist_layer_samp} adopt the same experimental settings as Section \ref{S:large_scale} for the Reddit dataset; see Appendix \ref{A:large_scale} for further details. 
Within these experiments, we combine \texttt{GIST} with LADIES \cite{ladies}, a recent layer sampling approach for efficient GCN training.
LADIES is used instead of graph partitioning.
Any node sampling approach can be adopted---some sampling approach is just needed to avoid memory overflow.

We train 256-dimensional GCN models with either two or three layers.
We utilize a vanilla GCN model within this section (as opposed to GraphSAGE or GAT) to simplify the implementation of \texttt{GIST} with LADIES, which creates a disparity in F1 score between the results in Section \ref{S:gist_layer_samp} and Section \ref{S:large_scale}.
Experiments in Section \ref{S:gist_layer_samp} compare the performance of the same models trained either with \texttt{GIST} or using standard, single-GPU methodology.
In this case, the single-GPU model is just a GCN trained with LADIES.



\section{Comparisons to Other Distributed Training Methodologies} \label{A:other_dist}
Although \texttt{GIST} has been shown to provide benefits in terms of GCN performance and training efficiency in comparison to standard, single-GPU training, other choices for the distributed training of GCNs exist.
Within this section, we compare \texttt{GIST} to other natural choices for distributed training, revealing that GCN models trained with \texttt{GIST} achieve favorable performance in comparison to those trained with other common distributed training techniques. 

\begin{table}
\centering
\begin{footnotesize}
\begin{tabular}{cc|cc}
\toprule
\# Machines & Method & F1 Score & Training Time\\
\midrule
2 & Local SGD & 96.37 & 137.17s\\
& \texttt{GIST} & \textbf{96.40} & \textbf{108.67s}\\
\midrule
4 & Local SGD & 95.00 & 127.63s\\
& \texttt{GIST} & \textbf{96.16} & \textbf{116.56s}\\
\midrule
8 & Local SGD & 93.40 & 129.58s\\
& \texttt{GIST} & \textbf{95.46} & \textbf{123.83s}\\
\bottomrule
\end{tabular}
\caption{Performance of GraphSAGE models trained using local SGD and \texttt{GIST} on Reddit. We adopt settings described in Section \ref{S:large_scale}, but use 100 local iterations for both \texttt{GIST} and local SGD. \emph{Models trained with \texttt{GIST} outperform those trained with local SGD in terms of test F1 score and wall-clock training time in all cases.}}
\label{lsgd_results}
\end{footnotesize}
\end{table}

\subsection{Local SGD}
A simple version of local SGD \cite{use_local_sgd} can be implemented for distributed training of GCNs by training the full model on each separate worker for a certain number of local iterations and intermittently averaging local updates.
In comparison to such a methodology, \texttt{GIST} has better computational and communication efficiency because $i)$ it communicates only a small fraction of model parameters to each machine and $ii)$ locally training narrow sub-GCNs is faster than locally training the full model.
We perform a direct comparison between local SGD and \texttt{GIST} on the Reddit dataset using a two-layer, 256-dimensional GraphSAGE model; see Table \ref{lsgd_results}.
As can be seen, GCN models trained with \texttt{GIST} have lower wall-clock training time and achieve better performance than those trained with local SGD in all cases.

\begin{table}[!htp]
\centering
\begin{footnotesize}
\begin{tabular}{cc|cc}
\toprule
\# Machines & Method & F1 Score & Inference Time\\
\midrule
2 & Ensemble & 96.31 & 3.59s\\
& \texttt{GIST} & \textbf{96.40} & \textbf{1.81s}\\
\midrule
4 & Ensemble & 96.10 & 6.38s\\
& \texttt{GIST} & \textbf{96.16} & \textbf{1.81s}\\
\midrule
8 & Ensemble & 95.28 & 11.95s\\
& \texttt{GIST} & \textbf{95.46} & \textbf{1.81s}\\
\bottomrule
\end{tabular}
\caption{Performance of GraphSAGE models trained both with \texttt{GIST} and as ensembles of shallow sub-GCNs on Reddit. \emph{Models trained with \texttt{GIST} perform better and do not suffer from increased inference time as the number of sub-GCNs is increased}.}
\label{ensemble_results}
\end{footnotesize}
\end{table}

\subsection{Sub-GCN Ensembles}
As previously mentioned, increasing the number of local iterations (i.e., $\zeta$ in Algorithm \ref{alg:gist}) decreases communication requirements given a fixed amount of training.
When taken to the extreme (i.e., $\zeta \rightarrow \infty$), one could minimize communication requirements by never aggregating sub-GCN parameters, thus forming an ensemble of independently-trained sub-GCNs.
We compare \texttt{GIST} to such a methodology\footnote{For each sub-GCN, we measure validation accuracy throughout training and add the highest-performing model into the ensemble.} in Table \ref{ensemble_results} using a two-layer, 256-dimensional GraphSAGE model on the Reddit dataset.
Though training ensembles of sub-GCNs minimizes communication, Table \ref{ensemble_results} reveals that $i)$ models trained with \texttt{GIST} achieve better performance and $ii)$ inference time for sub-GCN ensembles becomes burdensome as the number of sub-GCNs is increased. 

\section{Theoretical Results} \label{A:theory}
\subsection{Formulation of \texttt{GIST} for One-Hidden-Layer GCNs} \label{A:theory_prelim}
In our analysis, we consider a GCN with one hidden-layer and a ReLU activation.
We assume that the GCN outputs a scalar value $\tilde{y}_i$ for each node in the graph.
Denoting $\tilde{\y} = [\tilde{y}_1,\dots,\tilde{y}_n]$, we can write the output of the GCN as
\begin{align*}
    \tilde{\y} = \frac{1}{\sqrt{d_1}}\bar{\mathbf{A}}\sigma(\bar{\mathbf{A}}\mathbf{X}\boldsymbol{\Theta})\mathbf{a}
\end{align*}
where $\boldsymbol{\Theta} = [\boldsymbol{\theta}_1,\dots,\boldsymbol{\theta}_{d_1}]\in\R^{n\times d_1}$ is the weights within the GCN's first layer and $\mathbf{a} = [a_1,\dots, a_{d_1}]\in \R^{d_1}$ is the weights within the GCN's second layer.
To simplify the analysis, we denote $\hat{\mathbf{X}} = \bar{\mathbf{A}}\mathbf{X} = [\mathbf{\hat{x}}_1,\dots\mathbf{\hat{x}}_n]$.
Then, we have $\hat{\mathbf{x}}_i = \sum_{i'=1}^n\bar{\mathbf{A}}_{ii'}\mathbf{x}_{i'}$ and the output of each node within the graph can be written as
\begin{align*}
    \tilde{y}_i = \frac{1}{\sqrt{d_1}}\sum_{i'=1}^n\sum_{r=1}^{d_1}\bar{\mathbf{A}}_{ii'}a_r\sigma(\inner{\mathbf{\hat{x}}_{i'}}{\boldsymbol{\theta}_r})
\end{align*}
As in previous convergence analysis for training neural networks, we assume that second-layer weights $\mathbf{a}$ are fixed and only the first layer weights $\boldsymbol{\Theta}$ are trainable.
Following the \texttt{GIST} feature partitioning strategy, we only partition the hidden layer.
Specifically, in global iteration $t$, sub-GCNs are constructed by sampling a set of masks $\mathcal{M}_t \in\R^{m\times d_1}$.
We denote the $j$th column of $\mathcal{M}_t$ as $\mathcal{M}^{(j)}_t\in\R^{m}$, the $r$th row of $\mathcal{M}_t$ as $\mathcal{M}_{t,r}\in\R^{d_1}$, and the entry in the $r$th row and $j$th column as $\mathcal{M}_{t,r}^{(j)}$.
Each $\mathcal{M}^{(j)}_{t,r}$ is a binary values: $\mathcal{M}^{(j)}_{t,r} = 1$ if neuron $r$ is active in sub-GCN $j$, and $\mathcal{M}^{(j)}_{t,r} = 0$ otherwise.
Using this mask notation, the output for node $i$ within sub-GCN $j$ can be written as
\begin{align*}
    \hat{y}_i^{(j)}(t,k) = f_{\mathcal{M}_{t}^{(j)}}(\boldsymbol{\Theta}_{t,k}^{(j)},\mathbf{X})_i=  \frac{1}{\sqrt{d_1}}\sum_{i'=1}^n\sum_{r=1}^{d_1}\bar{\mathbf{A}}_{ii'}\mathcal{M}_{t,r}^{(j)}a_r\sigma\left(\inner{\hat{\mathbf{x}_{i'}}}{\boldsymbol{\theta}_{t,k,r}^{(j)}}\right)
\end{align*}
$t$ and $k$ denote the current global and local iterations, respectively. 
We assume that each $\mathcal{M}_{t,r}$ is sampled from a one-hot categorical distribution.
We formally define the random variables $\mathcal{M}_{t,r}^{(j)}$ as follows: Let each $\hat{m}_{t,r}$ be a uniform random variable on the index set $[m] = \{1,\dots,m\}$, i.e., $\mathcal{P}(\hat{m}_{t,r} = j) = \frac{1}{m}$ for $j\in [m]$.
Then, we define each mask entry as $\mathcal{M}_{t,r}^{(j)} = \I\{\hat{m}_{k,r} = j\}$.
Masks sampled in such a fashion have the following properties
\begin{itemize}
    \item $\mathcal{P}(\mathcal{M}_{t,r}^{(j)} = 1) = \frac{1}{m}$
    \item $\mathcal{P}(\mathcal{M}_{t,r}^{(j)} = 0) = 1 - \frac{1}{m}$
    \item $\sum_{j=1}^m\mathcal{M}_{t,r}^{(j)} = 1$
    \item $\mathcal{M}_{t,r}^{(j)}\mathcal{M}_{t,r}^{(j')} = 0$ if $j'\neq j$.
\end{itemize}
Here, the first and second properties guarantee that the expected number of neurons active in each sub-GCN is equal.
The third and fourth properties guarantee that each neuron is active in one and only one sub-GCN.
Within this setup, we consider the \texttt{GIST} training procedure, described as
\begin{equation}
    \label{GIST_procedure}
    \begin{aligned}
    \boldsymbol{\theta}_{t, 0, r}^{(j)} & = \boldsymbol{\theta}_{t, r}\\
    \boldsymbol{\theta}_{t, k+1, r}^{(j)} & = \boldsymbol{\theta}_{t, k, r}^{(j)} - \eta\frac{\partial L(\boldsymbol{\Theta}_{t,k}^{(j)})}{\partial\boldsymbol{\theta}_r}\\
    \boldsymbol{\theta}_{t+1, r} & = \boldsymbol{\theta}_{t,r} + \sum_{j=1}^m\left(\boldsymbol{\theta}_{t,\zeta,r}^{(j)} - \boldsymbol{\theta}_{t,0,r}^{(j)}\right)
\end{aligned}
\end{equation}
Within this formulation, $\zeta$ represents the total number of local iterations performed for each sub-GCN, while $L(\boldsymbol{\Theta}_{t,k}^{(j)})$ is the loss on the $j$th sub-GCN during the $t$th global and $k$th local iteration.
We can express $L(\boldsymbol{\Theta}_{t,k}^{(j)})$ as
\begin{align*}
    L\left(\boldsymbol{\Theta}_{t,k}^{(j)}\right) = \left\|\mathbf{y} - \hat{\mathbf{y}}^{(j)}(t,k)\right\|_2^2 = \left\|\mathbf{y} - f_{\mathcal{M}_{t}^{(j)}}(\boldsymbol{\Theta}_{t,k}^{(j)},\mathbf{X})\right\|_2^2
\end{align*}
and the gradient has the form
\begin{align*}
    \frac{\partial L(\boldsymbol{\Theta}_{t,k}^{(j)})}{\partial\boldsymbol{\theta}_r} = \frac{1}{\sqrt{d_1}}\sum_{i=1}^n\sum_{i'=1}^n\left(\hat{y}_i^{(j)}(t,k) - y_i\right)\bar{\mathbf{A}}_{ii'}\mathcal{M}_{t,r}^{(j)}a_r\mathbf{\hat{x}}_{i'}\I\left\{\inner{\boldsymbol{\theta}_{t,k,r}^{(j)}}{\mathbf{\hat{x}}_{i'}}\geq 0\right\}
\end{align*}

\subsection{Properties of the Transformed Input} \label{A:trans_input}
The GCN~\cite{gcn} uses a first-degree Chebyshev polynomial to approximate a spectral convolution on the graph, which results in an aggregation matrix of the form
\begin{equation}
    \label{Aggregation Matrix}
    \begin{aligned}
        \bar{\mathbf{A}} = \mathbf{I} + \mathbf{D}^{-\frac{1}{2}}\mathbf{A}\mathbf{D}^{-\frac{1}{2}}
    \end{aligned}
\end{equation}
where $\mathbf{A}$ is the adjacency matrix and $\mathbf{D}$ is the degree matrix with $\mathbf{D}_{ii} = \sum_{j=1}^n\mathbf{A}_{ij}$.
In practice, the re-normalization trick is applied to control the magnitude of the largest eigenvalue of $\bar{\mathbf{A}}$.
Here, however, we keep the original formulation of \eqref{Aggregation Matrix} to facilitate our analysis, and our assumption on the depth of the GCN does not lead to numerical instability even if $\lambda_{\max}(\bar{\mathbf{A}}) > 1$.
It is a well-known result that $2 = \lambda_{\max}(\bar{\mathbf{A})} \geq \lambda_{\min}(\bar{\mathbf{A}}) \geq 0$.
In particular, the lower bound on the minimum eigenvalue is obtained by considering
\begin{align*}
    \mathbf{v}^\top\bar{\mathbf{A}}\mathbf{v} = \sum_{i=1}^nv_i^2 + \sum_{(i,j)\in E}\frac{v_iv_j}{\sqrt{\mathbf{D}_{ii}\mathbf{D}_{jj}}} = \sum_{(i,j)\in E}\left(\frac{v_i}{\sqrt{\mathbf{D}_{ii}}} + \frac{v_j}{\sqrt{\mathbf{D}_{jj}}}\right)^2
\end{align*}
In our analysis, we require the aggregation matrix $\bar{\mathbf{A}}$ to be positive definite.
Thus, the following assumption can be made about $\lambda_{\min}(\mathbf{\bar{A}})$.
\begin{asump}
\label{agg_pd}
$\lambda_{\min}(\mathbf{\bar{A}}) \neq 0$.
\end{asump}
Going further, we must make a few more assumptions about the aggregation matrix and the graph itself to satisfy certain properties relevant to the analysis.
First, the following property must hold
\begin{property}
\label{bounded_norm}
For all $i\in[n]$, we have $\|\mathbf{\hat{x}}_i\|_2\leq 1$. 
\end{property}
which can be guaranteed by the following assumption.
\begin{asump}
\label{balanced_degree}
There exists $\epsilon\in(0,1)$ and $p\in\mathbb{Z}_+$ such that
\begin{align*}
    (1-\epsilon)^2p\leq \mathbf{D}_{ii}\leq (1+\epsilon)^2p
\end{align*}for all $i\in[n]$.
\end{asump}
Additionally, we make the following assumption regarding the graph itself
\begin{asump}
\label{data_asump}
For all $i\in[n]$, we have $\|\mathbf{x}_i\|_2\leq \frac{1-\epsilon}{2}$, and $|y_i|\leq C$ for some constant $C$. Moreover, for all $j\in[n]$ and $j\neq i$, we have $\mathbf{x}_i\not\parallel \mathbf{x}_j$.
\end{asump}
which, in turn, yields the following property
\begin{property}
\label{none-parallel}
For all $i,j\in[n]$ such that $i\neq j$, we have $\hat{\mathbf{x}}_i\not\parallel\hat{\mathbf{x}}_j$.
\end{property}

\subsection{Full Statement Theorem \ref{main_gist_conv}} \label{A:gist_statement}
We now state the full version of theorem \ref{main_gist_conv} from Section \ref{S:theory}, which characterizes the convergence properties of one-hidden-layer GCN models trained with \texttt{GIST}.
The full proof of this Theorem is provided within Appendix \ref{A:gist_convergence}.
\begin{theorem}
Suppose assumptions \ref{agg_pd}-\ref{data_asump}, and property \ref{none-parallel} hold.
Moreover, suppose in each global iteration the masks are generated from a categorical distribution with uniform mean $\sfrac{1}{m}$.
Fix the number of global iterations to $T$ and local iterations to $\zeta$.
If the number of hidden neurons satisfies $d_1 = \Omega\left(\frac{n^3\zeta^2T^2}{\delta^2\gamma(1-\gamma)^2\lambda_0^4}\left(n + \frac{d}{m^2}\|\bar{\mathbf{A}}^2\|_{1,1}\right)\right)$, then procedure (\ref{GIST_procedure}) with constant step size $\eta = O\left(\frac{\lambda_0}{n^2\|\mathbf{A}^2\|_{1,1}}\right)$ converges according to
\begin{align*}
    \E_{[\mathcal{M}_{t-1}],\boldsymbol{\Theta}_0,\mathbf{a}}\left[\left\|\y - \hat{\y}(t)\right\|_2^2\right]& \leq\left(\gamma + (1-\gamma)\left(1 - \frac{\eta\lambda_0}{2}\right)^\zeta\right)^t\E_{\boldsymbol{\Theta}_0,\mathbf{a}}\left[\|\y - \hat{\y}(0)\|_2^2\right] + O\left(\frac{(m-1)^2\zeta\|\bar{\mathbf{A}}^2\|_{1,1}nd}{\gamma m^2d_1}\right)
\end{align*}
with probability at least $1 - \delta$.
\end{theorem}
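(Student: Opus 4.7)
The plan is to adapt the neural tangent kernel (NTK) convergence framework to the masked, aggregated gradient descent procedure defined in~\eqref{GIST_procedure}. First I would verify that the infinite-width GIST kernel $\mathbf{G}^\infty = \bar{\mathbf{A}}\mathbf{H}^\infty\bar{\mathbf{A}}$ is strictly positive definite: Property~\ref{none-parallel} implies $\mathbf{H}^\infty \succ 0$ by the standard two-layer ReLU argument, and Assumption~\ref{agg_pd} makes $\bar{\mathbf{A}}$ invertible, so $\lambda_0 := \lambda_{\min}(\mathbf{G}^\infty) \ge \lambda_{\min}(\bar{\mathbf{A}})^2 \lambda_{\min}(\mathbf{H}^\infty) > 0$. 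A Hoeffding-style argument over the $d_1$ hidden units, together with Property~\ref{bounded_norm}, then shows that at initialization the empirical kernel concentrates around $\mathbf{G}^\infty$ in operator norm to within $\lambda_0/4$ provided $d_1 = \Omega(n^2/\lambda_0^2)$.

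\textbf{Local sub-GCN convergence.} Within a single global round $t$ I would fix the mask $\mathcal{M}_t$ and analyze the $\zeta$ local steps on each sub-GCN. Conditioning on the inductive hypothesis that all iterates $\boldsymbol{\theta}_{t,k,r}^{(j)}$ lie in a ball of radius $R$ around $\boldsymbol{\theta}_{0,r}$, the fraction of hidden units that can flip their ReLU activation pattern is $O(n R)$, so the empirical kernel driving each local step differs from $\mathbf{G}^\infty$ by at most $\lambda_0/2$ in operator norm. A standard one-step descent lemma then yields, for each sub-GCN $j$ and each local step $k$, contraction of the sub-GCN residual by factor $(1 - \eta\lambda_0/2)$, up to a residual controlled by $1/d_1$. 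Iterating over $k = 0, \dots, \zeta-1$ produces the $(1-\eta\lambda_0/2)^\zeta$ factor per global round.

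\textbf{Aggregation and mask expectation.} The aggregation step stitches the $m$ disjoint sub-GCN updates into the global model. Expressing the post-aggregation global residual at round $t+1$ in terms of the pre-aggregation sub-GCN residuals and taking conditional expectation over $\mathcal{M}_t$ using the mask identities $\E[\mathcal{M}_{t,r}^{(j)}]=1/m$, $\mathcal{M}_{t,r}^{(j)}\mathcal{M}_{t,r}^{(j')}=0$ for $j\ne j'$, and $\sum_j \mathcal{M}_{t,r}^{(j)}=1$, the cross-sub-GCN covariance produces the $(m-1)^2/m^2$ prefactor, while the $\zeta$ inner steps compound the sampling variance linearly. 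I expect a recursion of the form
\begin{align*}
\E_{\mathcal{M}_t}\!\left[\|\y - \hat{\y}(t+1)\|_2^2 \,\middle|\, \mathcal{F}_t\right] \le \bigl(\gamma + (1-\gamma)(1 - \tfrac{\eta\lambda_0}{2})^\zeta\bigr)\|\y - \hat{\y}(t)\|_2^2 + \kappa,
\end{align*}
with $\kappa = O\bigl((m-1)^2\zeta \|\bar{\mathbf{A}}^2\|_{1,1} n d / (m^2 d_1)\bigr)$, where $\|\bar{\mathbf{A}}^2\|_{1,1}$ arises from the squared aggregation weights coupling nodes through $\bar{\mathbf{A}}$. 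Unrolling geometrically and bounding the partial sum $\sum_{s\ge 0} (\gamma + (1-\gamma)(1-\eta\lambda_0/2)^\zeta)^s \le 1/\gamma$ converts the per-round noise into the stated $1/\gamma$-weighted additive term.

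\textbf{Width condition and main obstacle.} To close the induction I would bound the cumulative weight movement across all $T$ global rounds and $m$ sub-GCNs by $O(\sqrt{n}\,\zeta T/(\sqrt{d_1}\,\lambda_0))$, and force this below the ReLU-stability radius via the stated lower bound on $d_1$, absorbing the $\delta$ failure probability through a union bound over global rounds together with a Markov step on the residuals. The main obstacle will be the aggregation analysis: unlike vanilla NTK, the effective contraction operator is a mask-weighted submatrix combination of $\mathbf{G}^\infty$, so one must carefully separate its mean (which yields the $\lambda_0$-contraction after averaging) from the higher-order sampling fluctuations (which contribute the additive noise). Obtaining the compound factor $(\gamma + (1-\gamma)(1-\eta\lambda_0/2)^\zeta)^t$ rather than a looser bound such as $(1 - \eta\lambda_0/2)^{\zeta t}$ or $(1-\gamma)^t$ requires the interpolating quantity $\gamma$ to trade off no-sampling and fully-independent-sampling regimes, and tracking how the $\zeta$ local steps amplify the sampling variance without destroying per-round contraction is the delicate step.
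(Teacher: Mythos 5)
Your proposal follows essentially the same route as the paper: establish positive definiteness of the \texttt{GIST-K} under overparameterization, prove per-step linear contraction of each sub-GCN residual while the weights stay in an $R$-ball, decompose the post-aggregation residual via the mask identities and $\E_{\mathcal{M}_t}[\hat{\y}^{(j)}(t,0)] = \hat{\y}(t)$, and close an induction on the cumulative weight movement to justify the width requirement. One bookkeeping step in your sketch is wrong as stated, though it does not change the final form of the bound: the geometric sum $\sum_{s\ge 0}\rho^s$ with $\rho = \gamma + (1-\gamma)(1-\eta\lambda_0/2)^\zeta$ equals $1/(1-\rho) = \bigl((1-\gamma)(1-(1-\eta\lambda_0/2)^\zeta)\bigr)^{-1} \approx 2/(\eta\lambda_0(1-\gamma))$, which is not bounded by $1/\gamma$. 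In the paper the $1/\gamma$ in the additive term does not come from the unrolling at all; it comes from an AM--GM split of the cross-sub-GCN drift term $\iota_t$ (the gap between the pairwise sub-GCN distances before and after the $\zeta$ local steps), where a $\gamma$-fraction of the contraction is sacrificed to absorb part of that drift and the remainder is paid as noise scaled by $1/\gamma$ and by $\eta$; the explicit $\eta$ in that per-round noise then cancels against the $1/(\eta\lambda_0)$ from the geometric sum. Your per-round $\kappa$ is missing both the $\eta$ and the $1/\gamma$, so the two errors happen to compensate. With that correction your argument matches the paper's.
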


\subsection{\texttt{GIST} and Local Training Progress}
For a one-hidden-layer MLP, the analysis often depends on the (scaled) Gram Matrix of the infinite-dimensional NTK
\begin{align*}
    \h^\infty_{ij} = \frac{1}{d_1m}\inner{\mathbf{\hat{x}}_i}{\mathbf{\hat{x}}_j}\E_{\boldsymbol{\theta}\sim\mathcal{N}(0,\mathbf{I})}\left[\I\{\inner{\mathbf{\hat{x}}_i}{\boldsymbol{\theta}}\geq 0, \inner{\mathbf{\hat{x}}_j}{\boldsymbol{\theta}}\geq 0\}\right]
\end{align*}
We can extend this definition of the Gram Matrix to an infinite-width, one-hidden-layer GCN as follows
\begin{align*}
    \mathbf{G}^{\infty} = \bar{\mathbf{A}}\h^\infty\bar{\mathbf{A}}
\end{align*}
With property \ref{none-parallel}, prior work \cite{du2019gradient} shows that $\lambda_{\min}(\h) > 0$. Denoting $\lambda_0 = \lambda_{\min}(\mathbf{G}^\infty)$, since $\bar{\mathbf{A}}$ is also positive definite, we have that $\lambda_0 \geq \lambda_{\min}(\h)\lambda_{\min}(\bar{\mathbf{A}}) > 0$. 
In our analysis, we define the Graph Independent Subnetwork Tangent Kernel (\texttt{GIST-K}) 
\begin{align*}
    \mathbf{G}^{(j)}(t, t', k) = \bar{\mathbf{A}}\h(t, t', k)\bar{\mathbf{A}}
\end{align*}
where $\h(t, t', k)$ is defined as 
\begin{align*}
    \h(t, t', k) = \frac{1}{d_1}\inner{\mathbf{\hat{x}}_i}{\mathbf{\hat{x}}_j}\sum_{r=1}^{d_1}\mathcal{M}_{t,r}^{(j)}\I\left\{\inner{\mathbf{\hat{x}}_i}{\boldsymbol{\theta}_{t',k,r}^{(j)}}\geq 0, \inner{\mathbf{\hat{x}}_j}{\boldsymbol{\theta}_{t',k,r}^{(j)}}\geq 0\right\}
\end{align*}
for masks $\mathcal{M}_t$ and weights $\boldsymbol{\Theta}_{t',k}^{(j)}$.
Following previous work \cite{liao2021convergence} on subnetwork theory, the following Lemma can be obtained.
\begin{lemma}
\label{PD_GISTK}
Suppose the number of hidden nodes satisfies $d_1 = \Omega\left(\lambda_0^{-1}n^2\log\sfrac{Tmn}{\delta}\right)$. If for all $t,k$ it holds that $\|\boldsymbol{\theta}_{t,k,r} - \boldsymbol{\theta}_{0,r}\|_2\leq R:= \frac{\lambda_0}{48n}$, then with probability at least $1-\delta$, for all $t,t'\in[T]$ we have: \vspace{-0.15cm}
\begin{align*}
    \lambda_{\min}(\mathbf{G}^{(j)}(t, t', k))\geq \tfrac{\lambda_0}{2}. \\[-15pt]
\end{align*}
\end{lemma}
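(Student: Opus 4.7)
The plan is to control $\lambda_{\min}(\mathbf{G}^{(j)}(t,t',k))$ via a Weyl-type perturbation argument against the infinite-width reference. Since $\lambda_0 = \lambda_{\min}(\mathbf{G}^{\infty})$ by definition and
$$\mathbf{G}^{(j)}(t,t',k) - \mathbf{G}^{\infty} = \bar{\mathbf{A}}\bigl(\h(t,t',k) - \h^\infty\bigr)\bar{\mathbf{A}},$$
with $\|\bar{\mathbf{A}}\|_{op} \leq 2$ (standard bound for the normalized adjacency with self-loops), it suffices to prove that $\|\h(t,t',k) - \h^\infty\|_{op} \leq \lambda_0/8$ uniformly over $t,t' \in [T]$, $k$, and $j \in [m]$, with probability at least $1 - \delta$.

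The key step is a two-stage decomposition
$$\h(t,t',k) - \h^\infty \;=\; \underbrace{\bigl[\h(t,0,0) - \h^\infty\bigr]}_{\text{(I) initialization concentration}} \;+\; \underbrace{\bigl[\h(t,t',k) - \h(t,0,0)\bigr]}_{\text{(II) weight perturbation}},$$
where at the reference point $\h(t,0,0)$ all sub-GCN weights coincide with the global initialization $\boldsymbol{\theta}_{0,r}$, so the only randomness beyond initialization is the mask $\mathcal{M}_t$. For (I), each entry of $\h(t,0,0)$ is a sum of $d_1$ independent, bounded contributions (one per neuron, using that masks and initial weights are drawn independently across $r$), with expectation $\h^\infty_{ij}$ by construction. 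I would apply Hoeffding's inequality entrywise and union bound over $(i,j) \in [n]^2$, $t \in [T]$, and $j \in [m]$ to get $|\h(t,0,0)_{ij} - \h^\infty_{ij}| = O\bigl(\sqrt{\log(Tmn/\delta)/d_1}\bigr)$ on a good event of probability $\geq 1 - \delta/2$; the stated lower bound on $d_1$ then makes the operator-norm contribution at most $\lambda_0/16$.

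For (II), $\h(t,t',k)_{ij} - \h(t,0,0)_{ij}$ is nonzero only at indices $r$ with $\mathcal{M}_{t,r}^{(j)} = 1$ whose ReLU activation pattern flips on $\hat{\mathbf{x}}_i$ or $\hat{\mathbf{x}}_j$. Since $\|\boldsymbol{\theta}_{t',k,r}^{(j)} - \boldsymbol{\theta}_{0,r}\|_2 \leq R$ by hypothesis, a flip on $\hat{\mathbf{x}}_i$ forces $|\inner{\hat{\mathbf{x}}_i}{\boldsymbol{\theta}_{0,r}}| \leq R\|\hat{\mathbf{x}}_i\|_2 \leq R$. Gaussian anti-concentration (using Property~\ref{bounded_norm} combined with Assumption~\ref{balanced_degree} to keep $\|\hat{\mathbf{x}}_i\|_2$ bounded away from $0$) gives this event probability $O(R)$ over the initialization, and a Hoeffding / Chernoff bound on the number of flipping neurons, union-bounded over $t$ and $j$, yields at most $O(d_1 R)$ flips with high probability. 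Hence $|\h(t,t',k)_{ij} - \h(t,0,0)_{ij}| = O(R)$ uniformly, so $\|\h(t,t',k) - \h(t,0,0)\|_F \leq O(nR) = O(\lambda_0/48) \leq \lambda_0/16$. Adding (I) and (II), multiplying by $\|\bar{\mathbf{A}}\|_{op}^2 \leq 4$, and invoking Weyl's inequality yields $\lambda_{\min}(\mathbf{G}^{(j)}(t,t',k)) \geq \lambda_0/2$.

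\textbf{Main obstacle.} The hard part is matching the stated width requirement $d_1 = \Omega(\lambda_0^{-1} n^2 \log(Tmn/\delta))$: a naive entrywise-Hoeffding-then-Frobenius argument as sketched above delivers only $d_1 = \Omega(\lambda_0^{-2} n^2 \log)$, so recovering the sharper $\lambda_0^{-1}$ scaling most likely requires an operator-norm concentration (for instance, a matrix Bernstein inequality applied to the rank-one decomposition $\h(t,0,0) = \frac{1}{d_1}\sum_r \mathcal{M}^{(j)}_{t,r}\,\mathbf{v}_r \mathbf{v}_r^\top$ with $\mathbf{v}_{r,i} = \inner{\hat{\mathbf{x}}_i}{\I\{\cdot\}}$) that exploits the low effective rank of each neuron's contribution. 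A secondary subtlety is the ordering of randomness when combining (I) and (II): the mask must be treated as fresh randomness at each global iteration $t$, which is what restricts the union bound in (II) to $T\cdot m$ events rather than $T^2\cdot m$, and the anti-concentration estimate must be carried through conditionally on the mask.
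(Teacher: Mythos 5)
Your proposal follows essentially the same route as the paper's proof: the identical two-stage decomposition into an initialization-concentration term $\h^{(j)}(t,0,0)-\h^\infty$ and an activation-flip perturbation term $\h^{(j)}(t,t',k)-\h^{(j)}(t,0,0)$, each controlled by entrywise concentration plus a union bound over $n^2$ entries, $t'\in[T]$, and $j\in[m]$, then assembled through $\|\bar{\mathbf{A}}\|^2\le 4$ and a Weyl-type perturbation of $\lambda_0=\lambda_{\min}(\mathbf{G}^\infty)$ --- the paper simply imports the two concentration estimates from Theorem 2 of \cite{liao2021convergence} rather than rederiving them. Your worry about only reaching $d_1=\Omega(\lambda_0^{-2}n^2\log(Tmn/\delta))$ does not put you behind the paper: its own first bound holds with probability $1-2n^2\exp(-2d_1t^2)$ at $t=\lambda_0/(16n)$, which likewise demands $d_1\gtrsim\lambda_0^{-2}n^2\log(Tmn/\delta)$, so no matrix-Bernstein sharpening is hidden in the reference argument (and the $\lambda_0^{-4}$ in the main theorem's width requirement absorbs the discrepancy anyway).
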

After showing that every \texttt{GIST-K} is positive definite, we can then show that the local training of each sub-GCN enjoys a linear convergence rate.
\begin{lemma}
\label{local_converge}
Suppose the number of hidden nodes satisfies $d_1 = \Omega\left(\lambda_0^{-1}n^2\log\sfrac{Tmn}{\delta}\right)$. If for all $r\in[d_1]$ it holds that
\begin{equation}
\label{hypothesis1}
    \begin{aligned}
        \|\boldsymbol{\theta}_{t,r} - \boldsymbol{\theta}_{0,r}\|_2 + \frac{4T\eta\zeta}{\delta\alpha}\sqrt{\frac{n}{d_1}}\E_{[\mathcal{M}_{t-1}],\mathbf{W}_0,\mathbf{a}}\left[\|\y - \hat{\y}(t)\|_2^2\right]^\frac{1}{2} + (T - t)B\leq R
    \end{aligned}
\end{equation}with
\begin{align*}
    B = \sqrt{\frac{8(m-1)\eta\zeta n}{md_1}}\left(\sqrt{\frac{8(m-1)\|\bar{\mathbf{A}}^2\|_{1,1}d}{\gamma m}} + \sqrt{\frac{\eta\zeta nT}{\delta}}\right);\quad R\leq \frac{\lambda_0}{96n}
\end{align*}then we have
\begin{align*}
    \left\|\y - \hat{\mathbf{y}}^{(j)}(t,k+1)\right\|_2^2\leq \left(1 - \frac{\eta\lambda_0}{2}\right) \left\|\y - \hat{\mathbf{y}}^{(j)}(t,k)\right\|_2^2
\end{align*}
and for all $r\in [d_1], j\in[m]$ it holds that
\begin{align*}
    \left\|\boldsymbol{\theta}_{t,\zeta,r}^{(j)} - \boldsymbol{\theta}_{0,r}^{(j)}\right\|_2\leq \frac{2T\eta\zeta}{\delta}\sqrt{\frac{n}{d_1}}\E_{[\mathcal{M}_{t-1}],\mathbf{W}_0,\mathbf{a}}\left[\|\y - \hat{\y}(t)\|_2^2\right]^{\frac{1}{2}} + \eta\zeta n\sqrt{\frac{8(m-1)T}{md_1\delta}}
\end{align*}
with probability at least $1- \frac{\delta}{T}$
\end{lemma}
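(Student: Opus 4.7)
The plan is to mirror the standard NTK-style convergence argument of Du et al. (2019), but carried out on each sub-GCN using Lemma \ref{PD_GISTK} as the key structural ingredient. I proceed by induction on the local iteration index $k \in \{0,1,\dots,\zeta-1\}$, maintaining two invariants simultaneously: (a) every weight $\boldsymbol{\theta}_{t,k',r}^{(j)}$ encountered for $k' \le k$ lies within distance $R = \lambda_0/(96n)$ of the initialization $\boldsymbol{\theta}_{0,r}$, so that Lemma \ref{PD_GISTK} yields $\lambda_{\min}(\mathbf{G}^{(j)}(t,t,k')) \geq \lambda_0/2$; and (b) the sub-GCN residual contracts geometrically by the claimed factor $(1-\eta\lambda_0/2)$. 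The invariant (a) at $k=0$ follows from hypothesis (\ref{hypothesis1}), which is designed precisely so that $\|\boldsymbol{\theta}_{t,r}-\boldsymbol{\theta}_{0,r}\|_2$ plus the per-round local drift is bounded by $R$; the $(T-t)B$ slack term dominates this drift.

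The core one-step progress analysis would write
\begin{align*}
\hat{y}_i^{(j)}(t,k+1) - \hat{y}_i^{(j)}(t,k) = \tfrac{1}{\sqrt{d_1}}\sum_{i'=1}^n\sum_{r=1}^{d_1}\bar{\mathbf{A}}_{ii'}\mathcal{M}_{t,r}^{(j)}a_r\bigl[\sigma(\langle\mathbf{\hat{x}}_{i'},\boldsymbol{\theta}_{t,k+1,r}^{(j)}\rangle) - \sigma(\langle\mathbf{\hat{x}}_{i'},\boldsymbol{\theta}_{t,k,r}^{(j)}\rangle)\bigr]
\end{align*}
and split the inner sum into the indices $S_1$ whose ReLU activation pattern on $\mathbf{\hat{x}}_{i'}$ is stable across the update (linear/NTK contribution) and $S_2$ whose sign flips. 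The $S_1$ contribution assembles, after substituting the gradient form, into $-\eta\mathbf{G}^{(j)}(t,t,k)(\y-\hat{\y}^{(j)}(t,k))$, whose quadratic form is at least $\eta\lambda_0\|\y-\hat{\y}^{(j)}(t,k)\|_2^2$ by Lemma \ref{PD_GISTK}. The $S_2$ contribution is bounded via the usual anti-concentration counting: at Gaussian initialization, only neurons whose activation margin $|\langle\mathbf{\hat{x}}_{i'},\boldsymbol{\theta}_{0,r}\rangle|$ is $O(R)$ can flip, so $|S_2|/d_1 = O(R)$ with high probability. Expanding $\|\y-\hat{\y}^{(j)}(t,k+1)\|_2^2$ via cross-term plus quadratic, using $\eta = O(\lambda_0/(n^2\|\bar{\mathbf{A}}^2\|_{1,1}))$, and absorbing the $S_2$ and $O(\eta^2)$ pieces into half of the linear decay produces the desired $(1-\eta\lambda_0/2)$ contraction. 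Reverification of invariant (a) at $k+1$ uses the per-step gradient bound $\|\partial L/\partial\boldsymbol{\theta}_r\|_2 = O(\sqrt{n/d_1})\|\y-\hat{\y}^{(j)}(t,k)\|_2$ together with hypothesis (\ref{hypothesis1}).

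For the weight perturbation bound, I telescope
\begin{align*}
\bigl\|\boldsymbol{\theta}_{t,\zeta,r}^{(j)} - \boldsymbol{\theta}_{0,r}^{(j)}\bigr\|_2 \;\le\; \sum_{k=0}^{\zeta-1}\eta\left\|\frac{\partial L(\boldsymbol{\Theta}_{t,k}^{(j)})}{\partial\boldsymbol{\theta}_r}\right\|_2 \;\le\; \eta\,O\Bigl(\sqrt{\tfrac{n}{d_1}}\Bigr)\sum_{k=0}^{\zeta-1}\|\y-\hat{\y}^{(j)}(t,k)\|_2,
\end{align*}
and use the geometric contraction just established so the sum is controlled by $\zeta\,\|\y-\hat{\y}^{(j)}(t,0)\|_2$. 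The final step is to convert this sub-GCN residual back into the global residual $\|\y-\hat{\y}(t)\|_2$ plus a sampling fluctuation: since $\hat{\y}(t) = \sum_{j'=1}^m \hat{\y}^{(j')}(t,0)$ and each mask entry is Bernoulli$(1/m)$, I compute $\mathbb{E}_{\mathcal{M}_t}\|\hat{\y}(t) - \hat{\y}^{(j)}(t,0)\|_2^2$ to be of order $\frac{(m-1)\|\bar{\mathbf{A}}^2\|_{1,1}nd}{m d_1}$ (the $(m-1)/m$ variance factor), then convert to high probability via Markov at level $\delta/T$. Combining gives the two additive pieces of the stated bound, the first from the global residual and the second from the sampling gap.

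The main obstacle is the uniform control of the sign-flip error set $S_2$ over all $k \le \zeta$ and all iterations needed for the induction, while the activation patterns co-evolve with the mask-dependent trajectory; this is precisely where the overparameterization hypothesis $d_1 = \Omega(\lambda_0^{-1}n^2\log(Tmn/\delta))$ is consumed, providing the union-bound headroom so that at most an $O(R)$ fraction of neurons can ever flip. Secondarily, the variance/Markov step on the mask randomness requires care because later-round masks are dependent on earlier trajectories; conditioning on $[\mathcal{M}_{t-1}]$ and using that $\mathcal{M}_t$ is drawn fresh and independently at each global step is what permits the clean expectation bound.
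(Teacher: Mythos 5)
Your proposal follows essentially the same route as the paper's proof: the same decomposition of the one-step update into activation-stable and sign-flipping neuron contributions, the same use of Lemma~\ref{PD_GISTK} to lower-bound the kernel quadratic form after absorbing the flipped-neuron and second-order terms via the step-size choice, and the same telescoping-plus-Markov argument (routing the sub-GCN residual through the global residual and the mask-sampling gap) for the weight-perturbation bound. The one slip is your claimed order $\frac{(m-1)\|\bar{\mathbf{A}}^2\|_{1,1}nd}{md_1}$ for $\E_{\mathcal{M}_t}\big[\|\hat{\y}(t)-\hat{\y}^{(j)}(t,0)\|_2^2\big]$ --- this mask-sampling variance does not decay with $d_1$ (the paper invokes a bound of order $\frac{n(m-1)}{m^2}$), but the discrepancy does not alter the structure of the argument or the stated conclusion, since the $d_1^{-1/2}$ in the perturbation estimate comes from the gradient prefactor rather than from this variance.
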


\subsection{Convergence of \texttt{GIST}} \label{A:gist_convergence}
We now prove the convergence result for \texttt{GIST} outlined in Appendix \ref{A:gist_statement}.
In showing the convergence of \texttt{GIST}, we care about the regression loss $\|\y - \hat{\y}(t)\|_2^2$ with
\begin{align*}
    \hat{\y}(t) = f(\boldsymbol{\Theta}_t,\mathbf{X}) = \frac{1}{m\sqrt{d_1}}\bar{\mathbf{A}}\sigma(\bar{\mathbf{A}}\mathbf{X}\boldsymbol{\Theta}_t)\mathbf{a}
\end{align*}
As in previous work \cite{liao2021convergence}, we add the scaling factor $\frac{1}{m}$ to make sure that $\E_{\mathcal{M}_t}[\hat{\y}^{(j)}(t,0)] = \hat{\y}(t)$. Moreover, by properties of the masks $\mathcal{M}_t^{(j)}$, we have
\begin{align*}
    f(\boldsymbol{\Theta},\mathbf{X}) = \sum_{j=1}^mf_{\mathcal{M}}^{(j)}(\boldsymbol{\Theta},\mathbf{X})
\end{align*}
Thus, we can invoke lemmas 13 and 14 from \cite{liao2021convergence}.
We state the two key lemmas here in accordance with our own notation.
\begin{lemma}
\label{global_error_decompose}
The $t$th global step produces squared error satisfying
\begin{align*}
    \|\y - \hat{\y}(t+1)\|_2^2 = \frac{1}{m}\sum_{j=1}^m\|\y - \hat{\y}^{(j)}(t,\zeta)\|_2^2 - \frac{1}{m^2}\sum_{j=1}^m\sum_{j'=1}^{j-1}\|\hat{\y}^{(j)}(t,\zeta) - \hat{\y}^{(j')}(t,\zeta)\|_2^2
\end{align*}
\end{lemma}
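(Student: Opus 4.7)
The plan is to reduce the claim to two mostly independent steps: first, an exact identity
\[
    \hat{\y}(t+1) \;=\; \frac{1}{m}\sum_{j=1}^{m}\hat{\y}^{(j)}(t,\zeta),
\]
which says that the global model after aggregation equals the \emph{average} of the sub-GCN outputs at the end of their local training; and second, a purely algebraic bias–variance decomposition of $\|\y - \bar{v}\|_2^2$ where $\bar{v}$ is an average of $m$ vectors. Given those two pieces, the lemma follows by substitution.

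To establish the forward-pass identity, I would unfold the definitions. The key structural fact is that the masks $\mathcal{M}_{t,r}^{(j)}$ are one-hot across $j$: for each hidden neuron $r$ there is a unique index $j^\star(r)$ with $\mathcal{M}_{t,r}^{(j^\star(r))}=1$ and $\mathcal{M}_{t,r}^{(j)}=0$ otherwise. Consequently, during local training, only the sub-GCN $j^\star(r)$ actually modifies $\boldsymbol{\theta}_{t,r}$; for all other $j$ the gradient contribution carries a factor $\mathcal{M}_{t,r}^{(j)}=0$, so $\boldsymbol{\theta}_{t,\zeta,r}^{(j)} = \boldsymbol{\theta}_{t,0,r}^{(j)} = \boldsymbol{\theta}_{t,r}$. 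The aggregation rule in \eqref{GIST_procedure} then collapses to $\boldsymbol{\theta}_{t+1,r} = \boldsymbol{\theta}_{t,\zeta,r}^{(j^\star(r))}$. Plugging this into $\hat{\y}(t+1) = \tfrac{1}{m\sqrt{d_1}}\bar{\mathbf{A}}\sigma(\bar{\mathbf{A}}\mathbf{X}\boldsymbol{\Theta}_{t+1})\mathbf{a}$ and, on the other side, expanding $\sum_{j}\hat{\y}^{(j)}(t,\zeta)$ using the neuronwise form of $f_{\mathcal{M}_t^{(j)}}$, the inner sum $\sum_j \mathcal{M}_{t,r}^{(j)} \sigma(\langle \hat{\mathbf{x}}_{i'},\boldsymbol{\theta}_{t,\zeta,r}^{(j)}\rangle)$ reduces to $\sigma(\langle \hat{\mathbf{x}}_{i'},\boldsymbol{\theta}_{t+1,r}\rangle)$, and the $\tfrac{1}{m}$ normalization in $\hat{\y}(t+1)$ matches. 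This is a routine but careful bookkeeping step, and is the only place in the argument that uses the specifics of \texttt{GIST}.

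Once the identity holds, the lemma reduces to the standard expansion
\[
    \Bigl\|\y - \tfrac{1}{m}\textstyle\sum_{j}\hat{\y}^{(j)}\Bigr\|_2^2
    \;=\;
    \tfrac{1}{m}\sum_{j}\|\y-\hat{\y}^{(j)}\|_2^2 \;-\; \tfrac{1}{m^2}\sum_{j}\sum_{j'<j}\|\hat{\y}^{(j)}-\hat{\y}^{(j')}\|_2^2,
\]
which is a one-line consequence of the Lagrange/centroid identity $\|\bar v\|_2^2 = \tfrac{1}{m}\sum_j\|v_j\|_2^2 - \tfrac{1}{m^2}\sum_{j<j'}\|v_j-v_{j'}\|_2^2$ after subtracting $\y$. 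I would simply verify by expanding inner products that the cross terms $\langle \y, v_j\rangle$ cancel correctly and the diagonal counting $\sum_{j<j'}(\|v_j\|_2^2+\|v_{j'}\|_2^2)=(m-1)\sum_j\|v_j\|_2^2$ matches. The main obstacle is none of the hard GCN-specific estimates used elsewhere in the paper; it is the algebraic bookkeeping in the first step, where one must keep track simultaneously of the $\tfrac{1}{m}$ scaling in $\hat{\y}(t)$, the one-hot mask structure, and the fact that $\boldsymbol{\theta}_{t,0,r}^{(j)}=\boldsymbol{\theta}_{t,r}$ for every $j$, so that the aggregation collapses exactly rather than approximately.
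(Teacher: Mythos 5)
Your proposal is correct and is exactly the argument behind this identity: the paper does not prove the lemma itself but imports it as Lemma 13 of Liao et al.\ (2021), justified by the same two observations you make --- that the one-hot masks force $\boldsymbol{\theta}_{t+1,r}=\boldsymbol{\theta}_{t,\zeta,r}^{(j^\star(r))}$ so that $\hat{\y}(t+1)=\tfrac{1}{m}\sum_j\hat{\y}^{(j)}(t,\zeta)$, followed by the centroid identity. Your bookkeeping of the $\tfrac{1}{m}$ scaling and the pair-counting factor $\tfrac{1}{m^2}$ both check out against the paper's definitions.
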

\begin{lemma}
\label{subnetwork_deviation}
In the $t$th global iteration, the sampled subnetwork's deviation from the whole network is given by
\begin{align*}
    \sum_{j=1}^m\|\hat{\y}(t) - \hat{\y}^{(j)}(t,0)\|_2^2 = \frac{1}{m}\sum_{j=1}^m\sum_{j'=1}^{j-1}\|\hat{\y}^{(j)}(t,0) - \hat{\y}^{(j')}(t,0)\|_2^2\\
\end{align*}
\end{lemma}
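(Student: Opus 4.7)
The plan is to reduce the claim to a purely algebraic identity by exploiting two facts that are specific to local iteration $k=0$. First, because every sub-GCN is initialized from the global parameters, we have $\boldsymbol{\theta}_{t,0,r}^{(j)} = \boldsymbol{\theta}_{t,r}$ for every $j$ and every $r$. Second, the mask construction guarantees that $\sum_{j=1}^m \mathcal{M}_{t,r}^{(j)} = 1$ for each $r$, since each hidden neuron is assigned to exactly one sub-GCN. Plugging both facts into the definition of $\hat{y}_i^{(j)}(t,0)$ and summing over $j$ collapses the mask factor and recovers the full global output, up to the $\tfrac{1}{m}$ scaling introduced so that $\E_{\mathcal{M}_t}[\hat{\y}^{(j)}(t,0)] = \hat{\y}(t)$. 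The outcome is the deterministic (not expectation-level) averaging relation
\begin{align*}
    \hat{\y}(t) \;=\; \frac{1}{m}\sum_{j=1}^m \hat{\y}^{(j)}(t,0),
\end{align*}
which is the only structural fact about the GCN / sub-GCN pair that the lemma uses.

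With that averaging relation in hand, the lemma becomes a generic statement about vectors in an inner-product space. Writing $v_j := \hat{\y}^{(j)}(t,0)$ and $\bar{v} := \tfrac{1}{m}\sum_j v_j = \hat{\y}(t)$, the claim is exactly the K\"onig--Huygens / Lagrange identity
\begin{align*}
    \sum_{j=1}^m \|v_j - \bar{v}\|_2^2 \;=\; \frac{1}{m}\sum_{j=1}^m\sum_{j'=1}^{j-1}\|v_j - v_{j'}\|_2^2.
\end{align*}
I would prove it by direct expansion. The left-hand side expands to $\sum_j\|v_j\|_2^2 - m\|\bar{v}\|_2^2$ once the cross term $2\langle \bar{v},\sum_j v_j\rangle = 2m\|\bar{v}\|_2^2$ is cancelled. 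For the right-hand side, symmetrize to $\tfrac{1}{2m}\sum_{j\neq j'}\|v_j - v_{j'}\|_2^2$ and expand $\|v_j - v_{j'}\|_2^2$; the diagonal contribution gives $\sum_j\|v_j\|_2^2$ and the cross terms assemble into $-\tfrac{1}{m}\|\sum_j v_j\|_2^2 = -m\|\bar{v}\|_2^2$. The two sides therefore agree.

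There is no genuine analytic obstacle here; the only point that requires care is the normalization convention for $\hat{\y}^{(j)}(t,0)$ versus $\hat{\y}(t)$, since two different scalings appear in the setup (one that makes each sub-GCN an unbiased estimator of the global output, and one that makes the sum of sub-GCN outputs agree with the global output up to a factor of $m$). Under the convention implied by $\E_{\mathcal{M}_t}[\hat{\y}^{(j)}(t,0)] = \hat{\y}(t)$, which is exactly what promotes the averaging relation from an expectation to a deterministic identity, the lemma drops out immediately. Notably, none of the assumptions supporting the main theorem (overparameterization, the step-size choice, positive definiteness of \texttt{GIST-K}, the weight-perturbation bound, etc.) enter the argument at all: the identity is structural and holds at every global iteration $t$.
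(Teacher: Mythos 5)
Your proof is correct, and it is more self-contained than what the paper does: the paper never proves this lemma itself, but instead observes that the mask properties make the sub-GCN decomposition formally identical to the subnetwork decomposition of an MLP and then imports the statement verbatim as Lemma~14 of Liao et al.\ (2021). Your route makes the two ingredients explicit. Step one, the deterministic averaging relation $\hat{\y}(t) = \frac{1}{m}\sum_{j=1}^m \hat{\y}^{(j)}(t,0)$, follows exactly as you say from $\boldsymbol{\theta}^{(j)}_{t,0,r} = \boldsymbol{\theta}_{t,r}$ and $\sum_{j=1}^m \mathcal{M}^{(j)}_{t,r} = 1$ together with the $\tfrac{1}{m}$ prefactor in the definition of $\hat{\y}(t)$; step two is the K\"onig--Huygens identity, and your expansion of both sides to $\sum_j\|v_j\|_2^2 - m\|\bar{v}\|_2^2$ is right. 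Your remark about the normalization is a genuinely useful catch: the paper's displayed claim $f(\boldsymbol{\Theta},\mathbf{X}) = \sum_{j=1}^m f^{(j)}_{\mathcal{M}}(\boldsymbol{\Theta},\mathbf{X})$ is off by a factor of $m$ relative to its own definitions of $f$ and $f^{(j)}_{\mathcal{M}}$, and the lemma as stated is true only under the averaging convention $\hat{\y}(t) = \frac{1}{m}\sum_j \hat{\y}^{(j)}(t,0)$ (with the sum convention the right-hand side would need a different prefactor). The convention you adopt is the one consistent with the explicit formula for $\hat{\y}(t)$ and with $\E_{\mathcal{M}_t}[\hat{\y}^{(j)}(t,0)] = \hat{\y}(t)$, so the identity holds exactly as claimed. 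What your approach buys is a proof that stands on its own and makes transparent that no probabilistic or overparameterization assumptions are needed; what the paper's citation buys is brevity and an explicit link to the prior subnetwork-training analysis it builds on.
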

Moreover, lemmas 22 and 23 from \cite{liao2021convergence} show that with probability at least $1 - 2n\exp(-\frac{m}{32})$, for all $R\leq \frac{1}{2}$, it holds that
\begin{align*}
    \|\boldsymbol{\Theta}_0\|_F\leq \sqrt{2d_1d} - \sqrt{d_1}R\\
    \sum_{r=1}^m\inner{\boldsymbol{\theta}_{0,r}}{\hat{\mathbf{x}}_i}\leq d_1n(2 - R^2)
\end{align*}
For convenience, we assume that such an initialization property holds.
Then, we can use lemma 24 from \cite{liao2021convergence}: as long as $\|\theta_{t,r}- \theta_{0,r}\|_2\leq R$ for all $t,r$, then we have
\begin{align*}
    \E_{\mathcal{M}_t}\left[\|\hat{\y}(t) - \hat{\y}^{(j)}(t,0)\|_2^2\right]\leq \frac{4n(m-1)}{m^2}
\end{align*}
Then, applying Markov's inequality gives the following with probability at least $1-\frac{\delta}{2mT}$
\begin{align*}
    \|\hat{\y}(t) - \hat{\y}^{(j)}(t,0)\|_2^2 \leq \frac{8n(m-1)T}{m\delta}
\end{align*}
We point out that, within the proof, we use $R = \frac{\lambda_0}{96n}$, which satisfies the condition above.
Using lemma \ref{global_error_decompose} to expand the loss at the $(t+1)$th iteration and invoking lemma \ref{local_converge} gives
\begin{align*}
    \left\|\y - \hat{\y}(t+1)\right\|_2^2 & = \frac{1}{m}\sum_{j=1}^m\left\|\y - \hat{\y}^{(j)}(t,\zeta)\right\|_2^2 - \frac{1}{m^2}\sum_{j=1}^m\sum_{j'=1}^{j-1}\left\|\hat{\y}^{(j)}(t,\zeta) - \hat{\y}^{(j')}(t,\zeta)\right\|_2^2\\
    & \leq \frac{1}{m}\sum_{j=1}^m\left(1 - \frac{\eta\lambda_0}{2}\right)^\zeta\left\|\y - \hat{\y}^{(j)}(t,0)\right\|_2^2- \frac{1}{m^2}\sum_{j=1}^m\sum_{j'=1}^{j-1}\left\|\hat{\y}^{(j)}(t,\zeta) - \hat{\y}^{(j')}(t,\zeta)\right\|_2^2\\
    & = \frac{1}{m}\sum_{j=1}^m\left\|\y - \hat{\y}^{(j)}(t,0)\right\|_2^2 - \frac{\eta\lambda_0}{2m}\sum_{k=0}^{\zeta-1}\sum_{j=1}^m\left(1 - \frac{\eta\lambda_0}{2}\right)^k\left\|\y - \hat{\y}^{(j)}(t,0)\right\|_2^2 - \\
    &\quad\quad\quad \frac{1}{m^2}\sum_{j=1}^m\sum_{j'=1}^{j-1}\left\|\hat{\y}^{(j)}(t,\zeta) - \hat{\y}^{(j')}(t,\zeta)\right\|_2^2\\
\end{align*}
Using the fact that $\E_{\mathcal{M}_t}[\hat{\y}^{(j)}(t,0)] = \hat{\y}(t)$ we have
\begin{align*}
    \E_{\mathcal{M}_t}\left[\left\|\y - \hat{\y}^{(j)}(t,0)\right\|_2^2\right] = \|\y - \hat{\y}(t)\|_2^2 + \E_{\mathcal{M}_t}\left[\left\|\hat{\y}(t)  - \hat{\y}^{(j)}(t,0)\right\|_2^2\right]
\end{align*}
Then, using lemma \ref{subnetwork_deviation} to rewrite the last term in the equation above and plugging in gives
\begin{align*}
    \E_{\mathcal{M}_t}\left[\left\|\y - \hat{\y}(t+1)\right\|_2^2\right]& \leq \left\|\y - \hat{\y}(t)\right\|_2^2 -  \frac{\eta\lambda_0}{2m}\sum_{k=0}^{\zeta-1}\sum_{j=1}^m\left(1 - \frac{\eta\lambda_0}{2}\right)^k\E_{\mathcal{M}_t}\left[\left\|\y - \hat{\y}^{(j)}(t,0)\right\|_2^2\right] + \\
    &\quad\quad\quad\frac{1}{m^2}\sum_{j=1}^m\sum_{j'=1}^{j-1}\E_{\mathcal{M}_t}\left[\|\hat{\y}^{(j)}(t,0) - \hat{\y}^{(j')}(t,0)\|_2^2-\left\|\hat{\y}^{(j)}(t,\zeta) - \hat{\y}^{(j')}(t,\zeta)\right\|_2^2\right]
\end{align*}
We denote the last term within the equation above as $\iota_t$
\begin{align*}
    \iota_t = \frac{1}{m^2}\sum_{j=1}^m\sum_{j'=1}^{j-1}\E_{\mathcal{M}_t}\left[\|\hat{\y}^{(j)}(t,0) - \hat{\y}^{(j')}(t,0)\|_2^2-\left\|\hat{\y}^{(j)}(t,\zeta) - \hat{\y}^{(j')}(t,\zeta)\right\|_2^2\right]
\end{align*}
The following lemma shows the bound on $\iota_t$
\begin{lemma}
\label{error_term_bound}
As long as $\left\|\boldsymbol{\theta}_{t,k,r}^{(j)} - \boldsymbol{\theta}_{0,r}\right\|_2\leq R$ for all $t,k,j$, and the initialization satisfies $\|\boldsymbol{\Theta}_0\|_F\leq \sqrt{2d_1d} - \sqrt{d_1}R$, then we have
\begin{align*}
    \iota_t &\leq \frac{\eta\gamma\lambda_0}{2m}\sum_{j=1}^n\sum_{k=0}^{\zeta-1}\E_{\mathcal{M}_t}\left[\|\y - \hat{\y}^{(j)}(t,k)\|_2^2\right] + \frac{64\eta(m-1)^2\zeta\|\bar{\mathbf{A}}^2\|_{1,1}nd}{\gamma m^2d_1} + \iota_t'
\end{align*}
with $\E_{\boldsymbol{\Theta}_0,\mathbf{a}_r}\left[\iota_t'\right]=0$, for all $\gamma\in(0,1)$.
\end{lemma}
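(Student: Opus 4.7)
The plan is to telescope $\|\hat{\y}^{(j)}(t,0)-\hat{\y}^{(j')}(t,0)\|_2^2 - \|\hat{\y}^{(j)}(t,\zeta)-\hat{\y}^{(j')}(t,\zeta)\|_2^2$ across the $\zeta$ local iterations and then linearize each one-step increment using the gradient rule in~\eqref{GIST_procedure}. Writing $\Delta_k^{j,j'} = \hat{\y}^{(j)}(t,k) - \hat{\y}^{(j')}(t,k)$ and $\Lambda_k^{(j)} = \hat{\y}^{(j)}(t,k+1)-\hat{\y}^{(j)}(t,k)$, the telescoping identity gives
\begin{align*}
\|\Delta_0^{j,j'}\|_2^2 - \|\Delta_\zeta^{j,j'}\|_2^2 &= \sum_{k=0}^{\zeta-1}\Bigl(-2\bigl\langle \Delta_k^{j,j'},\Lambda_k^{(j)}-\Lambda_k^{(j')}\bigr\rangle - \bigl\|\Lambda_k^{(j)}-\Lambda_k^{(j')}\bigr\|_2^2\Bigr),
\end{align*}
so the main effort is bounding the cross inner products between $\Delta_k^{j,j'}$ and each $\Lambda_k^{(j)}$.

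Next, I would exploit the weight perturbation bound $\|\boldsymbol{\theta}_{t,k,r}^{(j)}-\boldsymbol{\theta}_{0,r}\|_2\le R$ to show that activation patterns $\mathbb{I}\{\langle \boldsymbol{\theta}_{t,k,r}^{(j)},\hat{\mathbf{x}}_{i'}\rangle\ge 0\}$ are preserved except for a tiny set of ``flipping'' neurons whose measure is controlled (as in Lemma~\ref{PD_GISTK}). This allows me to replace the nonlinear increment $\Lambda_k^{(j)}$ by its linear NTK surrogate $-\eta\,\mathbf{G}^{(j)}(t,t,k)\bigl(\hat{\y}^{(j)}(t,k)-\y\bigr)$ up to a higher-order residual; the residual's inner product against $\Delta_k^{j,j'}$ has mean zero under $\boldsymbol{\Theta}_0$ and $\mathbf{a}$ because $a_r$ is a symmetric sign and $\boldsymbol{\theta}_{0,r}$ is isotropic, and all such mean-zero pieces are collected into $\iota_t'$.

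Third, decomposing $\Delta_k^{j,j'} = (\y-\hat{\y}^{(j')}(t,k))-(\y-\hat{\y}^{(j)}(t,k))$ and applying Young's inequality with free parameter $\gamma\in(0,1)$, the dominant term $\eta\,\langle\Delta_k^{j,j'},\mathbf{G}^{(j)}(t,t,k)(\y-\hat{\y}^{(j)}(t,k))\rangle$ splits into a contribution absorbed as $\tfrac{\eta\gamma\lambda_0}{2}\|\y-\hat{\y}^{(j)}(t,k)\|_2^2$ (using $\lambda_{\min}(\mathbf{G}^{(j)})\ge \lambda_0/2$ from Lemma~\ref{PD_GISTK}) and an error term scaling like $\tfrac{1}{\gamma}\,\|\hat{\y}^{(j)}(t,k)-\hat{\y}(t)\|_2^2$. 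The latter is controlled by the sub-network deviation bound behind Lemma~\ref{subnetwork_deviation}, which contributes $O\!\bigl(\tfrac{(m-1)\|\bar{\mathbf{A}}^2\|_{1,1}d}{m^2 d_1}\bigr)$ in expectation. After summing over $k\in\{0,\dots,\zeta-1\}$, pairs $(j,j')$, and absorbing the $1/m^2$ prefactor, this yields exactly the constants $\tfrac{\eta\gamma\lambda_0}{2m}$ and $\tfrac{64\eta(m-1)^2\zeta\|\bar{\mathbf{A}}^2\|_{1,1}nd}{\gamma m^2 d_1}$ appearing in the statement; the negative quadratic term $-\|\Lambda_k^{(j)}-\Lambda_k^{(j')}\|_2^2$ in the telescoping identity only helps, so it can be discarded.

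The hardest step will be the careful accounting of $\iota_t'$. Every piece placed there must be shown to have zero expectation jointly over $\boldsymbol{\Theta}_0$ and $\mathbf{a}$; this relies on $a_r\in\{\pm 1\}$ being symmetric and independent of $\boldsymbol{\theta}_{0,r}$, so that odd-degree monomials in $a_r$ or cross-neuron products $a_r a_{r'}$ with $r\neq r'$ vanish on average. The remaining subtlety is that $\mathcal{M}_t$ is averaged out before $\boldsymbol{\Theta}_0$ and $\mathbf{a}$, so I must verify that conditioning on the mask does not break the sign/rotational symmetries used in these cancellations. Once this is handled, the rest is bookkeeping with Properties~\ref{bounded_norm}--\ref{none-parallel} and Assumption~\ref{main_assm} to collect the numerical constants.
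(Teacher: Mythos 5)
There is a genuine gap in your argument, and it sits exactly where the $1/d_1$ factor in the additive term must come from. You claim that the error term produced by Young's inequality, $\tfrac{1}{\gamma}\|\hat{\y}^{(j)}(t,k)-\hat{\y}(t)\|_2^2$, is controlled by the sub-network deviation bound and contributes $O\bigl((m-1)\|\bar{\mathbf{A}}^2\|_{1,1}d/(m^2 d_1)\bigr)$ in expectation. That scaling is false: the deviation of a sampled sub-GCN from the full network satisfies $\E_{\mathcal{M}_t}\left[\|\hat{\y}(t)-\hat{\y}^{(j)}(t,0)\|_2^2\right]\leq 4n(m-1)/m^2$, with \emph{no} decay in $d_1$ (the per-neuron mask variance $\tfrac{1}{m}(1-\tfrac{1}{m})$ is summed over $d_1$ neurons against the $1/d_1$ normalization, so the width cancels). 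With the correct bound, your Young split yields an additive error of order $\eta\zeta n(m-1)\|\bar{\mathbf{A}}^2\|_{1,1}/(\gamma m^2)$, which does not vanish with overparameterization; this is too weak for the lemma and would destroy the shrinking error neighborhood in Theorem \ref{main_gist_conv}. Relatedly, your sketch makes no use of the hypothesis $\|\boldsymbol{\Theta}_0\|_F\leq\sqrt{2d_1d}-\sqrt{d_1}R$, which is the only place the factor $\sqrt{d}$ can enter---a further sign the accounting is off. A secondary issue: the residual of your NTK linearization (the contribution of neurons whose activation pattern flips) is not mean-zero over $\boldsymbol{\Theta}_0,\mathbf{a}$, since the flipping sets and the trajectory are correlated with both; in the paper it is bounded in magnitude (the term $I^{(j)}_{i,2}$ in the proof of Lemma \ref{local_converge}), not cancelled in expectation.

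The paper obtains the $1/d_1$ by a different route. It factors each coordinate as a difference of squares, $(\hat{y}^{(j)}_i(t,0)-\hat{y}^{(j')}_i(t,0))^2-(\hat{y}^{(j)}_i(t,\zeta)-\hat{y}^{(j')}_i(t,\zeta))^2=\beta^{(j,j')}_{i,1}\beta^{(j,j')}_{i,2}$, expands the product over pairs of neurons $(r,r')$, and places every off-diagonal term $r\neq r'$ into $\iota_t'$ (zero mean by independence of $a_r$ and $a_{r'}$). Only the $d_1$ diagonal terms survive, and each pairs the ``sum'' factor, bounded per neuron by $2\|\boldsymbol{\theta}_{0,r}\|_2+2R$ (whence the Frobenius-norm hypothesis and the $\sqrt{d}$), against the ``difference'' factor, bounded per neuron by the total local weight motion $O\bigl(\eta\sqrt{n/d_1}\sum_k\|\y-\hat{\y}^{(j)}(t,k)\|_2\bigr)$; the two $1/\sqrt{d_1}$ normalizations against a single sum over $r$ produce the decisive $\sqrt{d/d_1}$, and only then is AM--GM with the free parameter $\gamma$ applied. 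If you wish to keep your telescoping structure, you would need to carry out this same diagonal/off-diagonal neuron decomposition inside each inner product $\langle\Delta_k^{j,j'},\Lambda_k^{(j)}-\Lambda_k^{(j')}\rangle$ rather than applying Cauchy--Schwarz or Young at the level of whole vectors.
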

Therefore, we can derive the following using lemma \ref{error_term_bound}
\begin{align*}
    \E_{\mathcal{M}_t}\left[\left\|\y - \hat{\y}(t+1)\right\|_2^2\right]& \leq \left\|\y - \hat{\y}(t)\right\|_2^2 -  \frac{\eta\lambda_0}{2m}\sum_{k=0}^{\zeta-1}\sum_{j=1}^m\left(1 - \frac{\eta\lambda_0}{2}\right)^k\E_{\mathcal{M}_t}\left[\left\|\y - \hat{\y}^{(j)}(t,0)\right\|_2^2\right] + \\
    &\quad\quad\quad\frac{\eta\gamma\lambda_0}{2m}\sum_{j=1}^n\sum_{k=0}^{\zeta-1}\E_{\mathcal{M}_t}\left[\|\y - \hat{\y}^{(j)}(t,k)\|_2^2\right] + \frac{64\eta(m-1)^2\zeta\|\bar{\mathbf{A}}^2\|_{1,1}nd}{\gamma m^2d_1} + \iota_t'\\
    & \leq \left\|\y - \hat{\y}(t)\right\|_2^2 -  \frac{(1-\gamma)\eta\lambda_0}{2m}\sum_{k=0}^{\zeta-1}\sum_{j=1}^m\left(1 - \frac{\eta\lambda_0}{2}\right)^k\E_{\mathcal{M}_t}\left[\left\|\y - \hat{\y}^{(j)}(t,0)\right\|_2^2\right] +\\
    &\quad\quad\quad\frac{64\eta(m-1)^2\zeta\|\bar{\mathbf{A}}^2\|_{1,1}nd}{\gamma m^2d_1}+\iota_t'\\
    & \leq \left\|\y - \hat{\y}(t)\right\|_2^2 -  \frac{(1-\gamma)\eta\lambda_0}{2}\sum_{k=0}^{\zeta-1}\left(1 - \frac{\eta\lambda_0}{2}\right)^k\left\|\y - \hat{\y}(t)\right\|_2^2 +\\
    &\quad\quad\quad\frac{64\eta(m-1)^2\zeta\|\bar{\mathbf{A}}^2\|_{1,1}nd}{\gamma m^2d_1}+\iota_t'\\
    & = \left(\gamma + (1-\gamma)\left(1 - \frac{\eta\lambda_0}{2}\right)^\zeta\right)\|\y - \hat{\y}(t)\|_2^2 +\frac{64\eta(m-1)^2\zeta\|\bar{\mathbf{A}}^2\|_{1,1}nd}{\gamma m^2d_1}+\iota_t'
\end{align*}

Starting from here, we use $\alpha$ to denote the global convergence rate
\begin{align*}
    \alpha = (1-\gamma)\left(1 - \left(1 - \frac{\eta\lambda_0}{2}\right)^\zeta\right)
\end{align*}Since $\zeta\geq 1$, we have that $\alpha\geq \frac{\eta\lambda_0}{2}(1-\gamma)$.
Then, the convergence rate above yields the following
\begin{align*}
    \E_{[\mathcal{M}_{t-1}],\boldsymbol{\Theta}_0,\mathbf{a}}\left[\left\|\y - \hat{\y}(t)\right\|_2^2\right]& \leq\E_{\boldsymbol{\Theta}_0,\mathbf{a}}\left[\|\y - \hat{\y}(0)\|_2^2\right] + O\left(\frac{(m-1)^2\zeta\|\bar{\mathbf{A}}^2\|_{1,1}nd}{\gamma m^2d_1}\right)
\end{align*}
Lastly, we provide a bound on weight perturbation using overparameterization. In particular, we can show that hypothesis \ref{hypothesis1} holds for iteration $t+1$
\begin{align*}
    \|\boldsymbol{\theta}_{t+1,r} - \boldsymbol{\theta}_{0,r}\|_2 + \frac{4T\eta\zeta}{\delta\alpha}\sqrt{\frac{n}{d_1}}\E_{[\mathcal{M}_{t}],\boldsymbol{\Theta}_0,\mathbf{a}}\left[\|\y - \hat{\y}(t+1)\|_2^2\right]^\frac{1}{2} + (T-t - 1)B\leq R
\end{align*}
under the assumption that it holds in iteration $t$
\begin{align*}
    \|\boldsymbol{\theta}_{t,r} - \boldsymbol{\theta}_{0,r}\|_2 + \frac{4T\eta\zeta}{\delta\alpha}\sqrt{\frac{n}{d_1}}\E_{[\mathcal{M}_{t-1}],\boldsymbol{\Theta}_0,\mathbf{a}}\left[\|\y - \hat{\y}(t)\|_2^2\right]^\frac{1}{2} + (T - t)B\leq R
\end{align*}
and given the global convergence result
\begin{align*}
    \E_{\mathcal{M}_t}\left[\left\|\y - \hat{\y}(t+1)\right\|_2^2\right]& \leq \left(1 - \alpha\right)\|\y - \hat{\y}(t)\|_2^2 + \frac{64\eta(m-1)^2\zeta\|\bar{\mathbf{A}}^2\|_{1,1}nd}{\gamma m^2d_1} + \iota_t'
\end{align*}
Thus, it suffices to show that
\begin{align*}
    \|\boldsymbol{\theta}_{t+1,r} - \boldsymbol{\theta}_{0,r}\|_2 - \|\boldsymbol{\theta}_{t,r} - \boldsymbol{\theta}_{0,r}\|_2 & \leq \left(\E_{[\mathcal{M}_{t-1}],\boldsymbol{\Theta}_0,\mathbf{a}}\left[\|\y - \hat{\y}(t)\|_2^2\right]^\frac{1}{2} - \E_{[\mathcal{M}_{t}],\boldsymbol{\Theta}_0,\mathbf{a}}\left[\|\y - \hat{\y}(t+1)\|_2^2\right]^\frac{1}{2}\right) \cdot \\
    &\quad\quad\quad\frac{4T\eta\zeta}{\delta\alpha}\sqrt{\frac{n}{d_1}} + B
\end{align*}
Using Jensen's inequality, we derive the following
\begin{align*}
    \E_{[\mathcal{M}_{t}],\boldsymbol{\Theta}_0,\mathbf{a}}\left[\|\y - \hat{\y}(t+1)\|_2^2\right]^\frac{1}{2} & \leq\left(\left(1-\alpha\right)\E_{[\mathcal{M}_{t-1}],\boldsymbol{\Theta}_0,\mathbf{a}}\left[\|\y - \hat{\y}(t)\|_2^2\right] + \frac{64\eta(m-1)^2\zeta\|\bar{\mathbf{A}}^2\|_{1,1}nd}{\gamma m^2d_1}\right)^{\frac{1}{2}}\\
    & \leq \left(1 - \frac{\alpha}{2}\right)\E_{[\mathcal{M}_{t-1}],\boldsymbol{\Theta}_0,\mathbf{a}}\left[\|\y - \hat{\y}(t)\|_2^2\right]^{\frac{1}{2}} + \frac{8(m-1)}{m}\sqrt{\frac{\eta\zeta\|\bar{\mathbf{A}}^2\|_{1,1}nd}{\gamma d_1}}
\end{align*}
It then suffices to show that
\begin{align*}
    \|\boldsymbol{\theta}_{t+1,r} - \boldsymbol{\theta}_{0,r}\|_2 - \|\boldsymbol{\theta}_{t,r} - \boldsymbol{\theta}_{0,r}\|_2 & \leq \frac{2T\eta\zeta}{\delta}\sqrt{\frac{n}{d_1}} \E_{[\mathcal{M}_{t}],\boldsymbol{\Theta}_0,\mathbf{a}}\left[\|\y - \hat{\y}(t+1)\|_2^2\right]^\frac{1}{2} + \\
    &\quad\quad\quad B - \frac{8(m-1)}{m}\sqrt{\frac{\eta\zeta\|\bar{\mathbf{A}}^2\|_{1,1}nd}{\gamma d_1}}\\
    & = \frac{2T\eta\zeta}{\delta}\sqrt{\frac{n}{d_1}} \E_{[\mathcal{M}_{t}],\boldsymbol{\Theta}_0,\mathbf{a}}\left[\|\y - \hat{\y}(t+1)\|_2^2\right]^\frac{1}{2} + \\
    &\quad\quad\quad\eta\zeta n\sqrt{\frac{8(m-1)T}{md_1\delta}}
\end{align*}
Fix $r\in[d_1]$ and let $\hat{j}$ be the index of the sub-GCN in which $r$ is active. Indeed, we have
\begin{align*}
    \|\boldsymbol{\theta}_{t+1,r} - \boldsymbol{\theta}_{0,r}\|_2 & \leq \|\boldsymbol{\theta}_{t,r} - \boldsymbol{\theta}_{0,r}\|_2 + \|\boldsymbol{\theta}_{t+1,r}-\boldsymbol{\theta}_{0,r}\|_2\\
    & =\|\boldsymbol{\theta}_{t,r} - \boldsymbol{\theta}_{0,r}\|_2 + \|\boldsymbol{\theta}_{t,\zeta,r}^{(\hat{j})}-\boldsymbol{\theta}_{t,r}\|_2\\
    & \leq \|\boldsymbol{\theta}_{t,r} - \boldsymbol{\theta}_{0,r}\|_2 + \frac{2T\eta\zeta}{\delta}\sqrt{\frac{n}{d_1}}\E_{[\mathcal{M}_{t-1}],\boldsymbol{\Theta}_0,\mathbf{a}}\left[\|\y - \hat{\y}(t)\|_2^2\right]^{\frac{1}{2}} + \eta\zeta n\sqrt{\frac{8(m-1)T}{md_1\delta}}
\end{align*}
What remains is to prove hypothesis \ref{hypothesis1} for $t = 0$. In that case, we need
\begin{align*}
    \frac{4T\eta\zeta}{\delta\alpha}\sqrt{\frac{n}{d_1}}\E_{\boldsymbol{\Theta}_0,\mathbf{a}}\left[\|\y - \hat{\y}(0)\|_2^2\right]^\frac{1}{2} + B\leq R = O\left(\frac{\lambda_0}{n}\right)
\end{align*}
Finally, we have the following lemma bounding $\E_{\boldsymbol{\Theta}_0,\mathbf{a}}\left[\|\y - \hat{\y}(0)\|_2^2\right]$
\begin{lemma}
\label{init_error_bound}
It holds that
\begin{align*}
    \E\left[\|\y - \hat{\y}(0)\|_2^2\right]\leq C^2n + \frac{d}{m^2}\|\bar{\mathbf{A}}^2\|_{1,1}
\end{align*}
\end{lemma}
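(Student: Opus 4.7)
The plan is to split the mean squared error at initialization into the target's norm and the expected squared norm of the initial prediction, and bound each piece separately, exploiting the fact that the second-layer weights $\mathbf{a}$ are symmetric about zero and independent of the first-layer initialization $\boldsymbol{\Theta}_0$.

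First, I would observe that under the standard initialization in which each $a_r$ is drawn i.i.d.\ from a symmetric distribution (e.g.\ Rademacher or standard Gaussian) independent of $\boldsymbol{\Theta}_0$, every coordinate of $\hat{\y}(0)$ is a linear combination of the $a_r$'s with random but $\boldsymbol{\Theta}_0$-measurable coefficients, so $\E[\hat{\y}(0)]=\mathbf{0}$. Hence the cross term in the expansion vanishes and
\begin{align*}
    \E\bigl[\|\y-\hat{\y}(0)\|_2^2\bigr] = \|\y\|_2^2 + \E\bigl[\|\hat{\y}(0)\|_2^2\bigr].
\end{align*}
The first term is immediately bounded by $C^2 n$ via $|y_i|\le C$ from Assumption \ref{data_asump}.

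For the second term, I would expand coordinate-wise using the closed form
\begin{align*}
    \hat{y}_i(0) = \frac{1}{m\sqrt{d_1}}\sum_{i'=1}^n\sum_{r=1}^{d_1}\bar{\mathbf{A}}_{ii'}\,a_r\,\sigma\bigl(\inner{\hat{\mathbf{x}}_{i'}}{\boldsymbol{\theta}_{0,r}}\bigr),
\end{align*}
square it, and use $\E[a_r a_{r'}]=\delta_{rr'}$ to kill all cross terms in the neuron index. The residual inner expectation $\E[\sigma(\inner{\hat{\mathbf{x}}_{i'}}{\boldsymbol{\theta}_{0,r}})\,\sigma(\inner{\hat{\mathbf{x}}_{i''}}{\boldsymbol{\theta}_{0,r}})]$ I would bound crudely via $|\sigma(z_1)\sigma(z_2)|\le|z_1||z_2|\le \|\hat{\mathbf{x}}_{i'}\|\|\hat{\mathbf{x}}_{i''}\|\|\boldsymbol{\theta}_{0,r}\|^2$, then invoke Property \ref{bounded_norm} to drop the input norms by $1$ and the Gaussian second-moment identity $\E[\|\boldsymbol{\theta}_{0,r}\|^2]=d$. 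Since $\bar{\mathbf{A}}$ has non-negative entries, this leaves
\begin{align*}
    \E[\hat{y}_i(0)^2] \le \frac{d}{m^2}\biggl(\sum_{i'=1}^n\bar{\mathbf{A}}_{ii'}\biggr)^2.
\end{align*}

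Summing over $i$ gives $\E[\|\hat{\y}(0)\|_2^2]\le \frac{d}{m^2}\|\bar{\mathbf{A}}\mathbf{1}\|_2^2$, and since $\bar{\mathbf{A}}$ is symmetric, $\|\bar{\mathbf{A}}\mathbf{1}\|_2^2 = \mathbf{1}^\top\bar{\mathbf{A}}^2\mathbf{1}=\|\bar{\mathbf{A}}^2\|_{1,1}$; combining with the $C^2 n$ bound on $\|\y\|_2^2$ produces the claimed inequality. I do not expect any real obstacle: the only delicacy is picking the ReLU upper bound at the right level of looseness—using $|\sigma(z)|\le\|\hat{\mathbf{x}}\|\|\boldsymbol{\theta}\|$ together with $\E\|\boldsymbol{\theta}_{0,r}\|^2=d$ is what produces the announced factor $d$, whereas the tighter $\E[\sigma(z)^2]\le \|\hat{\mathbf{x}}\|_2^2\le 1$ would give a sharper constant but not one of the form stated in the lemma.
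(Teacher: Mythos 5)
Your proposal is correct and follows essentially the same route as the paper's proof: drop the cross term using $\E[\hat{y}_i(0)]=0$, bound $\|\y\|_2^2$ by $C^2n$, kill the off-diagonal neuron terms via $\E[a_{r_1}a_{r_2}]=\delta_{r_1r_2}$, and bound the ReLU product by $\|\hat{\mathbf{x}}_{i_1}\|\|\hat{\mathbf{x}}_{i_2}\|\|\boldsymbol{\theta}_{0,r}\|_2^2\le\|\boldsymbol{\theta}_{0,r}\|_2^2$ with $\E[\|\boldsymbol{\theta}_{0,r}\|_2^2]=d$, yielding $\frac{d}{m^2}\mathbf{1}^\top\bar{\mathbf{A}}^2\mathbf{1}=\frac{d}{m^2}\|\bar{\mathbf{A}}^2\|_{1,1}$. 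No substantive differences.
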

Thus, the bound above boils down to
\begin{align*}
    \frac{T\eta\zeta n}{\delta\alpha\sqrt{d_1}} = O\left(\frac{\lambda_0}{n}\right)\\
    \frac{T\eta\zeta}{\delta\alpha m}\sqrt{\frac{nd}{d_1}}\|\bar{\mathbf{A}}^2\|_{1,1}^{\frac{1}{2}}= O\left(\frac{\lambda_0}{n}\right)\\
    \frac{8(m-1)T}{m}\sqrt{\frac{\eta\zeta\|\bar{\mathbf{A}}^2\|_{1,1}nd}{\gamma d_1}} = O\left(\frac{\lambda_0}{n}\right)\\
    \eta\zeta n\sqrt{\frac{8(m-1)T^3}{md_1\delta}} = O\left(\frac{\lambda_0}{n}\right)
\end{align*}
Plugging in the value of $B$ and using $\alpha \geq \frac{\eta\lambda_0}{2}(1-\gamma)$ to solve for $d_1$ gives
\begin{align*}
    d_1 = \Omega\left(\frac{n^3\zeta^2T^2}{\delta^2\gamma(1-\gamma)^2\lambda_0^4}\left(n + \frac{d}{m^2}\|\bar{\mathbf{A}}^2\|_{1,1}\right)\right)
\end{align*}

\subsection{Proof of Lemmas}
We now provide all proofs for the major properties and lemmas utilized in deriving the convergence results for \texttt{GIST}.
\begin{proof}[Proof of Property \ref{bounded_norm}]
Under assumption \ref{balanced_degree}, we have that for all $i,i'\in[n]$
\begin{align*}
    \left(\frac{1-\epsilon}{1+\epsilon}\right)^2\leq\frac{\mathbf{D}_{ii}}{\mathbf{D}_{i'i'}}\leq \left(\frac{1 + \epsilon}{1 - \epsilon}\right)^2
\end{align*}
Therefore, we can write
\begin{align*}
    \|\hat{\mathbf{x}}_i\|_2 & = \left\|\sum_{i'=1}^n\bar{\mathbf{A}}_{ii'}\mathbf{x}_i\right\|_2\\
    & = \left\|\sum_{i'=1}^n\left(\mathbf{I} + \mathbf{D}^{-\frac{1}{2}}\mathbf{A}\mathbf{D}^{-\frac{1}{2}}\right)_{ii'}\mathbf{x}_i\right\|_2\\
    & = \left\|\mathbf{x}_i + \mathbf{D}_{ii}^{-\frac{1}{2}}\sum_{i'\neq i}\mathbf{A}_{ii'}\mathbf{D}^{-\frac{1}{2}}_{i'i'}\mathbf{x}_i\right\|_2\\
    & \leq \frac{1-\epsilon}{2} + \mathbf{D}_{ii}^{-\frac{1}{2}}\sum_{i'\neq i}\mathbf{A}_{ii'}\mathbf{D}^{-\frac{1}{2}}_{i'i'}\left(\frac{1-\epsilon}{2}\right)\\
    & \leq \frac{1-\epsilon}{2} + \mathbf{D}_{ii}^{-\frac{1}{2}}\sum_{i'\neq i}\mathbf{A}_{ii'}\mathbf{D}^{-\frac{1}{2}}_{ii}\left(\frac{1+\epsilon}{1-\epsilon}\right)\left(\frac{1-\epsilon}{2}\right)\\
    & = 1
\end{align*}
where the first inequality follows from assumption \ref{data_asump}.
\end{proof}

\begin{proof}[Proof of Lemma \ref{PD_GISTK}]
Fix some $R > 0$. Following Theorem 2 by \cite{liao2021convergence}, we have that with probability at least $1 - 2n^2e^{-2d_1t^2}$ it holds that
\begin{align*}
    \|\h^{(j)}(t,0,0) - \h^\infty\|_2 \leq nt
\end{align*}
and with probability at least $1 - n^2e^{-\frac{d_1R}{10m}}$ it holds that
\begin{align*}
    \|\h^{(j)}(t,t',k) - \h^{(j)}(k,0,0)\|_2 \leq \frac{3nR}{m}
\end{align*}
Choosing $t = \frac{\lambda_0}{16n}$ and $R = \frac{\lambda_0}{48n}$ gives
\begin{align*}
    \|\mathbf{G}^{(j)}(t,t',k) - \mathbf{G}^\infty\|_2 \leq \|\mathbf{\bar{A}}\|^2\|\h^{(j)}(t,t',k) - \h^\infty\|_2 = \|\mathbf{\bar{A}}\|^2\cdot\frac{\lambda_0}{8}\leq \frac{\lambda_0}{2}
\end{align*}
with probability at least $1 - n^2\left(2\exp\left(-\frac{d_1\lambda_0^2}{128n^2}\right) + \exp\left(-\frac{d_1\lambda_0}{480mn}\right)\right)$. Taking a union bound over all values of $t'$ and $j$, then plugging in the requirement $d_1 = \Omega\left(\lambda_0^{-1}n^2\log\sfrac{Tmn}{\delta}\right)$ gives the desired result.
\end{proof}

\begin{proof}[Proof of Lemma \ref{local_converge}]
We first bound the norm of the gradient as
\begin{align*}
    \left\|\frac{\partial L(\boldsymbol{\Theta}_{t,k}^{(j)})}{\partial\boldsymbol{\theta}_r}\right\|_2 \leq \frac{1}{\sqrt{d_1}}\sum_{i=1}^n\sum_{i'=1}^n\bar{\mathbf{A}}_{ii'}\left|\hat{y}_i^{(j)}(t,k) - y_i\right| = \frac{1}{\sqrt{d_1}}\|\bar{\mathbf{A}}\Delta\|_1\leq\sqrt{\frac{n}{d_1}}\|\bar{\mathbf{A}}\|\|\y - \hat{\y}^{(j)}(t,k)\|_2
\end{align*}
where here $\Delta = \left[\left|\hat{y}_1^{(j)}(t,k) - y_1\right|,\dots,\left|\hat{y}_n^{(j)}(t,k) - y_n\right|\right]$, and for the last inequality we use $\|\Delta\|_2 = \|\y - \hat{\y}^{(j)}(t,k)\|_2$.
Then, following \cite{song2020quadratic}, we first fix $R = \frac{\lambda_0}{\|\bar{\mathbf{A}}\|^2}$, and denote
\begin{align*}
    S_i & = \{r\in[m]: \neg A_{ir}\}\\
    A_{ir} & = \{\exists\boldsymbol{\theta}:\|\boldsymbol{\theta} - \boldsymbol{\theta}_{0,r}\|_2\leq R,\I\{\inner{\boldsymbol{\theta}}{\mathbf{\hat{x}}_i}\geq 0\}\neq \I\{\inner{\boldsymbol{\theta}_{0,r}}{\mathbf{\hat{x}}_i}\geq 0\}\}\\
    S_i^\perp & = [m]\setminus S_i\\
    \hat{s} & = \max_{i\in[n]}|S_i^\perp|
\end{align*}
Lemma 16 from \cite{liao2021convergence} shows that
\begin{align*}
    \mathcal{P}\left(|S_i^\perp|\leq4d_1R\right)\geq \exp(-d_1R)
\end{align*}
Throughout the proof, we let $\hat{s} = 4d_1R$. Moreover, we define 
\begin{align*}
    \h^\perp(t,t',k) = \frac{1}{d}\inner{\mathbf{\hat{x}}_i}{\mathbf{\hat{x}}_{i'}}\sum_{r\in S_i^\perp}\mathcal{M}_{t,r}^{(j)}\I\{\inner{\boldsymbol{\theta}_{t',k,r}^{(j)}}{\mathbf{\hat{x}}_i}\geq 0; \inner{\boldsymbol{\theta}_{t',k,r}^{(j)}}{\mathbf{\hat{x}}_{i'}}\geq 0\}
\end{align*}
and let $\mathbf{G}^\perp(t,t',k) = \bar{\mathbf{A}}\h^\perp(t,t',k)\bar{\mathbf{A}}$. Then, we have
\begin{align*}
    \left\|\h^\perp(t,t',k)\right\|_2^2 & \leq \left\|\h^\perp(t,t',k)\right\|_F^2\\
    & \leq \frac{1}{d_1^2}\sum_{i=1}^n\sum_{i'=1}^n\sum_{r\in S_i^\perp}\sum_{r'\in S_i^\perp}\I\left\{\inner{\boldsymbol{\theta}_{t',k,r}^{(j)}}{\mathbf{\hat{x}}_i}\geq 0; \inner{\boldsymbol{\theta}_{t',k,r}^{(j)}}{\mathbf{\hat{x}}_{i'}}\geq 0\right\}\cdot\\
    &\quad\quad\quad\I\left\{\inner{\boldsymbol{\theta}_{t',k,r'}^{(j)}}{\mathbf{\hat{x}}_i}\geq 0; \inner{\boldsymbol{\theta}_{t',k,r'}^{(j)}}{\mathbf{\hat{x}}_{i'}}\geq 0\right\}\\
    & \leq \frac{n^2\hat{s}^2}{d_1^2} = 16n^2R^2
\end{align*}
which yields the following
\begin{align*}
    \left\|\mathbf{G}^{(j)\perp}(t,t',k)\right\|\leq \|\bar{\mathbf{A}}\|^2\|\h^{(j)\perp}(t,t',k)\| = 16nR
\end{align*}
We then expand the loss at iteration $(t,k+1)$ as
\begin{align*}
    \left\|\y - \hat{\y}^{(j)}(t,k+1)\right\|_2^2 &= \left\|\y - \hat{\y}^{(j)}(t,k)\right\|_2^2 - 2\inner{\y - \hat{\y}^{(j)}(t,k)}{\hat{\y}^{(j)}(t,k+1) - \hat{\y}^{(j)}(t,k)} + \\
    &\quad\quad\quad\left\|\hat{\y}^{(j)}(t,k+1) - \hat{\y}^{(j)}(t,k)\right\|_2^2
\end{align*}
Starting to analyze the second term, we note that
\begin{align*}
    \hat{y}^{(j)}_i(t,k+1) - \hat{y}^{(j)}_i(t,k) = \frac{1}{\sqrt{d_1}}\sum_{i'=1}^n\sum_{r=1}^{d_1}\bar{\mathbf{A}}_{ii'}\mathcal{M}_{t,r}^{(j)}a_r\left(\sigma\left(\inner{\boldsymbol{\theta}_{t,k+1,r}^{(j)}}{\mathbf{\hat{x}}_{i'}}\right) - \sigma\left(\inner{\boldsymbol{\theta}_{t,k,r}^{(j)}}{\mathbf{\hat{x}}_{i'}}\right)\right)
\end{align*}
We decompose $\hat{y}^{(j)}_i(t,k+1) - \hat{y}^{(j)}_i(t,k) = I_{t,k,1}^{(j)} + I_{t,k,2}^{(j)}$ with
\begin{align*}
    I_{i,1}^{(j)}(t,k) = \frac{1}{\sqrt{d_1}}\sum_{i'=1}^n\sum_{r\in S_{i'}}\bar{\mathbf{A}}_{ii'}\mathcal{M}_{t,r}^{(j)}a_r\left(\sigma\left(\inner{\boldsymbol{\theta}_{t,k+1,r}^{(j)}}{\mathbf{\hat{x}}_{i'}}\right) - \sigma\left(\inner{\boldsymbol{\theta}_{t,k,r}^{(j)}}{\mathbf{\hat{x}}_{i'}}\right)\right)\\
    I_{i,2}^{(j)}(t,k) = \frac{1}{\sqrt{d_1}}\sum_{i'=1}^n\sum_{r\in S_{i'}^{\perp}}\bar{\mathbf{A}}_{ii'}\mathcal{M}_{t,r}^{(j)}a_r\left(\sigma\left(\inner{\boldsymbol{\theta}_{t,k+1,r}^{(j)}}{\mathbf{\hat{x}}_{i'}}\right) - \sigma\left(\inner{\boldsymbol{\theta}_{t,k,r}^{(j)}}{\mathbf{\hat{x}}_{i'}}\right)\right)
\end{align*}
where $I_{i,1}^{(j)}(t,k)$ can be further written as
\begin{align*}
    I_{i,1}^{(j)}(t,k) & = \frac{1}{\sqrt{d_1}}\sum_{i'=1}^n\sum_{r\in S_{i'}}\bar{\mathbf{A}}_{ii'}\mathcal{M}_{t,r}^{(j)}a_r\inner{\boldsymbol{\theta}_{t,k+1,r}^{(j)} - \boldsymbol{\theta}_{t,k,r}^{(j)}}{\mathbf{\hat{x}}_{i'}} \I\left\{\inner{\boldsymbol{\theta}_{t,k,r}^{(j)}}{\hat{\mathbf{x}}_{i'}}\geq 0\right\}\\
    & = -\frac{\eta}{\sqrt{d_1}}\sum_{i'=1}^n\sum_{r\in S_{i'}}\bar{\mathbf{A}}_{ii'}\mathcal{M}_{t,r}^{(j)}a_r\inner{\frac{\partial L\left(\boldsymbol{\Theta}^{(j)}_{t,k}\right)}{\partial\boldsymbol{\theta}_r}}{\mathbf{\hat{x}}_{i'}} \I\left\{\inner{\boldsymbol{\theta}_{t,k,r}^{(j)}}{\hat{\mathbf{x}}_{i'}}\geq 0\right\}\\
    & = \frac{\eta}{d_1}\sum_{i'=1}^n\sum_{i_1=1}^n\sum_{i_1'=1}^n\sum_{r\in S_i}\bar{\mathbf{A}}_{ii'}\bar{\mathbf{A}}_{i_1i_1'}\mathcal{M}^{(j)}_{t,r}\left(y_{i_1} - \hat{y}^{(j)}_{i_1}(t,k)\right)\inner{\hat{\mathbf{x}}_{i_1'}}{\hat{\mathbf{x}}_{i'}}\cdot\\
    &\quad\quad\quad\I\left\{\inner{\boldsymbol{\theta}_{t,k,r}^{(j)}}{\hat{\mathbf{x}}_{i'}}\geq 0;\inner{\boldsymbol{\theta}_{t,k,r}^{(j)}}{\hat{\mathbf{x}}_{i_1'}}\geq 0\right\}\\
    & = \eta\sum_{i'=1}^n\sum_{i_1=1}^n\sum_{i_1'=1}^n\bar{\mathbf{A}}_{ii'}\bar{\mathbf{A}}_{i_1i_1'}\left(y_{i_1} - \hat{y}^{(j)}_{i_1}(t,k)\right)\left(\h^{(j)}(t,t,k)_{i'i_1'} - \h^{(j)\perp}(t,t,k)_{i'i_1'}\right)
\end{align*}
Thus for $\mathbf{I}_{i,1}^{(j)}(t,k) = [I_{1,1}^{(j)}(t,k),\dots,I_{n,1}^{(j)}(t,k)]$ we have
\begin{align*}
    \mathbf{I}_{i,1}^{(j)}(t,k) & = \eta\bar{\mathbf{A}}\left(\h^{(j)}(t,t,k) - \h^{(j)\perp}(t,t,k)\right)\bar{\mathbf{A}}\left(\y - \hat{\y}^{(j)}(t,k)\right)\\
    & = \eta\left(\mathbf{G}^{(j)}(t,t,k) - \mathbf{G}^{(j)\perp}(t,t,k)\right)\left(\y - \hat{\y}^{(j)}(t,k)\right)\\
    & \geq \eta\left(\frac{\lambda_0}{2} - \|\mathbf{G}^{(j)\perp}(t,t,k)\|_2\right)\left\|\y - \hat{\y}^{(j)}(t,k)\right\|_2^2
\end{align*}
For $I_{i,2}^{(j)}(t,k)$ we have
\begin{align*}
    \left|I_{i,2}^{(j)}(t,k)\right| & \leq \frac{1}{\sqrt{d_1}}\sum_{i'=1}^n\sum_{r\in S_{i'}^\perp}\bar{\mathbf{A}}_{ii'}\left|\sigma\left(\inner{\boldsymbol{\theta}_{t,k+1,r}^{(j)}}{\mathbf{\hat{x}}_{i'}}\right) - \sigma\left(\inner{\boldsymbol{\theta}_{t,k,r}^{(j)}}{\mathbf{\hat{x}}_{i'}}\right)\right|\\
    & \leq \frac{\eta}{\sqrt{d_1}}\sum_{i'=1}^n\sum_{r\in S_{i'}^\perp}\bar{\mathbf{A}}_{ii'}\left\|\frac{\partial L(\boldsymbol{\Theta}_{t,k}^{(j)})}{\partial\boldsymbol{\theta}_r}\right\|_2\\
    & \leq \frac{\eta\hat{s}}{d_1}\|\bar{\mathbf{A}}\|\|\y - \hat{\y}^{(j)}(t,k)\|_2\sum_{i'=1}^n\bar{\mathbf{A}}_{ii'}\\
\end{align*}
which yields the following
\begin{align*}
    \left|\inner{\y - \hat{\y}^{(j)}(t,k)}{\mathbf{I}_{i,2}^{(j)}(t,k)}\right|& \leq \sum_{i=1}^n\left|y_i - y_i^{(j)}(t,k)\right|\cdot\left|I_{i,2}^{(j)}(t,k)\right|\\
    &\leq \frac{\eta\hat{s}}{d_1}\|\bar{\mathbf{A}}\|\|\y - \hat{\y}^{(j)}(t,k)\|_2\sum_{i,i'=1}^n\bar{\mathbf{A}}_{ii'}\left|y_i - y_i^{(j)}(t,k)\right|\\
    & \leq\frac{\eta\hat{s}}{d_1}\left\|\bar{\mathbf{A}}\right\|^2\left\|\y - \hat{\y}^{(j)}(t,k)\right\|_2^2\\
    & \leq \frac{4\eta\hat{s}}{d_1}\left\|\y - \hat{\y}^{(j)}(t,k)\right\|_2^2\\
\end{align*}
Lastly, we have
\begin{align*}
    \left(y_i - \hat{y}_i^{(j)}(t,k)\right)^2 & = \frac{1}{d_1}\sum_{i'=1}^n\sum_{i''=1}^n\sum_{r=1}^{d_1}\sum_{r'=1}^{d_1}\bar{\mathbf{A}}_{ii'}\bar{\mathbf{A}}_{ii''}\left\|\frac{\partial L(\boldsymbol{\Theta}_{t,k}^{(j)})}{\partial\boldsymbol{\theta}_r}\right\|_2\cdot\left\|\frac{\partial L(\boldsymbol{\Theta}_{t,k}^{(j)})}{\partial\boldsymbol{\theta}_{r'}}\right\|_2\\
    & \leq \eta^2\|\mathbf{\bar{A}}\|^2\|\y - \hat{\y}^{(j)}(t,k)\|_2^2\sum_{i'=1}^n\sum_{i''=1}^n\bar{\mathbf{A}}_{ii'}\bar{\mathbf{A}}_{ii''}
\end{align*}
Therefore
\begin{align*}
    \left\|\y - \hat{\y}^{(j)}(t,k)\right\|_2^2 &= \sum_{i=1}^n\left(y_i - \hat{y}_i^{(j)}(t,k)\right)^2\\
    & = \eta^2\|\mathbf{\bar{A}}\|^2\|\y - \hat{\y}^{(j)}(t,k)\|_2^2\sum_{i=1}^n\sum_{i'=1}^n\sum_{i''=1}^n\bar{\mathbf{A}}_{ii'}\bar{\mathbf{A}}_{ii''}\\
    & = 4\eta^2\|\mathbf{\bar{A}}^2\|_{1,1}\|\y - \hat{\y}^{(j)}(t,k)\|_2^2
\end{align*}
Putting things together gives
\begin{align*}
    \left\|\y - \hat{\y}^{(j)}(t,k+1)\right\|_2^2 & \leq \eta\left(2\|\mathbf{G}^{(j)\perp}(t,t,k)\|_2 + \frac{8\eta\hat{s}}{d_1} + 4\eta n^2\|\mathbf{\bar{A}}^2\|_{1,1} - \lambda_0\right)\|\y - \hat{\y}^{(j)}(t,k)\|_2^2 + \\
    &\quad\quad\quad\left\|\y - \hat{\y}^{(j)}(t,k)\right\|_2^2\\
    & \leq \left(1 - \frac{\eta\lambda_0}{2}\right)\left\|\y - \hat{\y}^{(j)}(t,k)\right\|_2^2
\end{align*}
where the last step follows by plugging in the values of $\|\mathbf{G}^{(j)\perp}(t,t,k)\|$ and $\hat{s}$, then setting $R = \frac{\lambda_0}{96n}$ and $\eta\leq \frac{\lambda_0}{n^2\|\mathbf{\bar{A}}^2\|_{1,1}}$. Next, we bound the weight perturbation. First, using Markov's inequality, we have that with probability at least $1 - \frac{\delta}{2T}$
\begin{align*}
    \|\y - \hat{\y}(t)\|_2^2 \leq \frac{2T}{\delta}\E_{[\mathcal{M}_{t-1}]}\left[\|\y - \hat{\y}(t)\|_2^2\right]
\end{align*}
Thus, we have
\begin{align*}
    \left\|\boldsymbol{\theta}_{t,k,r}^{(j)} - \boldsymbol{\theta}_{t,0,r}^{(j)}\right\|_2& \leq \sum_{k'=0}^{k-1}\left\|\boldsymbol{\theta}_{t,k+1,r} - \boldsymbol{\theta}_{t,k,r}\right\|_2\\
    & \leq \eta\sum_{k'=0}^{k-1}\left\|\frac{\partial L\left(\boldsymbol{\Theta}_{t,k}^{(j)}\right)}{\partial \boldsymbol{\theta}_r}\right\|_2\\
    & \leq \eta\sqrt{\frac{n}{d_1}}\sum_{k'=0}^{k-1}\|\y - \hat{\y}^{(j)}(t,k)\|_2\\
    & \leq \eta\zeta\sqrt{\frac{n}{d_1}}\left(\|\y - \hat{\y}(t)\|_2 +\|\hat{\y}(t) - \hat{\y}^{(j)}(t,0)\|_2\right)\\
    & \leq \frac{2T\eta\zeta}{\delta}\sqrt{\frac{n}{d_1}}\E_{[\mathcal{M}_{t-1}]}\left[\|\y - \hat{\y}(t)\|_2^2\right]^{\frac{1}{2}} + \eta\zeta n\sqrt{\frac{8(m-1)T}{md_1\delta}}
\end{align*}
Since $\alpha < 1 < 2$, we have that
\begin{align*}
    \frac{2T\eta\zeta}{\delta}\sqrt{\frac{n}{d_1}} \leq \frac{4T\eta\zeta}{\delta\alpha}\sqrt{\frac{n}{d_1}}
\end{align*}
Also we have
\begin{align*}
    \eta\zeta n\sqrt{\frac{8(m-1)T}{md_1\delta}} \leq B
\end{align*}
Therefore, by hypothesis \ref{hypothesis1}, we have that
\begin{align*}
    \left\|\boldsymbol{\theta}_{t,k,r}^{(j)} - \boldsymbol{\theta}_{0,r}^{(j)}\right\|_2\leq \left\|\boldsymbol{\theta}_{t,r}^{(j)} - \boldsymbol{\theta}_{0,r}^{(j)}\right\|_2 + \left\|\boldsymbol{\theta}_{t,k,r}^{(j)} - \boldsymbol{\theta}_{t,0,r}^{(j)}\right\|_2\leq R
\end{align*}
\end{proof}

\begin{proof}[Proof of Lemma (\ref{error_term_bound})]
For convenience, we denote
\begin{align*}
    \sigma_i^{(j)}(t,r) = \sigma\left(\inner{\boldsymbol{\theta}_{t,r}}{\hat{\mathbf{x}}_i}\right);\quad \sigma_i^{(j)}(t+1,r) = \sigma\left(\inner{\boldsymbol{\theta}_{t,\zeta,r}^{(j)}}{\hat{\mathbf{x}}_i}\right)
\end{align*}
Using $1$-Lipschitzness of ReLU, we have
\begin{align*}
    \left|\sigma_i^{(j)}(t,r) - \sigma_i^{(j)}(t+1,r)\right| & \leq \left\|\boldsymbol{\theta}_{t,r} - \boldsymbol{\theta}_{t,\zeta,r}^{(j)}\right\|_2\\
    & \leq \eta\sum_{k=0}^{\zeta - 1}\left\|\frac{\partial L(\boldsymbol{\Theta}_{t,k}^{(j)})}{\partial\boldsymbol{\theta}_{r}}\right\|_2\\
    & \leq 2\sqrt{\frac{n}{d_1}}\sum_{k=0}^{\zeta-1}\|\y - \hat{\y}^{(j)}(t,k)\|_2
\end{align*}
Also, we have
\begin{align*}
    \left|\sigma_i^{(j)}(t,r) + \sigma_i^{(j)}(t+1,r)\right| & \leq \left\|\boldsymbol{\theta}_{t,r} + \boldsymbol{\theta}_{t,\zeta,r}^{(j)}\right\|_2\\
    & \leq 2\|\boldsymbol{\theta}_{0,r}\|_2 + \left\|\boldsymbol{\theta}_{t,r} - \boldsymbol{\theta}_{0,r}\right\|_2 + \left\|\boldsymbol{\theta}_{t,\zeta,r}^{(j)} - \boldsymbol{\theta}_{0,r}\right\|_2\\
    & \leq 2\|\boldsymbol{\theta}_{0,r}\|_2 + 2R
\end{align*}
Expanding the difference of squares gives
\begin{align*}
    \left(\hat{y}_i^{(j)}(t,0) - \hat{y}_i^{(j')}(t,0)\right)^2 - \left(\hat{y}_i^{(j)}(t,\zeta) - \hat{y}_i^{(j')}(t,\zeta)\right)^2 = \beta_{i,1}\beta_{i,2}
\end{align*}
with
\begin{align*}
    \beta_{i,1}^{(j,j')} & = \hat{y}_i^{(j)}(t,0) - \hat{y}_i^{(j')}(t,0) + \hat{y}_i^{(j)}(t,\zeta) - \hat{y}_i^{(j')}(t,\zeta)\\
    \beta_{i,2}^{(j,j')} & = \hat{y}_i^{(j)}(t,0) - \hat{y}_i^{(j')}(t,0) - \hat{y}_i^{(j)}(t,\zeta) + \hat{y}_i^{(j')}(t,\zeta)
\end{align*}
Written in terms of the simplified notation, we have
\begin{align*}
    \beta_{i,1}^{(j,j')} & = \frac{1}{\sqrt{d_1}}\sum_{i'=1}^n\sum_{r=1}^{d_1}\bar{\mathbf{A}}_{ii'}a_r\left(\mathcal{M}_{t,r}^{(j)}\left(\sigma_{i'}^{(j)}(t,r) + \sigma_{i'}^{(j)}(t+1,r)\right) - \mathcal{M}_{t,r}^{(j')}\left(\sigma_{i'}^{(j')}(t,r) + \sigma_{i'}^{(j')}(t+1,r)\right)\right)\\
    \beta_{i,2}^{(j,j')} & = \frac{1}{\sqrt{d_1}}\sum_{i'=1}^n\sum_{r=1}^{d_1}\bar{\mathbf{A}}_{ii'}a_r\left(\mathcal{M}_{t,r}^{(j)}\left(\sigma_{i'}^{(j)}(t,r) - \sigma_{i'}^{(j)}(t+1,r)\right) - \mathcal{M}_{t,r}^{(j')}\left(\sigma_{i'}^{(j')}(t,r) - \sigma_{i'}^{(j')}(t+1,r)\right)\right)
\end{align*}
Letting 
\begin{align*}
    \tau_1(i',r) & = \mathcal{M}_{t,r}^{(j)}\left(\sigma_{i'}^{(j)}(t,r) + \sigma_{i'}^{(j)}(t+1,r)\right) - \mathcal{M}_{t,r}^{(j')}\left(\sigma_{i'}^{(j')}(t,r) + \sigma_{i'}^{(j')}(t+1,r)\right)\\
    \tau_2(i',r) & = \mathcal{M}_{t,r}^{(j)}\left(\sigma_{i'}^{(j)}(t,r) - \sigma_{i'}^{(j)}(t+1,r)\right) - \mathcal{M}_{t,r}^{(j')}\left(\sigma_{i'}^{(j')}(t,r) - \sigma_{i'}^{(j')}(t+1,r)\right)
\end{align*}
Then we have
\begin{align*}
    \beta_{i,1}^{(j,j')}\beta_{i,2}^{(j,j')} & = \frac{1}{d_1}\sum_{i_1=1}^n\sum_{i_2=1}^n\sum_{r=1}^{d_1}\sum_{r'=1}^{d_1}\bar{\mathbf{A}}_{ii_1}\bar{\mathbf{A}}_{ii_2}a_ra_{r'}\tau_1(i_1,r)\tau_2(i_2,r')\\
    & = \frac{1}{d_1}\sum_{i_1=1}^n\sum_{i_2=1}^n\sum_{r=1}^{d_1}\bar{\mathbf{A}}_{ii_1}\bar{\mathbf{A}}_{ii_2}\tau_1(i_1,r)\tau_2(i_2,r) + \Delta_{i,t}^{(j,j')}
\end{align*}
with 
\begin{align*}
    \Delta_{i,t}^{(j,j')} = \frac{1}{d_1}\sum_{i_1=1}^n\sum_{i_2=1}^n\sum_{r=1}^{d_1}\sum_{r'\neq r}\bar{\mathbf{A}}_{ii_1}\bar{\mathbf{A}}_{ii_2}a_ra_{r'}\tau_1(i_1,r)\tau_2(i_2,r')
\end{align*}
Note that due to the independence of $a_r$ and $a_{r'}$, we have that $ \E_{\mathbf{a}}\left[\Delta_{i,t}^{(j,j')}\right] = 0$. Moreover, for $j\neq j'$, we have that $\mathcal{M}_{t,r}^{(j)}\mathcal{M})_{t,r}^{(j')} = 0$. Thus, we have
\begin{align*}
    \left|\tau(i_1,r)\tau_2(i_2,r')\right| &= \mathcal{M}_{t,r}^{(j)}\left|\sigma_{i_1}^{(j)}(t,r) + \sigma_{i_1}^{(j)}(t+1,r)\right|\cdot\left|\sigma_{i_2}^{(j)}(t,r) - \sigma_{i_2}^{(j)}(t+1,r)\right| + \\
    &\quad\quad\quad\mathcal{M}_{t,r}^{(j')}\left|\sigma_{i_1}^{(j')}(t,r) + \sigma_{i_1}^{(j')}(t+1,r)\right|\cdot\left|\sigma_{i_2}^{(j')}(t,r) - \sigma_{i_2}^{(j')}(t+1,r)\right|\\
    & \leq 4\eta\sqrt{\frac{n}{d_1}}\left(\|\boldsymbol{\theta}_{0,r}\|_2 + 2R\right)\left(\sum_{k=0}^{\zeta-1}\|\y - \hat{\y}^{(j)}(t,k)\|_2 + \sum_{k=0}^{\zeta-1}\|\y - \hat{\y}^{(j')}(t,k)\|_2\right)
\end{align*}
Thus
\begin{align*}
    \beta_{i,1}^{(j,j')}\beta_{i,2}^{(j,j')} & = \frac{4\eta\sqrt{n}}{d_1^{\frac{3}{2}}}\left(\sum_{i_1=1}^n\sum_{i_2=1}^n\bar{\mathbf{A}}_{ii_1}\bar{\mathbf{A}}_{ii_2}\right)\left(\sum_{r=1}^{d_1}\left(\|\boldsymbol{\theta}_{0,r}\|_2 + 2R\right)\right)\cdot\\
    &\quad\quad\quad\left(\sum_{k=0}^{\zeta-1}\|\y - \hat{\y}^{(j)}(t,k)\|_2 + \sum_{k=0}^{\zeta-1}\|\y - \hat{\y}^{(j')}(t,k)\|_2\right) + \Delta_{i,t}^{(j,j')}\\
    & \leq 4\eta\sqrt{\frac{2nd}{d_1}}\left(\sum_{i_1=1}^n\sum_{i_2=1}^n\bar{\mathbf{A}}_{ii_1}\bar{\mathbf{A}}_{ii_2}\right)\left(\sum_{k=0}^{\zeta-1}\|\y - \hat{\y}^{(j)}(t,k)\|_2 + \sum_{k=0}^{\zeta-1}\|\y - \hat{\y}^{(j')}(t,k)\|_2\right)+ \Delta_{i,t}^{(j,j')}
\end{align*}
Thus
\begin{align*}
    \iota_t & =\frac{1}{m^2}\sum_{j=1}^m\sum_{j'=1}^{j-1}\E_{\mathcal{M}_t}\left[\sum_{i=1}^n\beta_{i,1}^{(j,j')}\beta_{i,2}^{(j,j')}\right]\\
    &= \frac{4\eta(m-1)}{m^2}\sqrt{\frac{2nd}{d_1}}\|\bar{\mathbf{A}}^2\|_{1,1}\sum_{j=1}^m\sum_{k=0}^{\zeta-1}\E_{\mathcal{M}_t}\left[\|\y - \hat{\y}^{(j)}(t,k)\|_2\right] + \frac{1}{m^2}\sum_{j=1}^m\sum_{j'=1}^{j-1}\sum_{i=1}^n\E_{\mathcal{M}_t^{(j)}}\left[\Delta_{i,t}^{(j,j')}\right]\\
    & = \sum_{j=1}^n\sum_{k=0}^{\zeta-1}\E_{\mathcal{M}_t}\left[\alpha_{t,k}^{(j)}\right] + \frac{1}{m^2}\sum_{j=1}^m\sum_{j'=1}^{j-1}\sum_{i=1}^n\E_{\mathcal{M}_t^{(j)}}\left[\Delta_{i,t}^{(j,j')}\right]
\end{align*}
with
\begin{align*}
    \alpha_{t,k}^{(j)} = \frac{4\eta(m-1)}{m^2}\sqrt{\frac{2nd}{d_1}}\|\bar{\mathbf{A}}^2\|_{1,1}\|\y - \hat{\y}^{(j)}(t,k)\|_2
\end{align*}
By Cauchy-Schwartz inequality,
\begin{align*}
    \alpha_{t,k}^{(j)} & = \frac{\eta}{m}\cdot\left(\sqrt{\frac{\gamma\lambda_0}{2}}\|\y - \hat{\y}^{(j)}(t,k)\|_2\right)\cdot\left(\frac{8(m-1)}{m\sqrt{\gamma}}\|\bar{\mathbf{A}}\|_{1,1}\sqrt{\frac{nd}{d_1}}\right)\\
    & \leq \frac{\eta}{m}\left(\frac{\gamma\lambda_0}{2}\|\y - \hat{\y}^{(j)}(t,k)\|_2^2 +\frac{64(m-1)^2\|\bar{\mathbf{A}}\|_{1,1}^2nd}{\gamma m^2d_1}\right)\\
    & = \frac{\eta\gamma\lambda_0}{2m}\|\y - \hat{\y}^{(j)}(t,k)\|_2^2 + \frac{64\eta(m-1)^2\|\bar{\mathbf{A}}\|_{1,1}^2nd}{\gamma m^3d_1}
\end{align*}
Thus
\begin{align*}
    \iota_t &\leq \frac{\eta\gamma\lambda_0}{2m}\sum_{j=1}^m\sum_{k=0}^{\zeta-1}\E_{\mathcal{M}_t}\left[\|\y - \hat{\y}^{(j)}(t,k)\|_2^2\right] + \frac{64\eta(m-1)^2\zeta\|\bar{\mathbf{A}}\|_{1,1}^2nd}{\gamma m^2d_1} + \iota_t'
\end{align*}
with
\begin{align*}
    \E_{\boldsymbol{\Theta}_0,\mathbf{a}}\left[\iota_t'\right] = 0
\end{align*}
\end{proof}

\begin{proof}[Proof of Lemma \ref{init_error_bound}]
Note that
\begin{align*}
    \E_{\boldsymbol{\Theta}_0,\mathbf{a}}\left[\|\y - \hat{\y}(0)\|_2^2\right] & = \sum_{i=1}^n\E_{\boldsymbol{\Theta}_0,\mathbf{a}}\left[(y_i - \hat{y}_i(0))^2\right]\\
    & =y_i^2 -2y_i\E_{\boldsymbol{\Theta}_0,\mathbf{a}}\left[\hat{y}_i(0)\right] + \E_{\boldsymbol{\Theta}_0,\mathbf{a}}\left[\hat{y}_i(0)^2\right]\\
    & \leq C^2 + \E_{\boldsymbol{\Theta}_0,\mathbf{a}}\left[\hat{y}_i(0)^2\right]
\end{align*}
where the last inequality follows from the bound on $|y_i|$ and the fact that $\E_{\boldsymbol{\Theta}_0,\mathbf{a}}\left[\hat{y}_i(0)^2\right] = 0$.
Moreover, we have
\begin{align*}
    \E_{\boldsymbol{\Theta}_0,\mathbf{a}}\left[\hat{y}_i(0)^2\right] & = \frac{1}{m^2 d_1}\sum_{i_1=1}^n\sum_{i_2=1}^n\sum_{r_1=1}^{d_1}\sum_{r_2=1}^{d_1}\bar{\mathbf{A}}_{ii_1}\bar{\mathbf{A}}_{ii_2}\E_{\mathbf{a}}[a_{r_1}a_{r_2}]\E_{\boldsymbol{\Theta}_0}\left[\sigma\left(\inner{\boldsymbol{\theta}_{0,r_1}}{\hat{\mathbf{x}}_{i_1}}\right)\sigma\left(\inner{\boldsymbol{\theta}_{0,r_2}}{\hat{\mathbf{x}}_{i_2}}\right)\right]\\
    & = \frac{1}{m^2d_1}\sum_{i_1=1}^n\sum_{i_2=1}^n\sum_{r=1}^{d_1}\bar{\mathbf{A}}_{ii_1}\bar{\mathbf{A}}_{ii_2}\E_{\boldsymbol{\Theta}_0}\left[\sigma\left(\inner{\boldsymbol{\theta}_{0,r}}{\hat{\mathbf{x}}_{i_1}}\right)\sigma\left(\inner{\boldsymbol{\theta}_{0,r}}{\hat{\mathbf{x}}_{i_2}}\right)\right]\\
    & \leq \frac{1}{m^2d_1}\sum_{i_1=1}^n\sum_{i_2=1}^n\sum_{r=1}^{d_1}\bar{\mathbf{A}}_{ii_1}\bar{\mathbf{A}}_{ii_2}\E_{\boldsymbol{\Theta}_0}\left[\|\boldsymbol{\theta}_{0,r}\|_2^2\right]\\
    & = \frac{d}{m^2}\sum_{i_1=1}^n\sum_{i_2=1}^n\bar{\mathbf{A}}_{ii_1}\bar{\mathbf{A}}_{ii_2}
\end{align*}
Thus
\begin{align*}
     \E_{\boldsymbol{\Theta}_0,\mathbf{a}}\left[\|\y - \hat{\y}(0)\|_2^2\right] \leq \sum_{i=1}^n\E_{\boldsymbol{\Theta}_0,\mathbf{a}}\left[\hat{y}_i(0)^2\right]\leq  C^2n + \frac{d}{m^2}\|\bar{\mathbf{A}}^2\|_{1,1}
\end{align*}
\end{proof}
\end{document}